 \def\arXiv{1} 
\newcommand{\notarxiv}[1]{foo}
\newcommand{\arxiv}[1]{ba}
\renewcommand{\arxiv}[1]{#1}%
\renewcommand{\notarxiv}[1]{\ignorespaces}%
\renewcommand{\arxiv}[1]{\ignorespaces}%
\renewcommand{\notarxiv}[1]{#1}%
\crefname{assumption}{Assumption}{Assumptions}
 	\theoremstyle{plain}
 	\newtheorem{theorem}{Theorem}[section]
 	\newtheorem{proposition}[theorem]{Proposition}
 	\newtheorem{lemma}[theorem]{Lemma}
 	\newtheorem{corollary}[theorem]{Corollary}
 	\theoremstyle{definition}
 	\newtheorem{definition}[theorem]{Definition}
 	\newtheorem{assumption}[theorem]{Assumption}
 	\theoremstyle{remark}
 	\newtheorem{remark}[theorem]{Remark}
	\theoremstyle{plain}
	\newtheorem{theorem}{Theorem}
	\newtheorem{lemma}{Lemma}
	\newtheorem{assumption}{Assumption}
	\newtheorem{proposition}{Proposition}
	\newtheorem{corollary}{Corollary}
	\theoremstyle{definition}
	\newtheorem{definition}{Definition}
	\newtheorem*{example*}{Example}
\DeclarePairedDelimiter{\abs}{\lvert}{\rvert} %
\DeclarePairedDelimiter{\brk}{[}{]}
\DeclarePairedDelimiter{\crl}{\{}{\}}
\DeclarePairedDelimiter{\prn}{(}{)}
\DeclarePairedDelimiter{\norm}{\|}{\|}
\DeclarePairedDelimiter{\ceil}{\lceil}{\rceil}
\DeclarePairedDelimiter{\floor}{\lfloor}{\rfloor}
\newcommand{\overeq}[1]{\overset{#1}{=}}
\newcommand{\overle}[1]{\overset{#1}{\le}}
\NewDocumentCommand\Ex{s O{} m }{%
	\mathbb{E}%
	\begingroup
	\IfBooleanTF{#1}
	{\ExInn*{#3}}
	{\ExInn[#2]{#3}}%
	\endgroup
}
\DeclarePairedDelimiterX\ExInn[1]{[}{]}{%
	\activatebar
	#1%
}
\RenewDocumentCommand\Pr{sO{}r()}{%
	\mathbb{P}%
	\begingroup
	\IfBooleanTF{#1}
	{\PrInn*{#3}}
	{\PrInn[#2]{#3}}%
	\endgroup
}
\DeclarePairedDelimiterX\PrInn[1](){%
	\activatebar
	#1%
}
\newcommand{\activatebar}{%
	\begingroup\lccode`~=`|
	\lowercase{\endgroup\def~}{\;\delimsize\vert\;}%
	\mathcode`|=\string"8000
}
\newcommand\numberthis{\addtocounter{equation}{1}\tag{\theequation}}
\newcommand{\R}{\mathbb{R}} %
\newcommand{\N}{\mathbb{N}} %
\newcommand{\E}{\mathbb{E}} %
\renewcommand{\P}{\mathbb{P}}	%
\newcommand{\e}{e}
 \DeclareMathOperator*{\argmax}{arg\,max}
 \DeclareMathOperator*{\argmin}{arg\,min}
\renewcommand{\dim}{m}
\newcommand{\X}{\mathcal{X}}
\providecommand{\abs}{\mathop{\rm abs}}
\newcommand{\half}{\frac{1}{2}}
\newcommand{\defeq}{\coloneqq}
\newcommand{\grad}{\nabla}
\newcommand{\xset}{\mathcal{X}}
\newcommand{\inner}[2]{\left<#1,#2\right>}
\newcommand{\numSteps}{T}
\newcommand{\G}[1][\numSteps]{G_{#1}}
\newcommand{\rbar}[1][\numSteps]{\bar{r}_{#1}}
\newcommand{\Lbar}[1][\numSteps]{\bar{\ell}_{#1}}
\newcommand{\Lfunc}[0]{\ell}
\newcommand{\LL}[0]{L}
\newcommand{\LLstar}[0]{\LL_\star}
\newcommand{\dbar}[1][\numSteps]{\bar{d}_{#1}}
\newcommand{\xbar}[1][t]{\bar{x}_{#1}}
\renewcommand{\d}[1][0]{d_{#1}}
\renewcommand{\r}[1][0]{r_{#1}}
\newcommand{\TimeUniformLog}[1][t,\delta]{\theta_{#1}}
\newcommand{\hx}{\hat{x}}
\newcommand{\D}{3 \d[0]}
\newcommand{\etaTilde}{\tilde{\eta}}
\newcommand{\filt}{\mathcal{F}}
\NewDocumentCommand{\llft}{ O{\delta} O{t} }{\lambda_{#2}(#1)}
\newcommand\primitiveinput[1]
\newcommand{\opt}{_\star}
\newcommand{\xopt}{x\opt}
\newcommand{\fopt}{f\opt}
\def\IfEmptyTF#1{%
	\if\relax\detokenize{#1}\relax
	\expandafter\@firstoftwo
	\else
	\expandafter\@secondoftwo
	\fi}
\newcommand{\gradientOracle}[1]{\IfEmptyTF{#1}{\mathcal{G}}{\mathcal{G}(#1)}}
\newcommand{\reps}{r_\epsilon}
\newcommand{\Proj}[2]{\mathbf{\mathrm{Proj}}_{#1}(#2)}
\newcommand{\DoG}{\textsc{DoG}\xspace}
\newcommand{\TDoG}{\textsc{T-DoG}\xspace}
\newcommand{\LDoG}{\textsc{L-DoG}\xspace}
	\icmltitlerunning{\DoG is SGD's Best Friend}
\begin{document}
		
		\twocolumn[
		\icmltitle{\DoG is SGD's Best Friend: A Parameter-Free Dynamic Step Size Schedule}

		\begin{icmlauthorlist}
			\icmlauthor{Maor Ivgi}{ta}
			\icmlauthor{Oliver Hinder}{pitt}
			\icmlauthor{Yair Carmon}{ta}
		\end{icmlauthorlist}
		
		\icmlaffiliation{ta}{Tel Aviv University}
		\icmlaffiliation{pitt}{University of Pittsburgh}
		
		\icmlcorrespondingauthor{Maor Ivgi}{maor.ivgi@cs.tau.ac.il}

		\icmlkeywords{Machine Learning, ICML}
		
		\vskip 0.3in
		]

		\printAffiliationsAndNotice{} %
}

\arxiv{
	\title{\DoG is SGD's Best Friend:\\A Parameter-Free Dynamic Step Size Schedule}
	
	\author{
	Maor Ivgi\\	\href{mailto:maor.ivgi@cs.tau.ac.il}{\texttt{maor.ivgi@cs.tau.ac.il}}
	\and
 	Oliver Hinder \\
	\href{ohinder@pitt.edu}{\texttt{ohinder@pitt.edu}} 
	\and
	Yair Carmon\\
	\href{mailto:ycarmon@tauex.tau.ac.il}{\texttt{ycarmon@tauex.tau.ac.il}}
}
	
	\date{}
	
	\begin{document}
		
	\maketitle
}

\begin{abstract}
We propose a tuning-free dynamic SGD step size formula, which we call Distance over Gradients (DoG). The DoG step sizes  depend on simple empirical quantities (distance from the initial point and norms of gradients) and have no ``learning rate'' parameter. Theoretically, we show that a slight variation of the DoG formula enjoys strong parameter-free convergence guarantees for stochastic convex optimization assuming only \emph{locally bounded} stochastic gradients.\footnote{The ICML version of our paper uses the more conventional and restrictive assumption of globally bounded stochastic gradients.} Empirically, we consider a broad range of vision and language transfer learning tasks, and show that DoG's performance is close to that of SGD with tuned learning rate. We also propose a per-layer variant of DoG that generally outperforms tuned SGD, approaching the performance of tuned Adam.
A PyTorch implementation is available at \href{https://github.com/formll/dog}{https://github.com/formll/dog}.
\end{abstract}

\section{Introduction}

\begin{figure}[t]
	\begin{center}
		\centering
		\arxiv{
			\includegraphics[width=0.35\columnwidth]{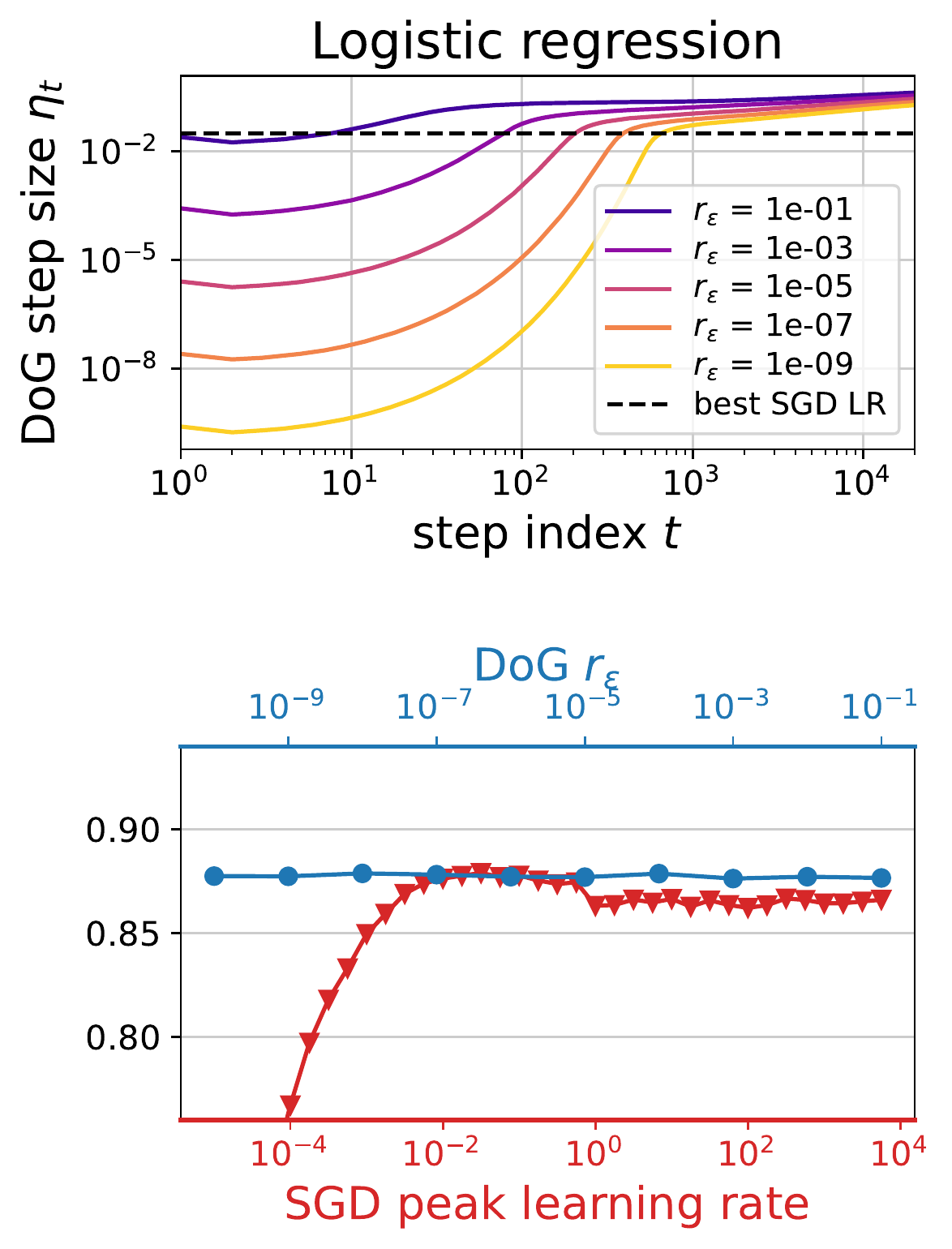}
			\includegraphics[width=0.35\columnwidth]{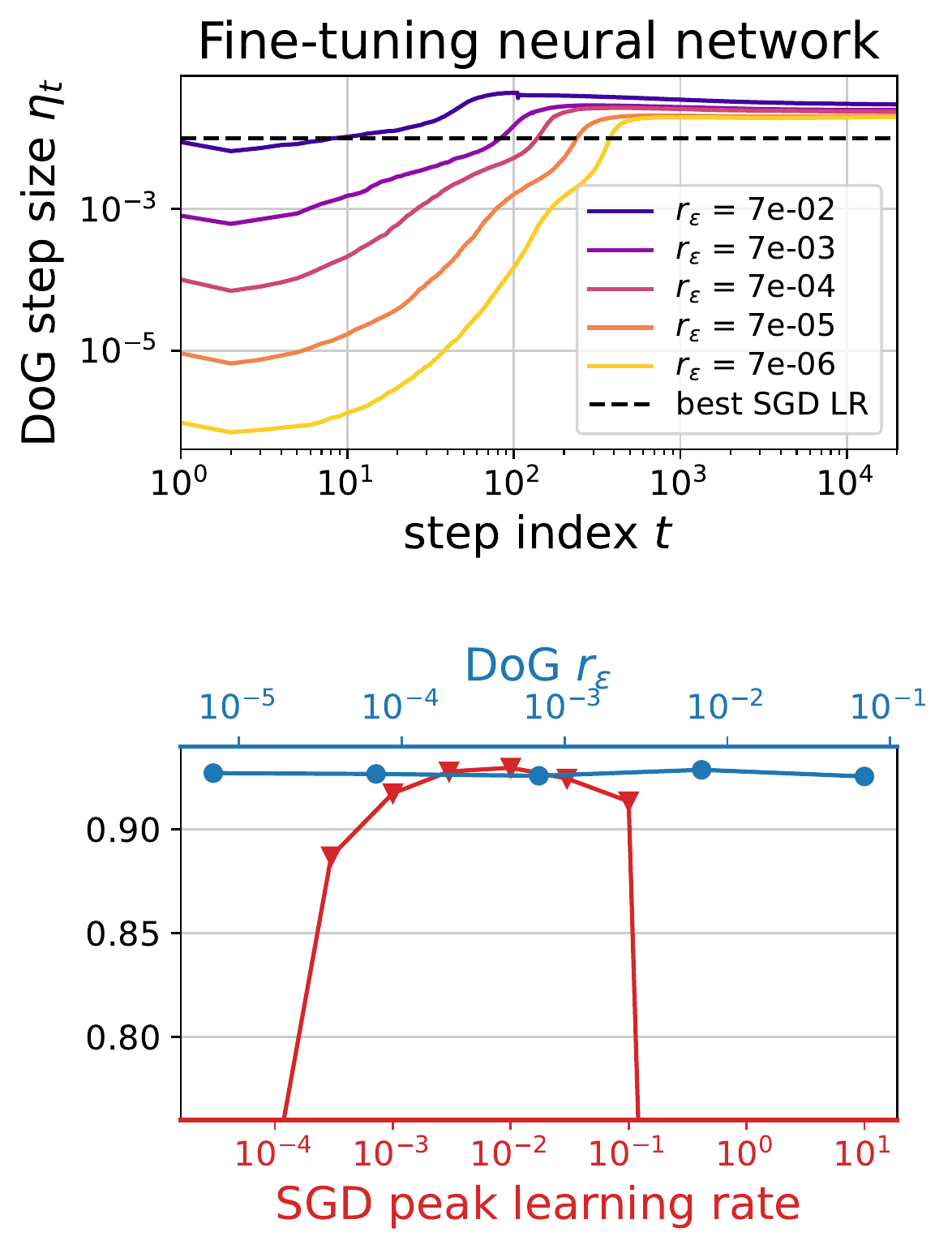}
		}
		\notarxiv{
			\includegraphics[width=0.49\columnwidth]{figures/fig1_cvx.pdf}
			\includegraphics[width=0.49\columnwidth]{figures/fig1_not_cvx.pdf}
			\vspace{-8pt}
		}
		\caption{Illustration of \DoG for CIFAR-100 classification using logistic regression on last-layer features of a 	pre-trained ViT-B/32 (left) or end-to-end fine-tuning of the model (right). The top row shows the \ref{eq:dog} step size sequence $\eta_t$ for different values of the initial movement $\reps$, and the bottom row shows that \DoG attains test error on par with carefully tuned SGD (with cosine annealing), even when varying $\reps$ by several orders of magnitude. See details in \Cref{app:experiments-figure-1}.}
		\label{fig:1}
	\end{center}
	\notarxiv{\vspace{-18pt}}
\end{figure}

While stochastic optimization methods drive continual improvements in machine learning, choosing the optimization parameters---and particularly the learning rate---remains a difficulty. Standard methodologies include searching over a set of learning rates, or simply picking the learning rate from prior work. The former incurs a substantial computational overhead, while the latter risks training a suboptimal model. 

The rich literature on adaptive gradient methods (AdaGrad, Adam, and their many variants) 
offers optimization algorithms that better exploit problem structure~\citep[e.g.,][]{duchi2011adaptive,kingma2015adam,gupta2018shampoo,shazeer2018adafactor,loshchilov2019decoupled}.
However, these methods still have a learning rate parameter that requires tuning. The theoretically-optimal value of this parameter depends on unknown problem properties. For example, on convex problems the optimal learning rate of AdaGrad is related to the distance between the initial point and the optimal solution,
while in non-convex settings it is related to the function's smoothness and initial optimality gap~\citep{gupta2017unified,ward2019adagrad,faw2022power}. 

\emph{Parameter-free} optimization aims to remove the need for such tuning by designing algorithms that achieve a near-optimal rate of convergence with almost no knowledge of the problem properties~\citep{streeter2012no}. Most works in this field \citep[e.g.,][]{luo2015achieving,orabona2016coin,cutkosky2018black,mhammedi2020lipschitz,bhaskara2020online, jacobsen2022parameter, zhang2022pde} 
use advanced online learning techniques to construct algorithms that, for the fundamental setting of  stochastic convex optimization (SCO) with bounded stochastic gradients, achieve optimal rates of convergence up to logarithmic factors. 
While practical parameter-free algorithms exist \citep[e.g.][]{orabona2014simultaneous,orabona2017training,kempka2019adaptive,chen2022better}, there is little research into practical parameter-free step size selection methods for SGD. 
Recently, \citet{carmon2022making} have shown that performing a careful bisection over the SGD step size yields a parameter-free optimization method that is optimal for SCO up to a double-logarithmic factor. While theoretically novel, on a practical level the result leaves much to be desired, as it essentially prescribes the standard recipe of running SGD multiple times with different learning rates.

\paragraph{Proposed algorithm.}
In this work, we use key insights from \citet{carmon2022making} to go a step further and develop a parameter-free step size schedule. For SGD iterations of the form $x_{t+1}=x_t - \eta_t g_t$, where $x_t$ denotes the model parameters at the $t$'th iteration and $g_t$ denotes the stochastic gradient of the loss function, our proposed steps size sequence is (for all $t\ge 1$)
\begin{flalign}\tag{\DoG}\label{eq:dog}
	\eta_t = \frac{\max_{i\le t}\norm{x_i - x_0}}{\sqrt{\sum_{i\le t}\norm{g_i}^2}}.
\end{flalign} 
In words, the step size at iteration $t$ is the maximum distance to between the initial point and observed iterates, divided by the sum of squared stochastic gradient norms, i.e., Distance over Gradients (\DoG). At the first step, we set $\eta_0$ to be $\reps/\norm{g_0}$,  i.e., we take a normalized gradient step of size $\reps$; we show that, as long as $\reps$ is small, its precise setting has only mild effect.

Crucially, \DoG has no multiplicative ``learning rate'' parameter: if one considers step sizes of the form  $\eta_t = c \cdot \frac{\max_{i\le t}\norm{x_i - x_0}}{\sqrt{\sum_{i\le t}\norm{g_i}^2}}$
then $c=1$ is a universally good setting (see \Cref{sec:derivation} for a heuristic justification and \Cref{subsec:DoG-sensitivity} for empirical evidence for this claim).
\Cref{fig:1} highlights key aspects of \DoG. The top row shows the \DoG step size  sequence for different values of $\reps$ in convex (left) and non-convex (right) stochastic optimization problems. The \DoG step size increases rapidly (note the logarithmic $x$ scale) and stabilizes around values close to the optimal SGD step size with little dependence on $\reps$. 
The bottom row of the figure compares the test errors of \DoG and SGD with various step sizes, showing that (for all choices of $\reps$) \DoG is on par with well-tuned SGD. 

\subsection{Summary of results}

\newcommand{\Bset}{\mathcal{B}}
\paragraph{Theoretical guarantees.}
In \Cref{sec:theory} we analyze \DoG  for stochastic convex optimization with bounded stochastic gradients and a (potentially unbounded) closed convex domain. To present our results, let $\Bset$ denote a ball around the initial point $x_0$ with radius $3\d[0]$, where $\d[0]$ is the distance between $x_0$ and an optimum.

First, we show that if the iterates of \DoG remain in $\Bset$, then with high probability  \DoG achieves a convergence rate that is optimal up to a factor of $O\prn{\log \prn{ 1+ \frac{\d[0]}{\reps}}}$. In practice, \DoG appears to indeed be stable as long as $\reps$ is sufficiently small. However,  \DoG is not always stable: on pathological functions its iterates can move far from the optimum.

To address this, we consider a theoretical, tamed variant of \DoG, which we call \TDoG, whose step sizes are smaller by a logarithmic factor. We prove that, with high probability, the \TDoG iterates never leave $\Bset$. Thus, 
we obtain a high probability parameter-free convergence guarantee that is optimal up logarithmic factors.

To our knowledge, this is the first dynamic SGD step size schedule to attain such theoretical guarantee, and only the third high probability parameter-free guarantee in the literature \citep[following][]{carmon2022making,zhang2022parameter}.
Moreover, it is the first parameter-free result assuming only locally bounded stochastic gradients (i.e., in the set $\Bset$). This is significant since the usually-assumed global stochastic gradient bound does not exist in many problems, including least squares.

\paragraph{Empirical study.}
Our experiments in \Cref{sec:experiments} focus on fine-tuning neural networks, because this is a practically important setting that still allows for thorough experiments at a reasonable computational budget. 
We also perform a small-scale experiment with training a neural network from scratch.
Our experiments span 23 natural language understanding and image classification tasks and 8 popular model architectures.

Our results indicate that, compared to \DoG, SGD with a cosine step size schedule and tuned base learning 
rarely attains a relative error improvement of more than 5\% (e.g., the difference between accuracy 95\% and 95.25\%). 
For convex problems (linear probes), the relative difference in errors is below 1\%.
In our testbed, well-tuned Adam tends to outperform both SGD and \DoG, but a layer-wise version of \DoG (which we call \LDoG) closes some of this performance gap. 

We also test the sensitivity of \DoG to the value of $\reps$. We find that for most model/task combinations, \DoG performs consistently well across a wide range of $\reps$ values as our theory predicts. However, in certain cases, choosing $\reps$ to be too low results in poor performance. We provide some preliminary findings showing that this is due in part to batch normalization. %

\arxiv{\vspace{\baselineskip}}

Put together, our theory and experiments suggest \DoG has the potential to save significant computation currently spent on learning rate tuning at little or no cost in performance---especially if we reinvest some of the saved computation in training a larger model on more data.
\section{Algorithm Derivation}\label{sec:derivation}

Before providing rigorous theoretical guarantees for \DoG, in this section we explain the origin of the algorithm. 
Our starting point is the following result by~\citet{carmon2022making}. 
Suppose we run $T$ iterations of SGD with fixed step size $\eta$, i.e., the recursion $x_{t+1}=x_t - \eta g_t$, where $x_t$ is the SGD iterate and $g_t$ is the stochastic gradient at step $t$. If, for some $c\in (0,1)$, it happens to hold that
\begin{equation}\label{eq:making-sgd-parameter-free}
	\eta = c  \cdot 
	\frac{\max_{k\le T} \norm{x_k-x_0}}{\sqrt{\sum_{k\le T}\norm{g_k}^2}},
\end{equation}
then the averaged iterates satisfies an excess loss bound that is at most a factor $\frac{1}{c(1-c^2)}$ larger than the worst-case optimal bound achieved by perfectly tuned SGD.\footnote{This results holds in the non-stochastic case~\citep[Proposition 1]{carmon2022making}, but a qualitatively similar results holds with high probability in the stochastic case as well \citep[Proposition 3]{carmon2022making}.}

The condition~\eqref{eq:making-sgd-parameter-free} is an implicit equation: it allows us to check whether the choice of step size $\eta$ is good only after running $T$ steps of SGD using that $\eta$. Solving this implicit equation therefore requires multiple calls to SGD. We derive the  \DoG step size sequence by making the equation explicit: we choose $\eta_t$ so that equation~\eqref{eq:making-sgd-parameter-free} holds at each step. For $c=1$, this yields the step size formula~\eqref{eq:dog}. Our reason for choosing $c=1$ is that it is the threshold under which a solution to the implicit equation yields an optimal rate of convergence. 
 Therefore, in practice we expect $1$ to be close to the highest stable value of $c$, and thus obtain the best performance; we verify this empirically in \Cref{subsec:DoG-sensitivity}.
\section{Theoretical Analysis}\label{sec:theory}

\subsection{Preliminaries}

\paragraph{Problem setting.}
Our goal is to minimize a loss function $f : \X \rightarrow \R$ where $\X \subseteq \R^{\dim}$ (including the unconstrained setting $\X=\R^\dim$ as an important special case). We perform our analysis under the following standard convexity assumption.

\begin{assumption}[Convexity]\label{ass:convex}
	The function $f$ is convex, its domain $\X$ is closed and convex, and its minimum is attained at some $\xopt\in\xset$, i.e., $\fopt \defeq \inf_{x\in\X}f(x)=f(\xopt)$.
\end{assumption}

In \Cref{app:beyond-convexity} we discuss a possible relaxation of convexity under which our results continue to hold.

To minimize $f$ we assume access to a \emph{stochastic gradient oracle}  $\gradientOracle{}$. When queried at a point $x\in\X$ the oracle returns a stochastic (sub)gradient estimator $\gradientOracle{x}$ satisfying $\Ex{\gradientOracle{x} | x } \in \partial f(x)$. With slight abuse of notation, we write $\grad f(x) \defeq \E[\gradientOracle{x} \mid x ]$. We make the following assumption, where $\norm{\cdot}$ denotes the Euclidean norm.

\begin{assumption}[Pointwise bounded stochastic gradients]\label{ass:bounded}
	There exists some continuous function $\Lfunc : \mathcal{X} \rightarrow \R$ such that
	$\| \gradientOracle{x} \| \le \Lfunc(x)$ almost surely.
\end{assumption}

Assumption~\ref{ass:bounded}, which appeared in prior work~\citep{cutkosky2019artificial,mhammedi2020lipschitz}, is weaker than conventional assumptions in  parameter-free stochastic optimization, which either uniformly bound the stochastic gradients, i.e., $\| \gradientOracle{x} \| \le \LL$ for all $x \in \X$ \citep[see, e.g.,][]{orabona2017training,cutkosky2018black}, or uniformly bound the gradient variance \citep{jun2019parameter}.
However, even least squares problems (with $\gradientOracle{x} = (\inner{a}{x} -b) a$ for random $a\in \R^\dim$ and $b\in\R$) violate both uniform bounds.  In contrast, $\Lfunc$ is finite under the mild assumption that $a,b$ are bounded random variables; see \Cref{app:least-squares-example} for additional discussion.

\paragraph{Algorithm statement.} We study (projected) SGD with dynamic learning rate schedule $\{\eta_t\}$, i.e.,
\[
x_{t+1} = \Proj{\X}{x_{t} - \eta_t g_t }
\]
where $x_0$ is a given initialization, $g_k \defeq \gradientOracle{x_k}$, and $\Proj{\X}{\cdot}$ is the Euclidean projection onto $\X$. To succinctly state and analyze \DoG, we define the following quantities:
\[
r_t \defeq \| x_t - x_0  \| ~\mbox{,}~
\rbar[t] =  \max_{k\le t} r_k \vee \reps %
~\mbox{and}~
\G[t] \defeq \sum_{k=0}^t \| g_t \|^2,
\]
where $a\vee b \defeq \max\{a,b\}$ and $\reps$ is a small user-specified initial movement size parameter. With this notation, we define a family of \DoG-like learning rate schedules.

\begin{definition}\label{property:DoG-style-step-sizes}
	A step size schedule is \emph{\DoG-like} if
	\begin{equation*}
		\eta_t = \frac{\rbar[t]}{\sqrt{\G[t]'}}
	\end{equation*}
	for a positive nondecreasing sequence $\G[t]'$ that depends only on $x_0, g_0, \ldots, g_t$ and satisfies $\G[t]' \ge \G[t]$.
\end{definition}
$\DoG$ corresponds to simply setting $\G[t]' = \G[t]$; in \Cref{sec:iterate-stability-bound} we consider a theoretical (or tamed) $\DoG$-like algorithm for which we guarantee bounded iterates by making $\G[t]'$ larger than $\G[t]$ by polylogarithmic factors. Throughout, we bound the error of the weighted average sequence
\begin{equation}\label{eq:aveged-iterate}
	\xbar \defeq \frac{1}{\sum_{k=0}^{t-1} \rbar[k]} \sum_{k=0}^{t-1} \rbar[k] x_k.
\end{equation}
Finally, to streamline the analysis we define:
\[
\d[t] \defeq \| x_t - \xopt \| ~,~ \dbar[t] \defeq \max_{k \le t} \d[k] ~\mbox{,}~\Lbar[t] \defeq \max_{k\le t}\Lfunc(x_k),
\]
and
\begin{equation*}
	\TimeUniformLog \defeq \log\left( \frac{60 \log(6 t)}{\delta} \right).
\end{equation*}

\paragraph{Logarithm conventions.}
Throughout the paper $\log$ is base $\e$ and $\log_{+}\prn*{\cdot} \defeq 1 + \log(\cdot)$.

\subsection{Optimality gap bounds assuming bounded iterates}\label{sec:error-bound-assuming-bounded-iterates}

In this section, we bound the optimality gap attained by any \DoG-like algorithm. Our bounds depend on the quantities $\rbar[T]$ and $\G[T]$, and are nearly optimal when $\rbar[T]=O(d_0)$ (i.e., the \DoG iterates don't move too far away from $x_0$) and $\G'$ is not much larger than $\G$. In the next section we describe a specific \DoG-like algorithm that is guaranteed to satisfy both requirements.

Convexity and Jensen's inequality imply that $\xbar$ satisfies
\begin{flalign}
f(\xbar[t]) - \fopt 
\le \frac{1}{\sum_{k=0}^{t-1} \rbar[k]}  \sum_{k=0}^{t-1} \rbar[k] \inner{\grad f(x_k)}{x_k - \xopt}.
\label{eq:convexity-to-true-regret}
\end{flalign}
The sum in the RHS decomposes to two components:
\begin{flalign}\label{eq:error-bound-strategy}
	\underbrace{\sum_{k=0}^{t-1} \rbar[k] \inner{ g_k }{ x_k - \xopt }}_{\text{weighted regret}} - \underbrace{\sum_{k=0}^{t-1} \rbar[k] \inner{\Delta_k}{x_k - \xopt}}_{\text{noise}},
\end{flalign} 
where $\Delta_k \defeq  g_k - \grad f(x_k)$. 
We give probability 1 bounds for the weighted regret (Lemma~\ref{lem:bound-empirical-error}) and high probability bounds for the noise term  (Lemma~\ref{lem:bound-generalization-error}). In each case, the key challenge is replacing a-priori bounds on $d_0$ (or the domain size) with the empirically observed $\rbar[T]$. We present and discuss each lemma in turn.

\begin{lemma}[Weighted regret bound]\label{lem:bound-empirical-error}
	If $\xset$ is a closed convex set then any \DoG-like scheme (\Cref{property:DoG-style-step-sizes}) satisfies %
	$\sum_{k=0}^{t-1} \rbar[k] \inner{g_k}{x_k - \xopt } \le  \rbar[t] (2 \dbar[t] + \rbar[t]) \sqrt{\G[t-1]'}$, $\forall t\ge1$.
\end{lemma}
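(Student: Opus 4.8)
The plan is to bound the weighted regret pathwise (a probability-$1$ statement, so no expectations enter), using the standard ``descent'' inequality per step and then summing, with one non-standard twist needed to keep the final bound \emph{linear} in $\dbar[t]$. Since $\xopt,x_0\in\xset$ and the Euclidean projection is nonexpansive, I first note $\d[k+1]^2\le\norm{x_k-\eta_k g_k-\xopt}^2=\d[k]^2-2\eta_k\inner{g_k}{x_k-\xopt}+\eta_k^2\norm{g_k}^2$, which rearranges to $\inner{g_k}{x_k-\xopt}\le\tfrac{1}{2\eta_k}(\d[k]^2-\d[k+1]^2)+\tfrac{\eta_k}{2}\norm{g_k}^2$. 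Multiplying by $\rbar[k]$ and substituting the \DoG-like step $\eta_k=\rbar[k]/\sqrt{\G[k]'}$, so that $\rbar[k]/\eta_k=\sqrt{\G[k]'}$, gives the per-step estimate $\rbar[k]\inner{g_k}{x_k-\xopt}\le\tfrac12\sqrt{\G[k]'}(\d[k]^2-\d[k+1]^2)+\tfrac12\rbar[k]^2\norm{g_k}^2/\sqrt{\G[k]'}$. Summing over $k=0,\dots,t-1$ splits the task into a ``gradient-norm'' sum and a ``distance'' sum.

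The gradient-norm sum I would dispatch using $\rbar[k]\le\rbar[t]$ together with the standard AdaGrad-type inequality $\sum_{k=0}^{t-1}\norm{g_k}^2/\sqrt{\G[k]}\le 2\sqrt{\G[t-1]}$ (which follows from $\norm{g_k}^2/\sqrt{\G[k]}\le 2(\sqrt{\G[k]}-\sqrt{\G[k-1]})$). Because $\G[k]'\ge\G[k]$ and $\G[\cdot]'$ is nondecreasing, this produces a contribution of order $\rbar[t]^2\sqrt{\G[t-1]'}$, matching the $\rbar[t]^2$ term of the claim.

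The crux is the distance sum $\tfrac12\sum_k\sqrt{\G[k]'}(\d[k]^2-\d[k+1]^2)$. The blunt move --- bounding $\d[k]^2\le\dbar[t]^2$ and summing by parts --- yields a term \emph{quadratic} in $\dbar[t]$, which is useless for the self-bounding iterate-stability argument in \Cref{sec:iterate-stability-bound}. Instead I would re-center at $x_0$: from $x_k-\xopt=(x_k-x_0)+(x_0-\xopt)$ one gets $\d[k]^2-\d[k+1]^2=(r_k^2-r_{k+1}^2)+2\inner{x_k-x_{k+1}}{x_0-\xopt}$, splitting the distance sum into a \emph{movement} part $\tfrac12\sum_k\sqrt{\G[k]'}(r_k^2-r_{k+1}^2)$ and an \emph{offset} part $\sum_k\sqrt{\G[k]'}\inner{x_k-x_{k+1}}{x_0-\xopt}$. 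The movement part is handled by summation by parts using $r_0=0$ and $r_k\le\rbar[t]$, giving a further $O(\rbar[t]^2)\sqrt{\G[t-1]'}$ term. For the offset part I would apply Abel summation to the scalar sequence $q_k\defeq\inner{x_k-x_0}{x_0-\xopt}$ (so $q_0=0$ and $\inner{x_k-x_{k+1}}{x_0-\xopt}=q_k-q_{k+1}$); combined with Cauchy--Schwarz $\abs{q_k}\le r_k\,\d[0]\le\rbar[t]\dbar[t]$ (using $\d[0]\le\dbar[t]$), the telescope collapses to the crucial \emph{cross} term $2\rbar[t]\dbar[t]\sqrt{\G[t-1]'}$, which is only \emph{linear} in $\dbar[t]$.

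Collecting the three contributions, the movement and gradient-norm pieces combine into the $\rbar[t]^2\sqrt{\G[t-1]'}$ term and the offset piece supplies $2\rbar[t]\dbar[t]\sqrt{\G[t-1]'}$, yielding $\rbar[t](2\dbar[t]+\rbar[t])\sqrt{\G[t-1]'}$. I expect the main obstacle to be precisely this linearity requirement: the entire purpose of re-centering at $x_0$ and running Abel summation on the offset (rather than the crude $\d[k]\le\dbar[t]$ bound) is to confine the comparator-distance dependence to the \emph{fixed} direction $x_0-\xopt$, so that it telescopes into $\rbar[t]\dbar[t]$ rather than $\dbar[t]^2$. The remaining work is routine bookkeeping of constants and, throughout, replacing any a priori bound on $\d[0]$ or the domain size by the empirically observed running maximum $\rbar[t]$.
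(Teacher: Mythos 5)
Your proposal is correct and shares the paper's overall skeleton (the projected-SGD descent inequality, weighting by $\rbar[k]$, Abel summation, and the AdaGrad-type bound for the gradient-norm sum), but it executes the crucial step---making the comparator dependence \emph{linear} in $\dbar[t]$---by a genuinely different device. The paper keeps the final term $-\d[t]^2\sqrt{\G[t-1]'}$ in the telescoped distance sum, so that the sum collapses to $\sqrt{\G[t-1]'}\,(\dbar[t]^2-\d[t]^2)$, and then uses the factorization $\dbar[t]^2-\d[t]^2=(\d[s]-\d[t])(\d[s]+\d[t])\le\norm{x_s-x_t}(\d[s]+\d[t])\le 4\rbar[t]\dbar[t]$ for $s\in\argmax_{k\le t}\d[k]$. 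You instead re-center at $x_0$ via $\d[k]^2=\r[k]^2+2q_k+\d[0]^2$ with $q_k\defeq\inner{x_k-x_0}{x_0-\xopt}$, and telescope the movement and offset parts separately, using Cauchy--Schwarz $\abs{q_k}\le \r[k]\,\d[0]$ to confine the comparator dependence to the fixed direction $x_0-\xopt$. Both mechanisms are valid and in fact algebraically close: since $\tfrac12\r[k]^2+q_k=\tfrac12(\d[k]^2-\d[0]^2)$, summing your two parts reconstitutes the paper's telescoped quantity shifted by the constant $\d[0]^2$, which the Abel summation annihilates. Your cross term is even slightly sharper, $2\rbar[t]\d[0]\sqrt{\G[t-1]'}$ rather than $2\rbar[t]\dbar[t]\sqrt{\G[t-1]'}$.

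One bookkeeping point: your claim that the movement and gradient-norm pieces ``combine into the $\rbar[t]^2\sqrt{\G[t-1]'}$ term'' is off by a constant. The movement telescope contributes up to $\tfrac12\rbar[t]^2\sqrt{\G[t-1]'}$ (its retained negative term $-\tfrac12 \r[t]^2\sqrt{\G[t-1]'}$ cannot cancel this in general, since $\rbar[t]$ may be attained mid-trajectory or equal $\reps$), while the gradient sum already contributes $\rbar[t]^2\sqrt{\G[t-1]}$ after the factor $\tfrac12\cdot 2$ from \Cref{lem:adagrad-algebra}; so your route yields $\rbar[t]\bigl(2\d[0]+\tfrac32\rbar[t]\bigr)\sqrt{\G[t-1]'}$ rather than exactly $\rbar[t](2\dbar[t]+\rbar[t])\sqrt{\G[t-1]'}$. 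Because $\rbar[t]$ can exceed $\dbar[t]$ (e.g.\ when $\reps$ is large, as the lemma places no restriction on $\reps$), the extra $\tfrac12\rbar[t]^2$ cannot be absorbed into the cross term. This is immaterial downstream---\Cref{prop:error-bound} and everything built on it invoke the lemma only inside $O(\cdot)$---but if you want the stated constants, treat the two telescopes jointly (as the identity above shows, this is exactly the paper's argument) rather than bounding them separately.
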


\notarxiv{
The proof of \Cref{lem:bound-empirical-error} appears in \Cref{sec:lem:bound-empirical-error-proof}.
While it is similar to the analysis of adaptive SGD (where $\eta_t = \frac{\rho}{\sqrt{G_t}}$  \citep{gupta2017unified}), there are a couple of key differences. First, the \DoG step sizes can increase, which typically makes adaptive gradient methods difficult to analyze \citep{reddi2018convergence}. We bypass this difficulty by considering regret  weighted by $\rbar[k]$, which factors out the increasing portion of the step size. Second, the standard  adaptive SGD analysis yields a bound proportional to $\dbar[t]^2$ (typically further bounded using the domain diameter) rather than $\rbar[t]\dbar[t]$ as in our bound. This is a crucial difference, since---as we soon argue---$\rbar[t]$ ``cancels'' when dividing through by $\sum_{k<t}\rbar[k]$, while $\dbar[t]$ does not. We obtain the improved result by keeping around the  
 last term in a telescoping sum,
a trick similar to \citet[Lemma 1]{carmon2022making}.
}
\arxiv{
\begin{proof}
	Using $x_{k+1} = \Proj{\X}{x_{k} - \eta_k g_k }$ we obtain the standard inequality
	$\d[k+1]^2 \le \| x_k - \eta_k g_k - \xopt \|^2 = \d[k]^2 - 2 \eta_k \inner{g_k}{x_k - \xopt} + \eta_k^2 \| g_k \|^2$.
	Rearranging this gives:
	\begin{flalign}\label{eq:classic-subgradient-inequality}
		\inner{ g_k }{ x_k - \xopt } \le \frac{\d[k]^2 - \d[k+1]^2}{2 \eta_k} + \frac{\eta_k \| g_k \|^2}{2}.
	\end{flalign}
	Therefore, $\sum_{k=0}^{t-1} \rbar[k] \inner{g_k}{x_k - \xopt }$ is at most
	\[
	\half \underbrace{\sum_{k=0}^{t-1} \frac{\rbar[k]}{\eta_k} (d_k^2 - d_{k+1}^2) }_{(A)}
	+
	\half \underbrace{\sum_{k=0}^{t-1} \rbar[k] \eta_k \norm{g_k}^2}_{(B)}.
	\]
	We bound the terms $(A)$ and $(B)$ in turn, beginning with the former:
	\begin{flalign*}
		(A)&= \sum_{k=0}^{t-1} \sqrt{\G[k]'} (d_{k}^2 - d_{k+1}^2)
		 =  \d[0]^2 \sqrt{\G[0]'}  - \d[t]^2  \sqrt{\G[t-1]'} + \sum_{k=1}^{t-1} \d[k]^2 \left(\sqrt{G_k'} - \sqrt{G_{k-1}'}\right) \\
		&\overle{(i)} \dbar[t]^2 \sqrt{\G[0]'} - \d[t]^2 \sqrt{\G[t-1]'}  + \dbar[t]^2 \sum_{k=1}^{t-1} \left(\sqrt{G_k'} - \sqrt{G_{k-1}'}\right)  
		= \sqrt{G_{t-1}'}\left( \dbar[t]^2 - \d[t]^2 \right) \overle{(ii)} 4\rbar[t] \dbar[t] \sqrt{G_{t-1}'}.
	\end{flalign*}
	Inequality $(i)$ uses $\d[k] \le \dbar[t]$ and that $\G[k]'$ is nondecreasing as per \Cref{property:DoG-style-step-sizes}. Inequality $(ii)$ holds since, for $s \in \argmax_{k \le t} \d[k]$, we have
	$\dbar[t]^2 - \d[t]^2 = \d[s]^2 - \d[t]^2 = (\d[s] - \d[t]) (\d[s] + \d[t]) \le \| x_s - x_t \| (d_s + d_t)
	\le (\rbar[s] + \rbar[t]) (\d[s] + \d[t]) \le 4 \rbar[t] \dbar[t]$.
	Bounding the second term $(B)$, we have:
	\begin{flalign*}
		(B) &= \sum_{k=0}^{t-1}\frac{\rbar[k]^2 \| g_k \|^2}{\sqrt{G_k'}} \le \sum_{k=0}^{t-1}\frac{\rbar[k]^2 \| g_k \|^2}{\sqrt{G_k}}
		 \le \rbar[t]^2 \sum_{k=0}^{t-1}\frac{\| g_k \|^2}{\sqrt{G_k}} \le 2\rbar[t]^2 \sqrt{G_{t-1}},
	\end{flalign*}
	where the final inequality uses the standard Lemma~\ref{lem:adagrad-algebra} with $a_k = G_k=\sum_{i\le k}\norm{g_i}^2$.
\end{proof} 
While the proof of \Cref{lem:bound-empirical-error} is similar to the analysis of adaptive SGD where $\eta_t = \frac{\rho}{\sqrt{G_t}}$  \citep{gupta2017unified}, there are a couple of key differences. First, the \DoG step sizes can increase, which typically makes adaptive gradient methods difficult to analyze \citep{reddi2018convergence}. We bypass this difficulty by considering regret  weighted by $\rbar[k]$, which factors out the increasing portion of the step size. Second, the standard  adaptive SGD analysis yields a bound proportional to $\dbar[t]^2$ (typically further bounded using the domain diameter) rather than $\rbar[t]\dbar[t]$ as in our bound. This is a crucial difference, since---as we soon argue---$\rbar[t]$ ``cancels'' when dividing through by $\sum_{k<t}\rbar[k]$, while $\dbar[t]$ does not. We obtain the improved result by keeping around the
term $-\d[t]^2 \sqrt{\G[t-1]'}$ in the bound for $(A)$ above; 
a trick similar to \citet[Lemma 1]{carmon2022making}.
}

Next, we handle the noise term in~\eqref{eq:error-bound-strategy}, recalling the notation $\Delta_t \defeq g_t - \grad f(x_t)$ and  $ \TimeUniformLog[t, \delta]\defeq \log \frac{60\log(6t)}{\delta}$.
\begin{lemma}[Noise bound]\label{lem:bound-generalization-error}
	Under \Cref{ass:bounded},	for all $\delta \in (0,1)$, $\numSteps \in \N$ and $\LL > 0$ we have
	\notarxiv{
	\[
	\P\prn*{
		\exists t \le \numSteps: \abs[\Bigg]{\sum_{k=0}^{t-1} \rbar[k] \inner{\Delta_k}{x_k - \xopt}}
		\ge
		b_t \hspace{-0.05cm}  } \hspace{-0.1cm} \le \delta + \P\prn*{ \Lbar[\numSteps] > \LL }
	\]
	where $b_t = 8 \rbar[t-1] \dbar[t-1] \sqrt{ \TimeUniformLog[t, \delta] \G[t-1] +  \TimeUniformLog[t, \delta]^2 \LL^2}$.
}
	\arxiv{
		\begin{equation*}
				\P\prn*{
				\exists t \le \numSteps: \abs[\Bigg]{\sum_{k=0}^{t-1} \rbar[k] \inner{\Delta_k}{x_k - \xopt}}
				\ge
			8 \rbar[t-1] \dbar[t-1] \sqrt{ \TimeUniformLog[t, \delta] \G[t-1] +  \TimeUniformLog[t, \delta]^2 \LL^2  }} \le \delta + \P\prn*{ \Lbar[\numSteps] > \LL }.
		\end{equation*}
	}
\end{lemma}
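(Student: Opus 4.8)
The plan is to recognize the noise sum as a martingale and control it with a time-uniform, self-normalized concentration inequality. Let $\filt_k \defeq \sigma(x_0, g_0, \ldots, g_{k-1})$, so that $x_k$, $\rbar[k]$ and $\d[k]$ are $\filt_k$-measurable, and set $v_k \defeq x_k - \xopt$ and $Y_k \defeq \rbar[k]\inner{\gdiff[k]}{v_k}$. Since $\Ex{\gdiff[k] | \filt_k}=\grad f(x_k)-\grad f(x_k)=0$ and $\rbar[k], v_k$ are $\filt_k$-measurable, $\{Y_k\}$ is a martingale difference sequence and $M_t \defeq \sum_{k=0}^{t-1} Y_k$ is a martingale; the quantity to bound is $\max_{t\le\numSteps}\abs{M_t}$.

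First I would record the two predictable estimates that feed the concentration bound. Using $\norm{\gdiff[k]}\le\norm{g_k}+\norm{\grad f(x_k)}\le 2\Lfunc(x_k)$ (where $\norm{\grad f(x_k)}=\norm{\Ex{g_k|\filt_k}}\le\Ex{\norm{g_k}|\filt_k}\le\Lfunc(x_k)$) and Cauchy--Schwarz, the increments satisfy $\abs{Y_k}\le \rbar[k]\d[k]\norm{\gdiff[k]}\le 2\rbar[k]\d[k]\Lfunc(x_k)$, while the conditional second moments satisfy $\Ex{Y_k^2|\filt_k}\le \rbar[k]^2\d[k]^2\Ex{\norm{\gdiff[k]}^2|\filt_k}\le \rbar[k]^2\d[k]^2\Ex{\norm{g_k}^2|\filt_k}$, the last step using $\Ex{\norm{\gdiff[k]}^2|\filt_k}=\Ex{\norm{g_k}^2|\filt_k}-\norm{\grad f(x_k)}^2$. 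Because $\rbar[\cdot]$ and $\dbar[\cdot]$ are nondecreasing, for every $k\le t-1$ we have $\rbar[k]\d[k]\le\rbar[t-1]\dbar[t-1]$, so $M_t$ behaves like a scalar martingale normalized by $\rbar[t-1]\dbar[t-1]$ with per-step variance budget proportional to $\norm{g_k}^2$ and increment bound proportional to $\Lfunc(x_k)$.

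Next I would pass to the good event $\{\Lbar[\numSteps]\le\LL\}$ --- off of which we simply pay the additive $\Pr(\Lbar[\numSteps]>\LL)$ term in the statement --- on which $\Lfunc(x_k)\le\LL$ deterministically for all $k\le\numSteps$, so the increments are bounded by $2\rbar[t-1]\dbar[t-1]\LL$. On this event I would apply a time-uniform self-normalized Bernstein/Freedman inequality: one builds a nonnegative supermartingale from $\exp(\lambda M_t)$ minus its predictable compensator, mixes over $\lambda$ (method of mixtures), or stitches over geometric epochs of the observed quadratic variation, and invokes Ville's inequality. This yields, uniformly over $t\le\numSteps$ and with probability at least $1-\delta$, a bound of the form $\abs{M_t}\lesssim \rbar[t-1]\dbar[t-1]\sqrt{\TimeUniformLog[t,\delta]\,\G[t-1]+\TimeUniformLog[t,\delta]^2\LL^2}$, where the variance term collapses to the \emph{observed} $\G[t-1]=\sum_{k\le t-1}\norm{g_k}^2$, the squared-increment term produces $\TimeUniformLog[t,\delta]^2\LL^2$, and the double logarithm hidden in $\TimeUniformLog[t,\delta]=\log\tfrac{60\log(6t)}{\delta}$ is exactly the price of time-uniformity. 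Tracking the universal constants through the supermartingale argument should produce the stated factor $8$.

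The main obstacle is precisely that every normalizing quantity --- $\rbar[t-1]$, $\dbar[t-1]$, $\G[t-1]$, and the increment bound $\Lfunc(x_k)$ --- is random and statistically coupled to the martingale increments, so a naive Freedman bound with a priori deterministic constants is unavailable (the domain may be unbounded and the gradients only locally bounded). The self-normalization in the variance (replacing the predictable compensator $\sum_k\Ex{\norm{g_k}^2|\filt_k}$ by the observed $\G[t-1]$) and the uniform pull-out of $\rbar[t-1]\dbar[t-1]$ across all $k\le t-1$ are what make the argument nontrivial; I expect the bulk of the work to be the construction and control of the mixture supermartingale (or, equivalently, the stitching argument) that simultaneously delivers a time-uniform guarantee and an empirically normalized right-hand side, together with the careful conditioning on $\{\Lbar[\numSteps]\le\LL\}$ that makes the increment bound deterministic.
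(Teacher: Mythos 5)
Your plan points in the same direction as the paper's proof (a time-uniform empirical-Bernstein bound in the style of Howard et al., plus a truncation paying the additive $\P\prn*{\Lbar[\numSteps] > \LL}$), but it has a concrete gap at precisely the step you defer to "the bulk of the work": the random, time-varying weights. You assert that since $\rbar[k]\d[k] \le \rbar[t-1]\dbar[t-1]$, the weighted sum $\sum_{k<t} \rbar[k]\inner{\Delta_k}{x_k-\xopt}$ "behaves like a scalar martingale normalized by $\rbar[t-1]\dbar[t-1]$." But even on your good event $\{\Lbar[\numSteps]\le\LL\}$ the increments are only bounded by $2\rbar[k]\d[k]\Lfunc(x_k)$, which remains random and a priori unbounded: this lemma is proved \emph{before} any iterate-stability result, must hold for schemes whose iterates may wander arbitrarily far, and the only additive probability the statement permits, $\P\prn*{\Lbar[\numSteps]>\LL}$, caps $\Lfunc$ alone, not $\rbar[k]\d[k]$. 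So no deterministic cap $c$ is available for the Freedman/mixture construction you invoke, and the increment term $\rbar[t-1]\dbar[t-1]\LL\,\TimeUniformLog[t,\delta]$ on the right-hand side would not emerge from it. A second, smaller issue: "passing to the good event" is not by itself legitimate, since conditioning on the future-measurable event $\{\Lbar[\numSteps]\le\LL\}$ destroys the martingale property; one needs a predictable truncation (divide by $\max\{c, C_k\}$ with $C_k\in\filt_{k-1}$) and then an intersection-of-events argument, which is exactly what the paper's Corollary 4 does.

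The paper closes the gap you leave open with a purely deterministic device, \Cref{lem:many-sequences-nondecreasing} (Abel summation): for nonnegative nondecreasing $a_i$, $\abs{\sum_{i\le t} a_i b_i} \le 2 a_t \max_{i\le t}\abs{\sum_{j\le i} b_j}$. Concretely, it first \emph{normalizes the increments}, setting $X_k = \inner{\Delta_k}{(x_k-\xopt)/\dbar[k]}$ with predictable hint $\hat X_k = -\inner{\grad f(x_k)}{(x_k-\xopt)/\dbar[k]}$, so that $\abs{X_k}$ is bounded by a quantity depending only on $\Lfunc(x_k)$ (hence truncatable by $\LL$), while $(X_k-\hat X_k)^2 = \inner{g_k}{(x_k-\xopt)/\dbar[k]}^2 \le \norm{g_k}^2$ delivers the \emph{observed} $\G[t-1]$ directly through the hint mechanism rather than through a separate self-normalization step. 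It then applies the off-the-shelf time-uniform bound (Corollary 1 of Carmon--Hinder, constant $4$) to the running maximum of the unweighted sums, and reattaches the nondecreasing weights $y_k=\rbar[k]\dbar[k]$ via the Abel lemma, whence the factor $8 = 2\cdot 4$ you were hoping to recover by constant-tracking. Your sketch correctly identifies all the difficulties, but the two essential mechanisms---the predictable truncation and the weight pull-out---are left as placeholders, and without them (or an equivalent device) the direct supermartingale route does not yield the stated bound.
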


The proof of \Cref{lem:bound-generalization-error} appears in Appendix~\ref{app:cor:product-mg-concentration} and is based on a new concentration bound, \Cref{cor:product-mg-concentration}, which allows us to 
bound the noise term despite having no deterministic bound on the magnitude of the martingale difference sequence $\rbar[k] \inner{\Delta_k}{x_k - \xopt}$. 
 The proof of \Cref{cor:product-mg-concentration} involves combining
time-uniform Bernstein bounds \cite{howard2021time} and a general bound on the cumulative sums of sequence products (\Cref{lem:many-sequences-nondecreasing}), which may be of independent interest.

Combining the above results, we obtain the following.
\begin{proposition}\label{prop:error-bound}
For all $\delta \in (0,1)$ and $\LL > 0$,
if \Cref{ass:convex}, \Cref{ass:bounded}, and \Cref{property:DoG-style-step-sizes} hold then with probability at least $1 - \delta - \P\prn*{ \Lbar[\numSteps] > \LL }$, for all $t \le T$ the optimality gap $f(\xbar[t]) - \fopt$ is 
	\begin{flalign*}
		 O\prn*{\frac{ (\d[0] + \rbar[t]) \sqrt{\G[t-1]' + \G[t-1] \TimeUniformLog[t,\delta] + \LL^2 \TimeUniformLog[t,\delta]^2}}{\sum_{i < t} \rbar[i] / \rbar[t]}}.
	\end{flalign*}
\end{proposition}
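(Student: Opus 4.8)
The plan is to combine the two lemmas with the convexity bound \eqref{eq:convexity-to-true-regret} and its decomposition \eqref{eq:error-bound-strategy}, and then translate the resulting bound—which is naturally stated in terms of the distance-to-optimum quantity $\dbar[t]$—into the empirically observable quantities $\d[0]$ and $\rbar[t]$ that appear in the proposition. Concretely, I would fix $\delta$ and $\LL$ and work on the event $\mathcal{E}$ on which the noise bound of \Cref{lem:bound-generalization-error} holds for all $t \le \numSteps$ simultaneously; that lemma gives $\P(\mathcal{E}) \ge 1 - \delta - \P(\Lbar[\numSteps] > \LL)$. Since the weighted regret bound of \Cref{lem:bound-empirical-error} holds with probability one, on $\mathcal{E}$ both bounds are available for every $t \le \numSteps$ at once. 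Plugging them into \eqref{eq:convexity-to-true-regret}--\eqref{eq:error-bound-strategy} via the triangle inequality yields, for all $t \le \numSteps$,
\begin{flalign*}
f(\xbar[t]) - \fopt \le \frac{\rbar[t]\,(2\dbar[t] + \rbar[t])\sqrt{\G[t-1]'} + 8\,\rbar[t-1]\dbar[t-1]\sqrt{\TimeUniformLog \G[t-1] + \TimeUniformLog^2 \LL^2}}{\sum_{k=0}^{t-1}\rbar[k]}.
\end{flalign*}

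The one genuinely substantive step is eliminating $\dbar$ in favor of $\d[0]$ and $\rbar$, which is what makes the final bound data-dependent. By the triangle inequality $\d[k] = \norm{x_k - \xopt} \le \norm{x_k - x_0} + \d[0] = r_k + \d[0] \le \rbar[k] + \d[0]$, so $\dbar[t] \le \rbar[t] + \d[0]$, and likewise $\dbar[t-1] \le \rbar[t-1] + \d[0] \le \rbar[t] + \d[0]$ using that $\rbar$ is nondecreasing (the same monotonicity also gives $\rbar[t-1] \le \rbar[t]$). Substituting these into the numerator, every term is at most a constant multiple of $\rbar[t]\,(\rbar[t] + \d[0])$ times a gradient factor, so the numerator is $O\!\big(\rbar[t](\rbar[t] + \d[0])[\sqrt{\G[t-1]'} + \sqrt{\G[t-1]\TimeUniformLog + \LL^2 \TimeUniformLog^2}]\big)$.

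To finish, I would fold the two square roots together via $\sqrt{a} + \sqrt{b} \le \sqrt{2(a+b)}$ to obtain the single factor $\sqrt{\G[t-1]' + \G[t-1]\TimeUniformLog + \LL^2 \TimeUniformLog^2}$, absorb constants into $O(\cdot)$, and rewrite $\tfrac{\rbar[t]}{\sum_{k<t}\rbar[k]} = \tfrac{1}{\sum_{i<t}\rbar[i]/\rbar[t]}$ to match the stated denominator. There is no real obstacle here: essentially all the content of the proposition is already carried by the two lemmas, and this argument is bookkeeping. The only point needing care is the probability accounting—ensuring that the single event $\mathcal{E}$ supports the inequality for every $t \le \numSteps$ simultaneously, which is exactly how \Cref{lem:bound-generalization-error} is phrased through its time-uniform $\exists\, t \le \numSteps$ formulation, so that no further union bound over $t$ is required and the total failure probability stays $\delta + \P(\Lbar[\numSteps] > \LL)$.
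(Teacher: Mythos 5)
Your proposal is correct and matches the paper's own proof, which is exactly the same one-line combination of \eqref{eq:convexity-to-true-regret}, \eqref{eq:error-bound-strategy}, \Cref{lem:bound-empirical-error}, \Cref{lem:bound-generalization-error}, and the bound $\dbar[t] \le \d[0] + \rbar[t]$; your version merely spells out the bookkeeping (the time-uniform event, monotonicity of $\rbar$, and merging the square roots) that the paper leaves implicit.
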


\begin{proof}
Follows from \Cref{eq:convexity-to-true-regret,eq:error-bound-strategy}, \Cref{lem:bound-empirical-error}, \Cref{lem:bound-generalization-error} and the fact that $\dbar[t] \le \d[0] + \rbar[t]$.
\end{proof}

The following algebraic fact shows that there is always an iteration $\tau\le T$ where the denominator $\sum_{i<t} \frac{\rbar[i]}{\rbar[t]} \ge\Omega(T/\log\frac{\rbar[T]}{\reps})$; see \Cref{app:bound-a-ratios} for proof.

\begin{lemma}\label{lem:bound-a-ratios}
	Let $s_0,s_1,\ldots,s_{\numSteps}$ be a positive nondecreasing sequence. Then
\begin{equation*}
		\max_{t\le \numSteps} \sum_{i < t} \frac{s_i}{s_t} \ge \frac{1}{e}\prn*{ \frac{\numSteps}{\log_{+}(s_{\numSteps} / s_0)} -1}.
\end{equation*}
\end{lemma}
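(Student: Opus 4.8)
The plan is to exploit the tension between the two sides of the inequality: the sum $\sum_{i<t} s_i/s_t$ is large exactly when many earlier terms $s_i$ are comparable in size to $s_t$, which happens precisely when the sequence grows slowly; and slow growth is what keeps the denominator $\log_{+}(s_\numSteps/s_0)$ small. I would make this quantitative via a pigeonhole argument over geometric scales, matched to the base-$e$ logarithm and the factor $1/e$ appearing in the statement.

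Concretely, set $L \defeq \log_{+}(s_\numSteps/s_0)$ and partition the index set $\{0,1,\dots,\numSteps\}$ into the blocks $I_j \defeq \{t : j \le \log(s_t/s_0) < j+1\}$ indexed by integers $j \ge 0$. Because the sequence is positive and nondecreasing, each $I_j$ is a contiguous set of indices, and the largest value attained by $\log(s_t/s_0)$ is $\log(s_\numSteps/s_0)$; hence every nonempty block lies among $I_0,\dots,I_{\lfloor \log(s_\numSteps/s_0)\rfloor}$, so there are at most $\lfloor \log(s_\numSteps/s_0)\rfloor + 1 \le \log(s_\numSteps/s_0) + 1 = L$ of them.

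Next I would apply pigeonhole: the $\numSteps+1$ indices are distributed among at most $L$ nonempty blocks, so some block $I_j$ contains $n \ge (\numSteps+1)/L$ indices. Let $t$ be the largest index in that block. For every $i \in I_j$ the values $\log(s_i/s_0)$ and $\log(s_t/s_0)$ lie in a common interval of length less than $1$, so $s_i/s_t > 1/e$. Since all summands are positive, discarding every index outside $I_j$ only decreases the sum, which gives $\sum_{i<t} s_i/s_t \ge \sum_{i \in I_j,\, i<t} s_i/s_t \ge \frac{1}{e}(n-1) \ge \frac{1}{e}\prn*{\frac{\numSteps+1}{L}-1} \ge \frac{1}{e}\prn*{\frac{\numSteps}{L}-1}$; taking the maximum over $t \le \numSteps$ then yields the claim.

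I do not expect a serious obstacle here, since the argument is elementary once the right scale is chosen. The only points requiring care are bookkeeping ones: verifying that the count of nonempty blocks is bounded by $L = \log_{+}(s_\numSteps/s_0)$ rather than $L+1$ (which is exactly why the floor and the additive $1$ built into $\log_{+}$ must be tracked together), and noting that restricting the sum to the single block $I_j$ is legitimate precisely because every term $s_i/s_t$ is nonnegative. The geometric scale $e$ and the constant $1/e$ are matched by design so that a within-block ratio is bounded below by $1/e$.
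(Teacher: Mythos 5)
Your proof is correct, and it reaches the stated bound by a decomposition dual to the paper's. The paper partitions the \emph{index} range into $K = \lceil \log(s_T/s_0)\rceil$ blocks of equal length $n = \lfloor T/K \rfloor$ and uses a telescoping average of $\log\bigl(s_{n(k+1)}/s_{nk}\bigr)$ to find one index block over which the sequence grows by a factor at most $e$; the witnessing $t$ is then the right endpoint of that block, and monotonicity within the block gives $\sum_{i<t} s_i/s_t \ge n\, s_{t-n}/s_t \ge n/e$. You instead partition the \emph{value} range into scales of width $e$ (your blocks $I_j$) and pigeonhole the $T+1$ indices into at most $\log_{+}(s_T/s_0)$ nonempty scales, taking $t$ to be the last index of a densest scale. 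These are the two dual forms of the same pigeonhole (equal-length index blocks, one of which has small growth, versus equal-growth value blocks, one of which has many indices), matched to the same geometric scale $e$ and yielding the same constant. What your route buys is cleaner bookkeeping: you avoid the paper's floor/ceiling chain $\lfloor T/\lceil \log(s_T/s_0)\rceil\rfloor \ge T/(\log(s_T/s_0)+1) - 1$ in favor of the direct count $(T+1)/L - 1 \ge T/L - 1$, and in fact you prove the marginally stronger bound with $T+1$ in place of $T$; your argument also sidesteps the paper's implicit corner case $K=0$ when $s_T=s_0$. Your delicate steps all check out: the nonempty blocks correspond to $\lfloor \log(s_t/s_0)\rfloor \in \{0,\dots,\lfloor \log(s_T/s_0)\rfloor\}$, so their number is at most $\log(s_T/s_0)+1 = \log_{+}(s_T/s_0)$ as claimed, the strict within-block bound $s_i/s_t > 1/e$ follows since two points of $[j,j+1)$ differ by less than $1$, and dropping all indices outside $I_j$ is justified by nonnegativity of the summands (contiguity of $I_j$, which you mention, is never actually needed).
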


Combining \Cref{prop:error-bound} and \Cref{lem:bound-a-ratios} yields the following (see short proof in \Cref{app:coro-error-bound-proof}).

\begin{corollary}\label{coro:error-bound}
Under \Cref{ass:convex,ass:bounded}, for any $D\ge\d[0]$, let $\LL_{D}\defeq \max_{x\in\xset: \norm{x-x_0} \le D} \Lfunc(x)$. Then, for all $\delta\in(0,1)$ and for 
$\tau \in \argmax_{t\le \numSteps} \sum_{i < \tau} \frac{\rbar[i]}{\rbar[t]}$, 
with probability at least  $1-\delta-\P(\rbar[T] > D)$, 
the \ref{eq:dog} iterates satisfy the optimality gap bound 
	\begin{equation*}
		f(\xbar[\tau])-\fopt = 
		O\prn*{ \frac{D\sqrt{ \G[\tau-1] \TimeUniformLog[\tau,\delta] + \LL_D^2 \TimeUniformLog[\tau,\delta]^2}}{T}
		\log_{+}\prn*{ \frac{D}{\reps} }}
		= 
		O\prn*{ \frac{D\LL_D}{\sqrt{T}}
		\TimeUniformLog[\tau,\delta]\log_{+}\prn*{ \frac{D}{\reps} }
		}
		.
	\end{equation*}
\end{corollary}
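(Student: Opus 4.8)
The plan is to instantiate \Cref{prop:error-bound} for the \DoG schedule (so that $\G[t]'=\G[t]$) with the constant choice $\LL=\LL_D$, to restrict attention to the event $\{\rbar[T]\le D\}$, and to evaluate the resulting bound at the data-dependent index $\tau$ that maximizes $\sum_{i<t}\rbar[i]/\rbar[t]$. Because \Cref{prop:error-bound} holds simultaneously for every $t\le\numSteps$ on its good event, substituting the random index $\tau$ is legitimate, and the remaining work splits into three pieces: controlling the failure probability, simplifying the numerator to $O(D\sqrt{\cdots})$, and lower bounding the denominator $\sum_{i<\tau}\rbar[i]/\rbar[\tau]$ via \Cref{lem:bound-a-ratios}.

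The first and most delicate piece is the probability accounting. On $\{\rbar[T]\le D\}$ every iterate obeys $\norm{x_k-x_0}=r_k\le\rbar[T]\le D$, so \Cref{ass:bounded} and the definition of $\LL_D$ force $\Lbar[\numSteps]=\max_{k\le\numSteps}\Lfunc(x_k)\le\LL_D$; equivalently $\{\Lbar[\numSteps]>\LL_D\}\subseteq\{\rbar[T]>D\}$. The failure event behind \Cref{prop:error-bound} (inherited from \Cref{lem:bound-generalization-error}) is contained in the union of a probability-$\delta$ concentration-failure event and $\{\Lbar[\numSteps]>\LL_D\}$. Intersecting with $\{\rbar[T]\le D\}$ and using the set inclusion above, the total bad event is contained in the concentration-failure event together with $\{\rbar[T]>D\}$, which has probability at most $\delta+\P(\rbar[T]>D)$. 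This absorption of $\{\Lbar[\numSteps]>\LL_D\}$ into $\{\rbar[T]>D\}$ is exactly what prevents a naive union bound from introducing a spurious extra $\P(\rbar[T]>D)$ term, and it recovers the claimed probability $1-\delta-\P(\rbar[T]>D)$.

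On this good event I simplify the proposition's expression. In the numerator, $\d[0]+\rbar[t]\le D+\rbar[T]\le 2D=O(D)$, and since $\TimeUniformLog[t,\delta]\ge 1$ the radical with $\G[t]'=\G[t]$ reduces to $O\bigl(\sqrt{\G[t-1]\TimeUniformLog[t,\delta]+\LL_D^2\TimeUniformLog[t,\delta]^2}\bigr)$. For the denominator I invoke \Cref{lem:bound-a-ratios} with the positive nondecreasing sequence $s_k=\rbar[k]$, noting $s_0=\rbar[0]=\reps$ and $s_{\numSteps}=\rbar[T]\le D$; at the maximizer $\tau$ this yields
\[
\sum_{i<\tau}\frac{\rbar[i]}{\rbar[\tau]}\ge\frac{1}{e}\prn*{\frac{\numSteps}{\log_{+}(\rbar[T]/\reps)}-1}=\Omega\prn*{\frac{\numSteps}{\log_{+}(D/\reps)}}
\]
in the nontrivial regime $\numSteps\gtrsim\log_{+}(D/\reps)$. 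Substituting these into \Cref{prop:error-bound} at $t=\tau$ gives the first displayed bound.

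To obtain the second (cleaner) bound I further estimate $\G[\tau-1]=\sum_{k=0}^{\tau-1}\norm{g_k}^2\le\numSteps\LL_D^2$, valid because on the good event $\norm{g_k}\le\Lfunc(x_k)\le\LL_D$ for each $k$. Then $\G[\tau-1]\TimeUniformLog[\tau,\delta]+\LL_D^2\TimeUniformLog[\tau,\delta]^2\le\LL_D^2\TimeUniformLog[\tau,\delta](\numSteps+\TimeUniformLog[\tau,\delta])=O(\LL_D^2\numSteps\TimeUniformLog[\tau,\delta])$ once $\numSteps\ge\TimeUniformLog[\tau,\delta]$, so the square root is $O(\LL_D\sqrt{\numSteps\TimeUniformLog[\tau,\delta]})$; dividing by $\numSteps$ and bounding $\sqrt{\TimeUniformLog[\tau,\delta]}\le\TimeUniformLog[\tau,\delta]$ produces the stated $O(D\LL_D\numSteps^{-1/2}\TimeUniformLog[\tau,\delta]\log_{+}(D/\reps))$. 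I expect the main obstacle to be the probability bookkeeping of the second paragraph---correctly absorbing $\{\Lbar[\numSteps]>\LL_D\}$ into $\{\rbar[T]>D\}$ at the level of sets rather than merely summing probabilities---together with the care needed to evaluate the uniform-in-$t$ proposition at the random index $\tau$.
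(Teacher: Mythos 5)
Your main-regime argument is exactly the paper's: instantiate \Cref{prop:error-bound} with $\G[t]'=\G[t]$ and $\LL=\LL_D$, absorb $\{\Lbar[\numSteps]>\LL_D\}$ into $\{\rbar[T]>D\}$ (the paper compresses this into the remark ``the event $\rbar[T]\le D$ implies $\Lbar[T]\le \LL_D$''; your set-level treatment of the absorption is correct and actually more careful), and apply \Cref{lem:bound-a-ratios} with $s_t=\rbar[t]$. However, there is a genuine gap: you establish the bound only ``in the nontrivial regime $\numSteps\gtrsim\log_+(D/\reps)$'' and never return to the complementary case, while the corollary is asserted for every $T$. When $T\le 2\log_+(\rbar[T]/\reps)$ the right-hand side of \Cref{lem:bound-a-ratios} can be nonpositive, so it gives nothing; and this is not a vacuous worry, because the true denominator $\max_{t\le T}\sum_{i<t}\rbar[i]/\rbar[t]$ can genuinely be as small as $\rbar[0]/\rbar[1]\approx\reps/D$, in which case \Cref{prop:error-bound} alone yields a bound inflated by a factor on the order of $D/\reps$ rather than $\log_+(D/\reps)$. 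So your route, as written, fails in that regime rather than degenerating gracefully.

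The paper closes this corner case with a separate direct argument, which you would need to add: by \eqref{eq:convexity-to-true-regret} and Cauchy--Schwarz, $f(\xbar[\tau])-\fopt \le O(\LL_D\,\dbar[\tau]) \le O\prn*{\LL_D(\rbar[\tau]+\d[0])} = O(\LL_D D)$ on the event $\{\rbar[T]\le D\}$ (using $\d[0]\le D$), and this is dominated by the claimed bounds precisely because in this regime $\log_+(D/\reps)\ge\log_+(\rbar[T]/\reps)\ge T/2$, hence $\log_+(D/\reps)/\sqrt{T}\ge\sqrt{T}/2\ge 1/2$, together with $\TimeUniformLog[\tau,\delta]\ge 1$. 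With that case added, your proof is complete and coincides with the paper's. A minor aside: your side condition $\numSteps\ge\TimeUniformLog[\tau,\delta]$ for the second display is unnecessary, since $\TimeUniformLog[\tau,\delta]\ge 1$ gives $\sqrt{\TimeUniformLog[\tau,\delta]/T}\le\TimeUniformLog[\tau,\delta]/\sqrt{T}$ and $\TimeUniformLog[\tau,\delta]/T\le\TimeUniformLog[\tau,\delta]/\sqrt{T}$ unconditionally.
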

\noindent
\Cref{coro:error-bound} is immediately useful when $\X$ is bounded but its exact diameter is unknown, for example when $\X$ is a polytope as is common in two-stage stochastic programming \cite{nemirovski2009robust}.

\paragraph{Simplifying the bound for typical \DoG trajectories.} 
Suppose that the \DoG iterates satisfy $\rbar[T] \le 3d_0$, which implies that $\Lbar[T] \le \LLstar \defeq \LL_{3\d[0]}$ and therefore (for \DoG) $\G[t]' = \G[t] \le \LLstar^2 T$. Substituting into \Cref{coro:error-bound} yields an optimality gap bound of $O\prn*{ \frac{d_0 \LLstar}{\sqrt{T}} \theta_{T,\delta} \log \frac{\rbar[T]}{\reps}}$, which is minimax optimal up to a term double-logarithmic in $T$ and logarithmic in $\frac{1}{\reps}$  \citep{agarwal2012information}.

Furthermore, in realistic \DoG trajectories, even the multiplicative term $\log \frac{\rbar[T]}{\reps}$ is likely too pessimistic. This is because $\rbar[t]$ typically increases rapidly for $t_0 < 1000$ steps and then plateaus (see \Cref{fig:rbar} in the appendix). Consequently, $\rbar[i] / \rbar[t] \ge 1/10$ for most of the optimization trajectory, and  $\sum_{i<t} \frac{\rbar[i]}{\rbar[t]} \ge t/10 - t_0$. Substituting back into \Cref{prop:distance-bound}, we get that $\xbar[T]$ is $O\prn*{ \frac{d_0 \LLstar}{\sqrt{T - t_0}} \theta_{T,\delta} }$ suboptimal.

\paragraph{\DoG can run wild.}
While \DoG is empirically stable, there exist (non-stochastic) examples where $\rbar[t]$ grows much larger than $\d[0]$: in Appendix~\ref{app:unstable-distance-DoG} we describe a variant of Nemirovski's function~\cite{nemirovski1983problem,nemirovski1994parallel}
for which $\rbar[t] = \reps\sqrt{t}$ and therefore $\rbar[t]/\d[0]$ diverges as $t$ grows. 
Next, we show that by slightly decreasing the \DoG step sizes we can guarantee that $\rbar[T] / \d[0] \le 3$ with high probability.

\subsection{Iterate stability bound}\label{sec:iterate-stability-bound}

This section introduces a new \emph{\DoG-like} step size scheme whose iterates are guaranteed to remain bounded with high probability.
We call this scheme $\TDoG$, where the T stands for ``theoretical'' or ``tamed.'' The step sizes are given by $\eta_t={\rbar[t]}/{\sqrt{\G[t]'}}$, where
\begin{equation}\tag{\TDoG}\label{eq:T-DoG}
	\G[t]' = 8^4 \TimeUniformLog[\numSteps,\delta]^2 \log_{+}^2\prn*{1+\frac{t \Lbar[t]^2}{\Lbar[0]^2}}(\G[t-1] + 16 \Lbar[t]^2),
\end{equation}
using $\G[-1]\defeq0$, and recalling that $\Lbar[t] \defeq \max_{i \le t} \Lfunc(x)$ for a function $\Lfunc$ satisfying \Cref{ass:bounded}. 
The \TDoG formula depends weakly on the iteration budget $T$ and the failure probability $\delta$ via $\theta_{t,\delta}\defeq \log\prn*{\frac{\log(6t)}{\delta}}$; as we show below, in the non-stochastic setting we may simply replace $\TimeUniformLog$ with 1. Moreover, the term $16\Lbar[t]$ typically grows slowly with $t$, becoming negligible compared to $\G[t-1]$. Notably, the \ref{eq:T-DoG} step size requires no global upper bound on stochastic gradient norms.

We are ready to state \TDoG's key property: guaranteed iterate stability.

\begin{proposition}\label{prop:distance-bound}
Suppose that \Cref{ass:convex,ass:bounded} hold and $\reps \le 3 \d[0]$. 
For any $\delta \in (0,1)$, and $\numSteps \in \N$, the iterations of \ref{eq:T-DoG} satisfy $\Pr*(\rbar[\numSteps]  > 3 \d[0] ) \le \delta$.
\end{proposition}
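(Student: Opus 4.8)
The plan is to turn the high-probability stability claim into a deterministic consequence of a single concentration event, via a stopping-time (first-exit) argument. I would begin from the projected subgradient recursion $\d[k+1]^2 \le \d[k]^2 - 2\eta_k\inner{g_k}{x_k-\xopt} + \eta_k^2\norm{g_k}^2$, split $g_k = \grad f(x_k) + \Delta_k$ with $\Delta_k \defeq g_k - \grad f(x_k)$, and use convexity (\Cref{ass:convex}) to discard the nonnegative term $2\eta_k\inner{\grad f(x_k)}{x_k-\xopt} \ge 2\eta_k(f(x_k)-\fopt)\ge 0$. Telescoping the rest gives, for every $t$,
\[
\d[t]^2 \le \d[0]^2 + 2\,\abs[\Big]{\textstyle\sum_{k=0}^{t-1}\eta_k\inner{\Delta_k}{x_k-\xopt}} + \sum_{k=0}^{t-1}\eta_k^2\norm{g_k}^2 .
\]
It therefore suffices to show that, with probability at least $1-\delta$, the last two terms stay below $3\d[0]^2$ up to the first time the trajectory reaches the boundary of $\Bset$.

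To break the circular dependence between the iterates remaining in $\Bset$ and the step sizes (which depend on the trajectory through $\Lbar[t]$ and hence $\G[t]'$), I would introduce the stopping time $\tau \defeq \min\{t : r_t > 3\d[0]\}$; this is well-defined in the natural filtration because $r_t$ depends only on $g_0,\dots,g_{t-1}$, and $\reps \le 3\d[0]$ forces $\tau \ge 1$. On the failure event $\{\rbar[\numSteps] > 3\d[0]\}$ we have $\tau \le \numSteps$, and for every $k < \tau$ the iterate lies in $\Bset$, yielding the deterministic bounds $\rbar[k] \le 3\d[0]$, $\dbar[k] \le 4\d[0]$ (since $\d[k] \le r_k + \d[0]$) and $\Lbar[k] \le \LLstar$. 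The step-variance term is then purely algebraic: because the \ref{eq:T-DoG} denominator inflates $\G[k-1]+16\Lbar[k]^2$ by the factor $8^4\TimeUniformLog[\numSteps,\delta]^2\log_+^2\big(1+t\Lbar[t]^2/\Lbar[0]^2\big)$, an AdaGrad-style telescoping (\Cref{lem:adagrad-algebra}) bounds $\sum_{k<\tau}\norm{g_k}^2/\G[k]'$ by $1/\big(8^4\TimeUniformLog[\numSteps,\delta]^2\log_+(1+t\Lbar[t]^2/\Lbar[0]^2)\big)$, so $\sum_{k<\tau}\eta_k^2\norm{g_k}^2 \le \rbar[\tau-1]^2/(8^4\TimeUniformLog[\numSteps,\delta]^2)$, a negligible multiple of $\d[0]^2$.

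The noise term is the crux, and is exactly what \Cref{cor:product-mg-concentration} is built for: the summands $\frac{\rbar[k]}{\sqrt{\G[k]'}}\inner{\Delta_k}{x_k-\xopt}$ form a martingale-difference sequence with predictable but \emph{random} weights $\rbar[k]/\sqrt{\G[k]'}$ and with no a-priori deterministic bound on their magnitude (inside $\Bset$ one only has the local control $\norm{\Delta_k}\le 2\Lbar[k]\le 2\LLstar$). The relevant conditional-variance proxy is $\sum_{k<\tau}\frac{\rbar[k]^2\dbar[k]^2\,\E[\norm{g_k}^2\mid\filt_k]}{\G[k]'}$, and --- crucially using the actual gradient norms rather than $\Lbar[k]^2$ --- the growing $\G[k-1]$ inside $\G[k]'$ telescopes this down to order $\dbar[\tau-1]^2\rbar[\tau-1]^2/(8^4\TimeUniformLog[\numSteps,\delta]^2)$. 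Feeding this into the time-uniform Bernstein-type bound, the same over-damping factor in $\G[k]'$ is precisely what absorbs both the $\sqrt{\TimeUniformLog[\tau,\delta]}$ variance factor and the $\TimeUniformLog[\tau,\delta]$ range factor, so that with probability at least $1-\delta$ the noise term is a small constant times $\d[0]^2$, \emph{simultaneously for all} $t \le \numSteps$, hence in particular at $t=\tau$.

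Combining the three bounds on this event yields $\d[\tau]^2 < 4\d[0]^2$, so $r_\tau \le \d[\tau] + \d[0] < 3\d[0]$, contradicting $r_\tau > 3\d[0]$; thus the failure event is contained in the complement of the concentration event and has probability at most $\delta$. The main obstacle is exactly this coupling: since the step sizes, the variance proxy, and membership in $\Bset$ all depend on the same random trajectory, the concentration inequality must be simultaneously time-uniform (to be evaluated at the data-dependent $\tau$) and free of any global increment bound, and the logarithmic over-damping of \ref{eq:T-DoG} must be quantitatively calibrated so that noise plus step-variance provably stays below $3\d[0]^2$.
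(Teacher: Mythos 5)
Your proposal follows essentially the same route as the paper's proof: a first-exit stopping time with the step sizes effectively truncated at it (so that inside $\Bset$ all martingale increments admit deterministic bounds), the time-uniform Bernstein bound of \Cref{cor:kickass-mg-concentration-2} applied to the stopped noise martingale with the $8^4\TimeUniformLog[\numSteps,\delta]^2\log_{+}^2(\cdot)$ over-damping in $\G[t]'$ absorbing both the variance and range factors, and a contradiction at the exit time (which the paper phrases as the induction in \Cref{lem:distance-bound-conclusion}). One small slip worth noting: the step-variance telescoping requires \Cref{lem:bound-a-k-infinite-sum}, whose $\log_{+}^2$ damping yields a sum bounded by $1$ uniformly in $t$ (with no leftover logarithmic factor), rather than \Cref{lem:adagrad-algebra}, which by itself would only give a bound growing like $\sqrt{\G[t]}$.
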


We defer the full proof to  \Cref{sec:proof-prop:distance-bound} and proceed to highlight the key argument by proving the result in the noiseless case.

\begin{proof}[Proof of \Cref{prop:distance-bound} in the noiseless case] 
	In the noiseless case we have $g_k=\grad f(x_k)$ and therefore
	$\inner{g_k}{ x_k- \xopt}\ge f(x_k)-\fopt \ge 0$. 	Substituting into  \eqref{eq:classic-subgradient-inequality} and rearranging gives 
	$d_{k+1}^2 - d_k^2 \le  \eta_k^2 \| g_k \|^2$. Assuming by induction that $\rbar[t] \le 3\d[0]$ and telescoping yields
	$d_{k+1}^2 - d_k^2 \le  \eta_k^2 \| g_k \|^2$. Assuming by induction that $\rbar[t] \le 3\d[0]$ and telescoping yields
	\begin{flalign*}
		d_{t+1}^2 - d_0^2 &\le \rbar[t]^2 \sum_{k=0}^t \frac{\| g_k \|^2}{\G[k]'}  \overle{(i)} \frac{ \rbar[t]^2}{8^4} \sum_{k=0}^t \frac{G_k - G_{k-1}}{(G_k+\Lbar[k]^2) \log_+^2 \frac{G_k+\Lbar[k]^2}{\Lbar[0]^2}}
		\overle{(ii)} \frac{ \rbar[t]^2}{8^4} 	\overle{(iii)} \frac{ 9 \d[0]^2}{8^4} \implies \d[t+1] 	\le 2 \d[0],
	\end{flalign*}
where $(i)$ uses that $\norm{g_k}^2 = \G[k] - \G[k-1]$ (with the shorthand $\G[-1] \defeq  0$) and 
\begin{equation*}
	\G[k]' 
	\ge 8^4 (\G[k-1] + \norm{g_k}^2 + \Lbar[k]^2) \log_+^2\prn*{\frac{\sum_{i \le t} \Lbar[t]^2}{\Lbar[0]^2} } \ge 8^4  (\G[k] + \Lbar[k]^2) \log_+^2 \frac{\G[k] + \Lbar[k]^2}{\Lbar[0]^2}
\end{equation*}
by \Cref{ass:bounded} which implies $\norm{g_k} \le \Lbar[k]$ for all $k$,
	 $(ii)$ uses \Cref{lem:bound-a-k-infinite-sum} with $a_k = \G[k] + \Lbar[k]^2$, and $(iii)$ uses the inductive assumption $\rbar[t] \le 3 \d[0]$. 
	 Therefore, $\r[t+1] \le \d[t+1] + \d[0] \le 3\d[0]$ by the triangle inequality, completing the induction step.
	 Note that this proof ignored the $\TimeUniformLog$ term in \eqref{eq:T-DoG}, demonstrating it is not necessary in the noiseless case.
\end{proof}

Given \Cref{ass:bounded} we define
\begin{equation}\label{eq:LLstar-def}
	\LLstar = \max_{x \in \X:  \norm{x-x_0}\le 3\norm{x_0 - \xopt}} \Lfunc(x).
\end{equation}
With all the ingredients in hand, we state the main guarantee for \TDoG.

\begin{theorem}\label{thm:final-bound}
	Suppose that \Cref{ass:convex,ass:bounded} hold.
	For any $\delta \in (0,\frac{1}{2})$, $\numSteps \in \N$, consider $\numSteps$ iterations of \ref{eq:T-DoG} with $\reps \le 3 \d[0]$.
	Then
	for $\tau \in \argmax_{t\le \numSteps} \sum_{i < \tau} \rbar[i] / \rbar[t]$ we have, with probability at least $1- 2\delta$, that
	\begin{flalign*}
		f(\xbar[\tau]) - \fopt = 
		 O\prn*{c_{\delta,\reps,\numSteps}\frac{ \d[0] \sqrt{\G[\tau-1] + \LLstar^2}}{\numSteps}} = O\prn*{ c_{\delta,\reps,\numSteps}  \frac{\d[0] \LLstar}{\sqrt{\numSteps}}},
	\end{flalign*}
	where $c_{\delta,\reps,\numSteps} = \log_{+}\prn*{ T \frac{ \d[0] \LLstar}{f(x_0)-\fopt}} \log_{+}\prn*{\frac{\d[0]}{\reps}} \log\left( \frac{\log_{+}( \numSteps)}{\delta} \right)$.
\end{theorem}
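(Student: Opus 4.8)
The plan is to combine the iterate-stability guarantee of \Cref{prop:distance-bound} with the general optimality-gap bound of \Cref{prop:error-bound} (which applies to \ref{eq:T-DoG} since it is \DoG-like) and the denominator estimate of \Cref{lem:bound-a-ratios}. First I would introduce the stability event $\mathcal{E}_1 \defeq \{\rbar[\numSteps] \le 3\d[0]\}$, for which $\Pr(\mathcal{E}_1^c) \le \delta$ by \Cref{prop:distance-bound} (using $\reps \le 3\d[0]$). Since every iterate lies within distance $\rbar[\numSteps]$ of $x_0$, on $\mathcal{E}_1$ the definition \eqref{eq:LLstar-def} gives $\Lbar[\numSteps] \le \LLstar$ and hence $\G[t] \le \LLstar^2(t+1)$ for all $t \le \numSteps$. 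I would then invoke \Cref{prop:error-bound} with $\LL = \LLstar$ and evaluate its conclusion at $\tau \in \argmax_{t\le\numSteps}\sum_{i<t}\rbar[i]/\rbar[t]$.

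The first point requiring care is the probability accounting, since a naive union bound would give only $1-3\delta$. The resolution is that the term $\Pr(\Lbar[\numSteps] > \LLstar)$ appearing in \Cref{lem:bound-generalization-error} (hence in \Cref{prop:error-bound}) is controlled by the very same stability event, because $\{\Lbar[\numSteps] > \LLstar\} \subseteq \mathcal{E}_1^c$. Concretely, the ``$\delta$'' in \Cref{lem:bound-generalization-error} comes from a martingale-concentration event $G$ with $\Pr(G^c)\le\delta$ on which, whenever $\Lbar[\numSteps]\le\LLstar$, the noise bound holds. On $G\cap\mathcal{E}_1$ we then have simultaneously: $\rbar[\numSteps]\le 3\d[0]$ and $\Lbar[\numSteps]\le\LLstar$ (from $\mathcal{E}_1$), the noise bound (from $G$ and $\Lbar[\numSteps]\le\LLstar$), and the weighted-regret bound of \Cref{lem:bound-empirical-error} (which holds with probability $1$), so the bound of \Cref{prop:error-bound} holds for all $t\le\numSteps$. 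The union of the two genuine bad events has probability at most $\Pr(G^c)+\Pr(\mathcal{E}_1^c)\le 2\delta$, which is what yields the claimed $1-2\delta$.

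On $G\cap\mathcal{E}_1$ I would simplify each factor at $t=\tau$. The prefactor obeys $\d[0]+\rbar[\tau]\le 4\d[0]$. For the square root, $\Lbar[t]\le\LLstar$ together with \eqref{eq:T-DoG} gives $\G[\tau-1]' \le 8^4\TimeUniformLog[\numSteps,\delta]^2\log_{+}^2(1+(\tau-1)\LLstar^2/\Lbar[0]^2)(\G[\tau-1]+16\LLstar^2)$; since moreover $\G[\tau-1]\TimeUniformLog[\tau,\delta]$ and $\LLstar^2\TimeUniformLog[\tau,\delta]^2$ are each $O(\TimeUniformLog[\numSteps,\delta]^2(\G[\tau-1]+\LLstar^2))$, the entire root is $O(\TimeUniformLog[\numSteps,\delta]\log_{+}(1+\numSteps\LLstar^2/\Lbar[0]^2)\sqrt{\G[\tau-1]+\LLstar^2})$. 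For the denominator, \Cref{lem:bound-a-ratios} applied to the positive nondecreasing sequence $s_i=\rbar[i]$ (so $s_0=\rbar[0]=\reps$ and $s_\numSteps=\rbar[\numSteps]\le 3\d[0]$) yields $\sum_{i<\tau}\rbar[i]/\rbar[\tau]=\Omega(\numSteps/\log_{+}(\d[0]/\reps))$, the additive $-1/e$ in the lemma being lower order. Dividing gives the first displayed bound with $c_{\delta,\reps,\numSteps}=\TimeUniformLog[\numSteps,\delta]\log_{+}(1+\numSteps\LLstar^2/\Lbar[0]^2)\log_{+}(\d[0]/\reps)$. To match this to the stated constant I would convert $\Lbar[0]$ via convexity: $f(x_0)-\fopt\le\inner{\grad f(x_0)}{x_0-\xopt}\le\norm{\grad f(x_0)}\d[0]\le\Lbar[0]\d[0]$ (the last step by \Cref{ass:bounded} and Jensen), so $1/\Lbar[0]\le\d[0]/(f(x_0)-\fopt)$ and $\log_{+}(1+\numSteps\LLstar^2/\Lbar[0]^2)=O(\log_{+}(\numSteps\d[0]\LLstar/(f(x_0)-\fopt)))$; combined with $\TimeUniformLog[\numSteps,\delta]=O(\log(\log_{+}(\numSteps)/\delta))$ this reproduces $c_{\delta,\reps,\numSteps}$. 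The second displayed bound then follows from $\G[\tau-1]\le\LLstar^2\numSteps$.

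The main obstacle I anticipate is the probability bookkeeping of the second paragraph: obtaining the sharp $1-2\delta$ rather than $1-3\delta$ hinges on recognizing that the gradient-bound failure term in the noise lemma is not an independent source of error but is itself subsumed by the iterate-stability event $\mathcal{E}_1$. The remaining steps --- upper bounding the three terms under the root by $\TimeUniformLog[\numSteps,\delta]^2(\G[\tau-1]+\LLstar^2)$ and the $\Lbar[0]$-to-$(f(x_0)-\fopt)$ conversion --- are routine algebra modulo tracking constants.
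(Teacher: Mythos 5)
Your proposal is correct and takes essentially the same route as the paper: the paper's proof simply invokes \Cref{coro:error-bound} (with $D=3\d[0]$) together with \Cref{prop:distance-bound}, and \Cref{coro:error-bound} is itself exactly your combination of \Cref{prop:error-bound} and \Cref{lem:bound-a-ratios}. Your $2\delta$ accounting is also the paper's: the inclusion $\{\Lbar[\numSteps] > \LLstar\} \subseteq \{\rbar[\numSteps] > 3\d[0]\}$ is precisely how the failure term $\P\prn{\Lbar[\numSteps] > \LL}$ from \Cref{lem:bound-generalization-error} is absorbed into $\P\prn{\rbar[\numSteps] > D}$ in \Cref{coro:error-bound}, and your $\Lbar[0]$-to-$(f(x_0)-\fopt)$ conversion via convexity matches the paper verbatim.

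One step you dismiss too quickly: the claim $\sum_{i<\tau}\rbar[i]/\rbar[\tau] = \Omega\prn{\numSteps/\log_{+}(\d[0]/\reps)}$ does not hold uniformly, because when $\numSteps \le 2\log_{+}(\rbar[\numSteps]/\reps)$ the lower bound in \Cref{lem:bound-a-ratios} is nonpositive --- the additive ``$-1$'' is \emph{not} lower order there (e.g.\ at $t=1$ the sum can be as small as $\reps/(3\d[0])$, which is far below $1/\log_{+}(\d[0]/\reps)$). The paper handles exactly this corner case in the proof of \Cref{coro:error-bound} via the trivial bound $f(\xbar[\tau])-\fopt \le O\prn{\LLstar \dbar[\tau]} \le O\prn{\LLstar(\rbar[\tau]+\d[0])}$, which on the stability event is $O(\LLstar\d[0])$; since in that regime $\numSteps \le 2\log_{+}(3\d[0]/\reps) = O(c_{\delta,\reps,\numSteps})$, this is within the stated $O\prn{c_{\delta,\reps,\numSteps}\,\d[0]\LLstar/\sqrt{\numSteps}}$. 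Adding this one line closes the only gap in your argument.
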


\begin{proof}
	The theorem follows from \Cref{coro:error-bound}, \Cref{prop:distance-bound} and the definition of \ref{eq:T-DoG}, where we note that \Cref{ass:bounded} and convexity of $f$ imply $\Lbar[0] \ge \norm{\grad f(x_0)} \ge (f(x_0)-f(x\opt)) / \d[0]$, while $\rbar[T] \le 3\d[0]$ gives $\Lbar[T] \le \LLstar$. Therefore, $\log_+\prn*{1 + \frac{T \Lbar[T]^2}{\Lbar[0]^2} } = O\prn*{ \log_{+}\prn*{T \frac{ \d[0] \LLstar}{f(x_0)-\fopt}}}$.
\end{proof}

\Cref{thm:final-bound} yields the optimal convergence bound \cite{agarwal2012information} up to logarithmic factors. To the best of our knowledge this is the first parameter-free stochastic optimization method that does not require the stochastic gradients to be uniformly bounded across the domain $\X$ and instead produces a bound that depends on the `local' gradient bound $\LLstar$. Crucially, the \ref{eq:T-DoG} step size formula does not require advance knowledge of $\LLstar$.\footnote{There is prior work that develop methods with steps that do not require a global Lipschitz bound~\citep{cutkosky2019artificial,mhammedi2020lipschitz}, but these methods
do not guarantee that iterates remain in a ball of radius $O(\d[0])$ around the initial point. Consequently, the rates of convergence of these methods cannot be expressed in terms of a quantity like $\LLstar$.}

\paragraph*{Extension to unweighted iterate averaging.}
While the weighted iterate average~\eqref{eq:aveged-iterate} is convenient to our analysis, bounds similar to \Cref{prop:error-bound}, \Cref{coro:error-bound} and \Cref{thm:final-bound} hold also for the standard unweighted iterate average $\hx_T = \frac{1}{T}\sum_{t=0}^{T-1}x_t$. For $\hx_T$ it is also straightforward to show a $1/T$ error bound for \DoG in the smooth noiseless case. See \Cref{app:unweighted} for details.

\section{Experiments}\label{sec:experiments}

To test \DoG{} in practical scenarios, we perform extensive experiments over a diverse set of tasks and model architectures in both the vision and language domains. We construct a testbed that consists of over 20 tasks and 7 model architecture, covering natural language understanding and computer vision (\Cref{subsec:settings}). In this testbed we compare \DoG to SGD and Adam (\Cref{subsec:results}), showing that \DoG performs on par with tuned SGD, but not as well as tuned Adam. Nevertheless, a per-layer version of \DoG (defined below) closes much of this gap with Adam without requiring tuning. We also use our testbed to analyze the sensitivity of \DoG{} to its fixed parameters (\Cref{subsec:DoG-sensitivity}), and demonstrate its effectiveness in convex logistic regression settings (\Cref{subsec:convex-opt}). Finally, we apply \DoG and \LDoG to fine-tuning a CLIP model on ImageNet (\Cref{subsec:imagenet}) and training a CIFAR10 model from scratch (\Cref{subsec:from-scratch}), and provide preliminary comparison to previously-proposed tuning free methods (\Cref{subsec:other-optimizers}). A PyTorch implementation of \DoG 
is available at \url{https://github.com/formll/dog}.

\paragraph{Layer-wise \DoG.}
Neural models in general and transformer-based models in particular often benefit from using a per-parameter or per-layer step sizes \citep{kingma2015adam,you2019large}. %
With this in mind, we consider a per-layer version of \DoG, which we call \LDoG, where we apply the \eqref{eq:dog} formula separately for every layer. Namely, if we consider $x_t^l$ to be the weights in layer\footnote{More precisely, our implementation treats each element in the PyTorch \texttt{.parameters()} list as a separate layer.}  $l$ at step $t$, then we set the learning rate for that layer to be $\eta_t^l = \frac{\max_{i\le t}\norm{x_i^l - x_0^l}}{\sqrt{\sum_{i\le t}\norm{g_i^l}^2+\epsilon}}$, where $\epsilon=10^{-8}$ is added to the denominator for numerical stability.
While we do not provide theoretical guarantees for \LDoG, we show below that it performs well in practice.

\subsection{Fine-tuning testbed}\label{subsec:settings}
Our main experiments focus on fine-tuning pre-trained models, which allows us to experiment with advanced models while also thoroughly tuning the learning rate for the baseline optimizers, using an academic computational budget.

\paragraph{Common hyperparameters.}
For each baseline algorithm, we use best-practice learning rate schedule (cosine annealing for all experiments, with a warmup stage for language experiments) and sweep over the peak learning rate for each model/task pair. We give each pair a fixed step budget designed to suffice for convergence, performing evaluation throughout the training. In all cases, we use polynomial decay averaging\footnote{We apply the weight averaging with a fixed parameter ($\gamma=8$, following~\cite{levy2020large}); we did not try any other parameter in our experiments.} as proposed by \citet{shamir2013stochastic}, and select the best checkpoint (either averaged or not) based on evaluation performance.
We repeat relevant learning setups with 5 different seeds, and report the mean performance across the seeds.
For simplicity, we do not use weight decay throughout.
The complete set of hyper-parameters appears in \Cref{app:exp-details}.

\paragraph{Natural language understanding (NLU).}
To test \DoG{}'s efficacy in modern NLU, we use it to fine-tune transformer language models \citep{vaswani2017attention} on the well-studied GLUE benchmark \citep{wang2018glue} which measures models' performance on diverse text classification tasks (listed in \Cref{app-subset:datasets}).

Additionally, we fine-tune models on SQuAD 1.1, a question answering dataset \citep{rajpurkar2016squad}.
We fine-tune a RoBERTa-base \citep{liu2019roberta} checkpoint
and T5-base \citep{Raffel2020Exploring}.\footnote{Throughout the paper we often use the shorthand names RoBERTa-b and T5-b, respectively.} %
For each task, we use the official evaluation metrics defined in \citet{wang2018glue} and \citet{rajpurkar2016squad} as well as their original proposed splits, and report the results over the evaluation set.

\paragraph{Computer vision.}
We also fine-tune 5 models architectures on 12 different computer vision tasks from the VTAB benchmark \citep{zhai2019large} (see \Cref{app-subset:datasets});
of the other 7 tasks in VTAB, 5 are trivial (accuracy greater than 99\%)
and 2 have small validation splits leading to unreliable model selection.
We follow the training, validation and test splits defined in VTAB, and report performance on the test split (using the validation split for model selection). 
We fine-tune 5 models: VGG11 \citep{simonyan2014very}, ResNet50 \citep{he2015deep}, Densenet121 \citep{huang2016densely}, ViT-B/32 \citep{dosovitskiy2021image}, and ConvNeXt-T \citep{liu2022convnet}, where the ViT model is pre-trained on ImageNet 21K and the others are trained on ImageNet 1K \citep{deng2009imagenet}.

\paragraph{Normalized performance metric.}
 Since the performance metrics in our testbed vary substantially across tasks and models, they are challenging to compare in aggregate. To address this, we consider the following notion of \emph{relative error difference} (RED), that provides a normalized performance difference measure. In particular, given a task and a model architecture, let $\mathrm{err}_{x}$ be the error\footnote{We consider the error to be 1 minus the respective performance metric, as detailed in \Cref{table:tasks-configurations}.} 
  of the  model when trained with optimizer $x$ (Adam or SGD with a certain learning rate, or \LDoG) and let $\mathrm{err}_{\DoG}$ be the error when trained with \DoG. Then
\begin{align*}
	\text{RED}(\mathrm{err}_{x}, \mathrm{err}_{\DoG}) \defeq \frac{\mathrm{err}_{\DoG} - \mathrm{err}_{x}}{\mathrm{err}_{\DoG}}.
\end{align*}
A positive RED value indicates that optimizer $x$ is better than \DoG, and a negative value indicates the opposite. When the absolute value of RED is beneath a few percentage points, the compared methods are nearly equivalent. For example, a 5\% RED is equivalent to the difference between accuracy 95\% and 95.25\%.

\paragraph{Setting $\reps$.} 
Our theoretical analysis suggests that the particular choice of $\reps$ does not matter as long as it is sufficiently small relative to the distance between the weight initialization $x_0$ and the optimum. Consequently, for vision experiments we set $\reps = \alpha\cdot (1+\norm{x_0})$ for $\alpha=10^{-4}$, assuming that the distance to the optimum is more than 0.01\% of the initialization norm. For language experiments, this assumption turned out to be wrong (causing \DoG to diverge in some cases), and we decreased $\alpha$ to $10^{-6}$ for \DoG and to $10^{-8}$ for $\LDoG$, where the additive $10^{-6}$ term was too large in some layers.  
We believe that $10^{-6}$ and $10^{-8}$ should be good defaults for \DoG and \LDoG, respectively, though networks with batch normalization or different initialization schemes could require a larger value; see \Cref{subsec:DoG-sensitivity} for additional discussion.

\subsection{Comparison of fine-tuning performance}
\label{subsec:results}

\begin{figure*}[t]
	\begin{center}
	\centerline{
	\arxiv{\includegraphics[width=1.1\textwidth]{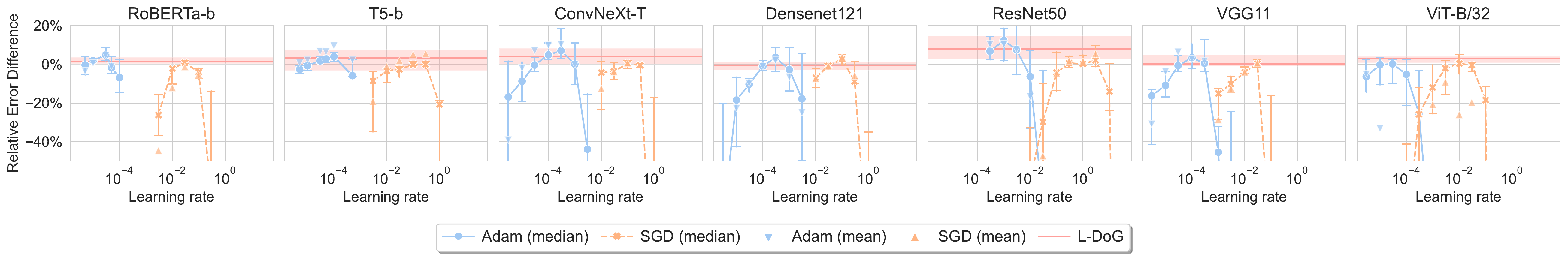}}
	\notarxiv{\includegraphics[width=\textwidth]{figures/mean_results_main_full.pdf}}
	}
	\notarxiv{\setlength{\belowcaptionskip}{-20pt}}
	\caption{Relative error difference statistics (median, mean, and error bars showing IQR) across tasks for each model,  as a function of peak learning rate. The red horizontal line and shaded region indicate the median and IQR RED for \LDoG{}, respectively.
	}
	\label{fig:mean-results}
	\end{center}
	\arxiv{\vspace{-12pt}} %
	\notarxiv{\vspace{-1cm}}
\end{figure*}

\Cref{fig:mean-results} depicts the median, IQR (inter-quantile range) and mean RED of each model,\footnote{When aggregating results over tasks, we always report the RED statistics across tasks, where for each task we average the RED values over seeds. See \Cref{app:experiments-hparams} for details.} when trained with SGD and Adam with different peak learning rates. 
The figure shows that, when comparing across models, there is no good default learning rate for neither SGD nor Adam. Moreover, even for a single model only very specific SGD learning rate performs well, while most are considerably inferior to using \DoG. Even when tuned to the best \emph{fixed} learning-rate value per model (which we refer to as \emph{model tuned LR}), some tasks may still fail (compared to \DoG{}) as indicated by the large IQR and the gap between the mean (triangles) and the median RED (circles) in models such as ViT-B/32 ad Densenet121. 
While Adam also requires tuning, it is somewhat less sensitive than SGD to the choice of peak learning rate.  
For a full breakdown of performance per task, see 
\Cref{fig:per-task-results} 
and \Cref{table:language-results,table:vision-results} in \Cref{app:experiments-breakdown}.

\DoG performs similarly to well-tuned SGD in 79 out of the 80 model/task combinations in our testbed. The one exception is tuning T5-b on CoLA, where \DoG{} behaves erratically while SGD succeeds only with a few learning rates.
In contrast, both Adam and \LDoG achieved reasonable performance consistently. \DoG's poor performance on CoLA results in high RED measures for this case, which draw the mean RED (triangles) above the median one in \Cref{fig:mean-results} for T5-b. We further analyze this exception in \Cref{app:experiments-cola} and show that choosing significantly smaller $\reps$ for \DoG alleviates the problem.

\Cref{fig:hpt-results} (top) compares \DoG to SGD with model tuned LR as defined above, as well as \emph{instance tuned LR}, where for each model/task pair we select the best learning rate, at a computational expense 5--7 times larger than running \DoG. The performance of \DoG remains close to that of SGD with instance-tuned LR, with the largest median RED observed for ResNet50 and ViT-B/32. 

\Cref{fig:hpt-results} (bottom) compares \DoG to model-tuned and instance-tuned Adam, as well as to \LDoG. In a few cases (namely ResNet50 and ConvNeXt-T) the gaps between \DoG and Adam are significant, and favor Adam. We hypothesize this is due to Adam's per-parameter step-sizes and momentum mechanisms, which \DoG does not exploit.  \LDoG, which has per-layer steps, has positive median RED for all models, and narrows the gap between \DoG and Adam, particularly for ResNet50.

The instance-tuned baselines consume significantly more compute than \DoG and \LDoG. In \Cref{app:expeirments-equalized} we equalize the compute budget by reducing the number of steps for SGD and Adam. This makes \DoG outperform instance-tune SGD in most cases, and brings \LDoG substantially closer to Adam. 

\begin{figure}[t]
	\begin{center}
	\centerline{
	\notarxiv{\includegraphics[width=\linewidth]{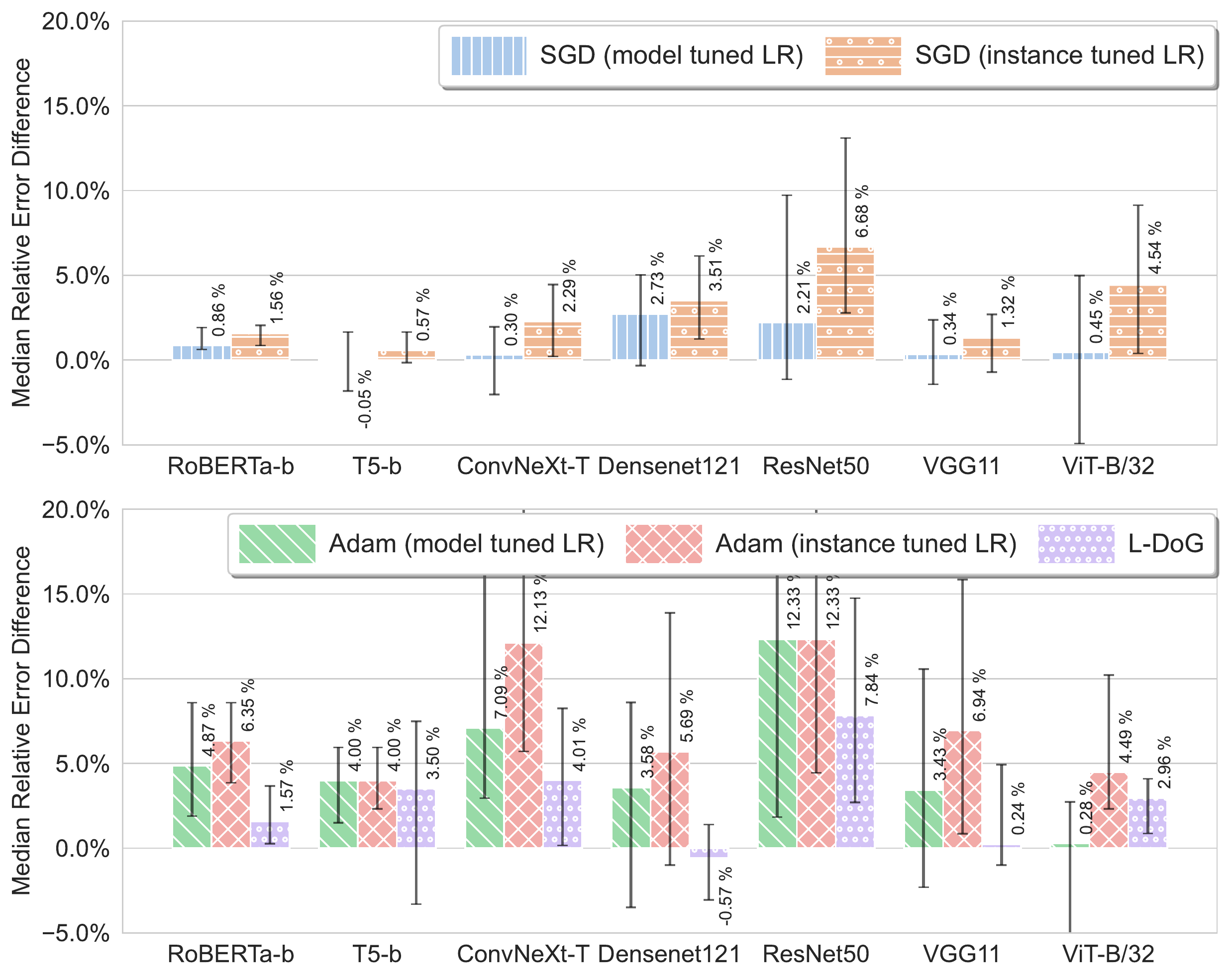}}
	\arxiv{\includegraphics[width=0.6\linewidth]{figures/hpt_results.pdf}}
	}
	\notarxiv{\setlength{\belowcaptionskip}{-20pt}}
	\caption{RED median (bar chart) and IQR (error bars) of each model on the set of applicable tasks. \textbf{Top}: Comparison with SGD when the LR is optimally tuned per model (\emph{model tuned LR}) or per task (\emph{instance tuned LR}). 
	\DoG{} is competitive with model-tuned SGD and often performs nearly as well as instance-tuned SGD. 
	\textbf{Bottom}: Comparison of \DoG{} with adaptive optimizers. 
	\LDoG{} closes most of the gap to Adam.
}
	\label{fig:hpt-results}
	\end{center}
	\notarxiv{\vspace{-1cm}}
\end{figure}

\begin{figure}[t]
	\begin{center}
	\centerline{
	\notarxiv{\includegraphics[width=0.9\linewidth,trim={0cm 0.2cm 0cm 0cm},clip]{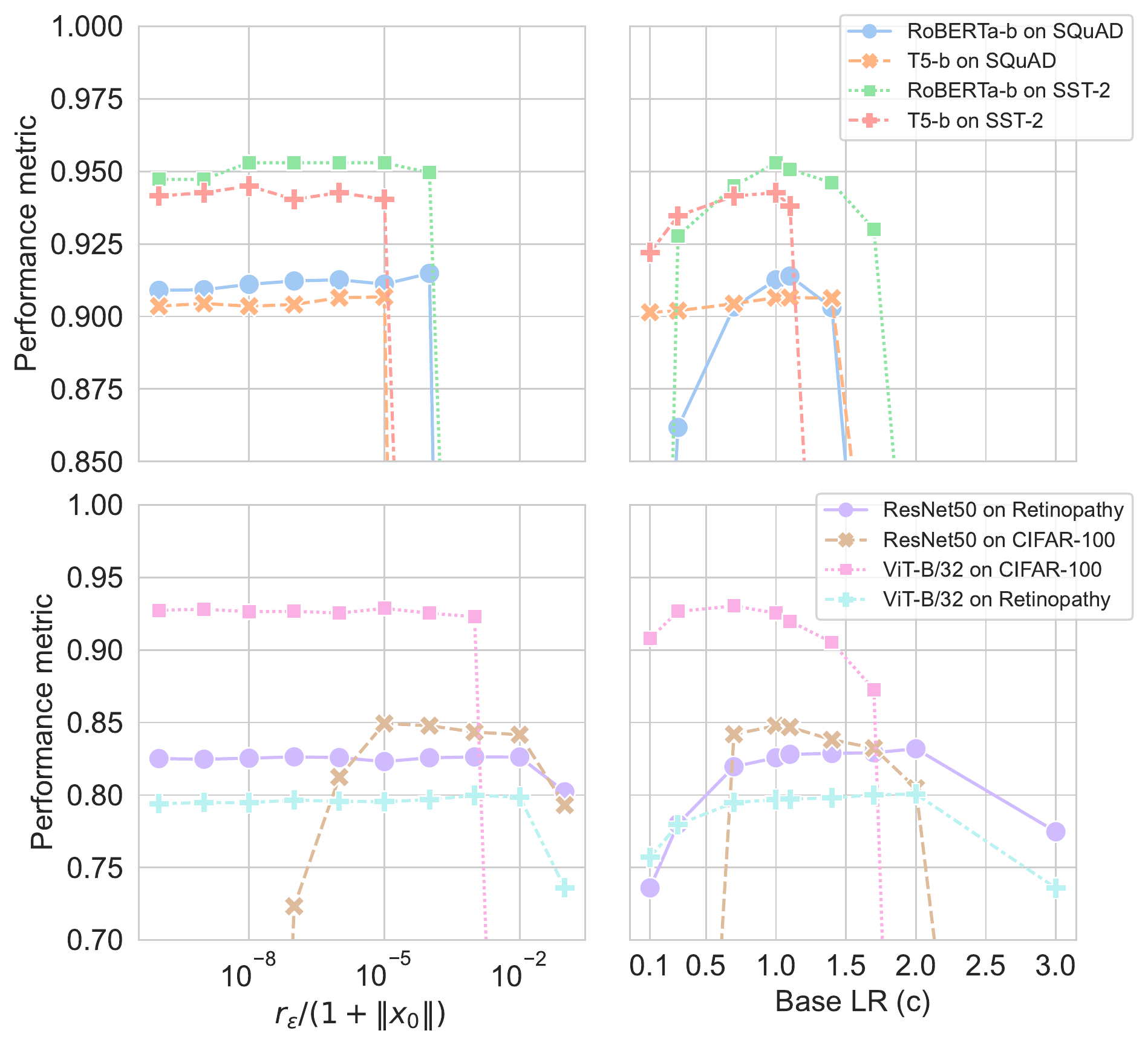}}
	\arxiv{\includegraphics[width=0.5\linewidth,trim={0cm 0.2cm 0cm 0cm},clip]{figures/ablations.pdf}}

	}
	\caption{Performance metrics of models trained with \DoG{} as a function of $\eta_0$ (left) or the base learning rate (right). 
	}
	\label{fig:ablations}
	\end{center}
	\notarxiv{\vspace{-0.75cm}}
\end{figure}

\subsection{Sensitivity of \DoG's fixed parameters}\label{subsec:DoG-sensitivity}

\paragraph{Initial movement size $\reps$.}
Our theory suggests that all sufficiently small choices of $\reps$ should perform similarly, but choosing $\reps$ too large (compared to the initial distance to the optimum) can hurt the performance of the algorithm.
In \Cref{fig:ablations} (left) we plot the test performance as a function of $\reps$ for 8 model/task combinations. For 7 out of the 8, \DoG{} is highly robust to the value of $\reps$ as long as it small enough, as predicted. However, ResNet50 on CIFAR-100 (bottom left) is an exception, where smaller values of $\reps$ result in an accuracy drop. We hypothesize this is due to scale invariance introduced by batch normalization (BN), and provide  supporting evidence for that in \Cref{app:experiments-reps} (\Cref{fig:no-bn}), where we show that \DoG is insensitive to $\reps$ when we turn off BN. In the appendix we also provide a complementary diagnostic for $\reps$ sensitivity by plotting $\eta_t$ vs.\ $\eta_0$ for different values of $t$ (see \Cref{fig:eta-sensitivity}).

\paragraph{Base learning rate.}
For this experiment only, we consider variants of \DoG with different values of base learning, i.e., step sizes of the form $\eta_t = c \cdot \frac{\max_{i\le t}\norm{x_i - x_0}}{\sqrt{\sum_{i\le t}\norm{g_i}^2}}$ with different values of $c$.
We expect optimal performance when $c$ is close to 1. More specifically, we expect the algorithm to be unstable when $c>1$ and to be slower to converge (and less likely to generalize well) when $c<1$. As can be observed in \Cref{fig:ablations} (right), values around $c=1$ perform well for all models. For smaller values, there is indeed inferior performance in some models (mainly ResNet50 and RoBERTa-b)---indicating \TDoG would not work well in practice---while larger values result in divergence (in 6 out of 8 cases).
Hence, the useful range for $c$ is very narrow (about [0.5, 1.5]) and tuning it is not likely to produce significant improvements. This is in contrast to Adam and SGD which generally require searching over a space spanning a few orders of magnitude  to properly train a model.

\arxiv{
\begin{figure}[t]
	\begin{center}
		\centerline{
				\includegraphics[width=0.6\linewidth]{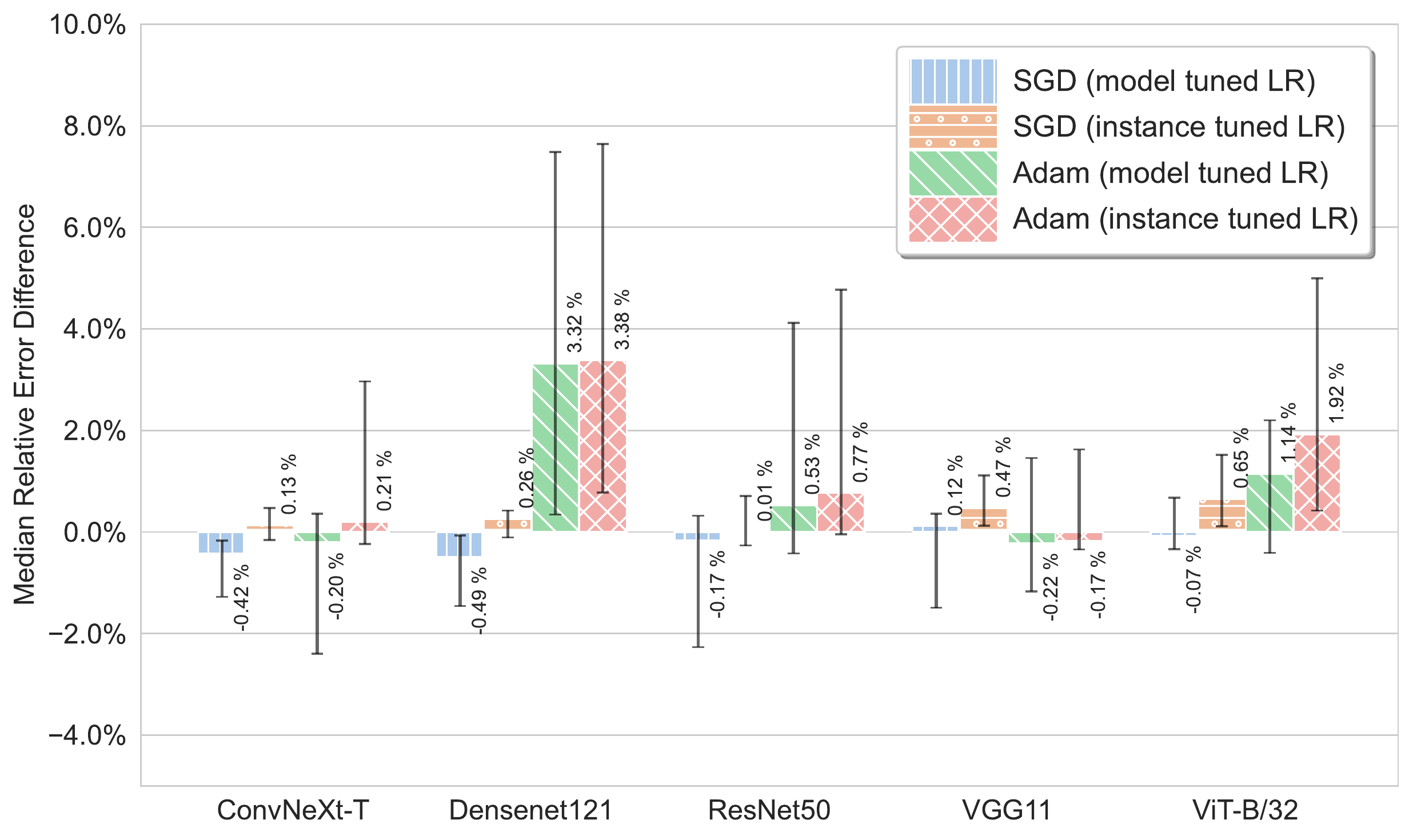}

		}
		\caption{RED median and IQR (as in \Cref{fig:hpt-results}) in tn the \emph{convex optimization} setting (\Cref{subsec:convex-opt}).
		}
		\label{fig:hpt-results-convex}
	\end{center}
\end{figure}

\newcommand{\MeanResultsConvexCaption}{Per-learning rate RED statistics (as in \Cref{fig:mean-results}) in the \emph{convex optimization} setting (\Cref{subsec:convex-opt}).}

	\begin{figure}[t]
		\begin{center}
			\centerline{
				\includegraphics[width=\textwidth,height=\textheight,keepaspectratio]{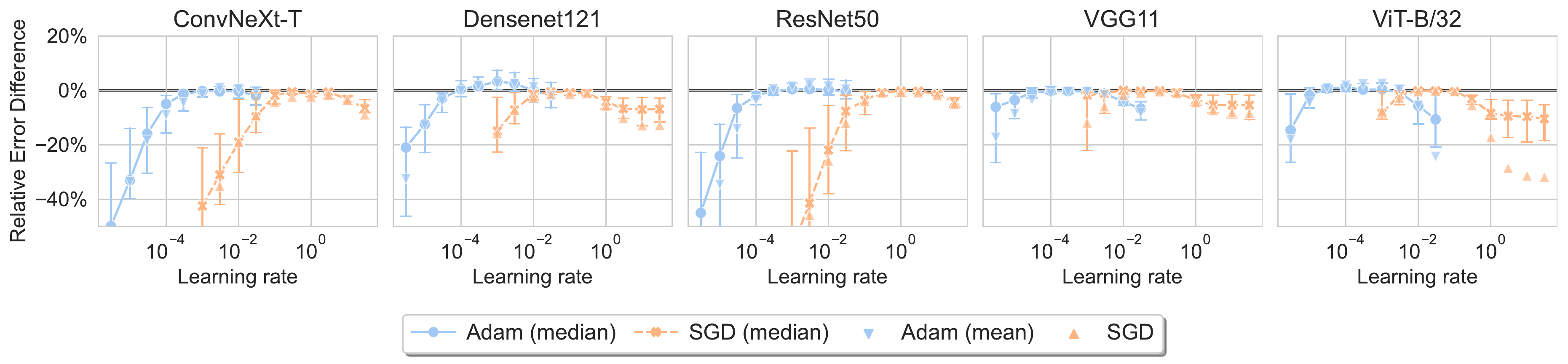}
			}
			\caption{\MeanResultsConvexCaption}
			
			\label{fig:mean-results-convex}
		\end{center}
	\end{figure}
 }

\subsection{Convex optimization}\label{subsec:convex-opt}
We also evaluate \DoG on convex optimization tasks, matching the assumptions of our theoretical analysis.
To do so, we perform multi-class logistic regression on features obtained from the computer vision models in our testbed, i.e., linear probes. 
We find that model-tuned SGD performs on par or worse than \DoG{}, while instance-tuned SGD barely gains any advantage (\Cref{fig:hpt-results-convex}), with RED values well under 1\% (corresponding to the difference between accuracies 90\% and 90.1\%).
Moreover, even in this simple setting, SGD is sensitive to the choice of learning rate, which differ significantly between models (\Cref{fig:mean-results-convex}). 
\newcommand{\ImageNetTableCaption}{\label{table:imagenet}ImageNet top-1 validation accuracies after fine-tuning a CLIP ViT-B/32 model for 25K training steps, with and without polynomial decay averaging (see \Cref{subsec:imagenet}). 
}
\begin{table}
	\centering
	\notarxiv{\caption{\ImageNetTableCaption} \vspace{4pt}\small}
    	\begin{tabular}{@{\extracolsep{4pt}}lccc}
	\toprule
	\arxiv{Algorithm & LR & Acc.\ w/o averaging & Acc.\ with averaging \\ 
	\cline{1-1} \cline{2-2} \cline{3-3} \cline{4-4}\rule{-2pt}{2.6ex}}
	\notarxiv{Algorithm & LR & Acc.\ w/o avg. & Acc.\ w/ avg. \\ 
	\cline{1-1} \cline{2-2} \cline{3-3} \cline{4-4}\rule{-2pt}{2.6ex}}

	\multirow[c]{5}{*}{SGD} & 1e-03 & 60.70\% & 60.49\% \\
	& 3e-03 & 73.62\% & 73.54\% \\
	& 1e-02 & 76.82\% & 76.80\% \\
	& 3e-02 & 77.51\% & 77.54\% \\
	& 1e-01 & 75.73\% & 75.71\% \\
	\midrule
	\DoG & - & 74.78\% & 77.22\% \\
	\midrule
	\multirow[c]{3}{*}{AdamW} & 1e-05 & 78.23\% & 78.25\% \\
	& 3e-05 & 79.04\% & 79.01\% \\
	& 1e-04 & 75.02\% & 74.97\% \\
	\midrule
	\LDoG & - & 78.20\% & \textbf{80.12\%} \\
	\bottomrule
\end{tabular}
\arxiv{\caption{\ImageNetTableCaption}}
\notarxiv{\vspace{-12pt}}
\end{table} 
\subsection{Fine-tuning on ImageNet}\label{subsec:imagenet}
To complement our main fine-tuning testbed, we perform a more limited experiment involving ImageNet as a downstream task, which is more expensive to tune due its larger scale. We fine-tune a ViT-B/32 CLIP model~\citep{radford2021learning} and compare \DoG and \LDoG to training with SGD or AdamW~\citep{loshchilov2019decoupled}. We use a training prescription similar to \citet{wortsman2022model}; see \Cref{app:experiments-imagenet} for additional details. \Cref{table:imagenet} shows the ImageNet top-1 validation accuracies of the final model checkpoints, with and without the polynomial decay averaging used throughout our experiments. \DoG performs similarly to SGD, but both algorithms perform significantly worse than AdamW, perhaps due to an insufficient iteration budget. \LDoG performs well in this setting, improving on AdamW by a little over 1 point.

\arxiv{%
\newcommand{\FromScratchableCaption}{\label{table:from-scratch}CIFAR-10 test accuracies after training a Wide ResNet 28-10 model from scratch for 200 epochs, with and without polynomial decay averaging (see \Cref{subsec:from-scratch}). $\dagger$ denotes the standard training configuration \citep[cf.][Table 2]{cubuk2019autoaugment}.} 
	
\begin{table}
	\centering
	\notarxiv{\caption{\FromScratchableCaption} \vspace{4pt}} 
		\begin{tabular}{@{\extracolsep{4pt}}lcccc}
		\toprule
		\arxiv{Algorithm & LR & Acc.\ w/o averaging & Acc.\ with averaging \\
		\cline{1-1} \cline{2-2} \cline{3-3} \cline{4-4}
		\rule{-2pt}{2.6ex}
	}
		\notarxiv{Algorithm & LR & Acc.\ w/o avg. & Acc.\ w/ avg. \\
		\cline{1-1} \cline{2-2} \cline{3-3} \cline{4-4}
		\rule{-2pt}{2.6ex}
	}

	\multirow[c]{5}{*}{SGD} 
	& 0.1 & 94.9\% & 94.9\% \\
	& 0.3 & 95.8\% & 95.6\% \\
	& 1 & \textbf{96.4\%} & 84.4\% \\
	& 3 & 95.9\% & 21.7\% \\
	& 10 & 10.0\% & 10.0\% \\
	\midrule
	\multirow[c]{5}{*}{\shortstack[l]{SGD w/\\ mom.\ 0.9} }
	& 0.01 & 95.0\% & 95.1\% \\
	& 0.03 & 95.8\% & 95.7\% \\
	& ~~0.1{\small~$\dagger$} & \underline{96.3\%} & 88.5\% \\
	& 0.3 & 95.8\% & 27.5\% \\
	& 1 & 42.0\% & 63.4\% \\
	\midrule
	\DoG & - & 85.2\% & \textbf{96.4\%} \\
	\midrule
	\multirow[c]{4}{*}{Adam} & 3e-05 & 91.1\% & 91.1\% \\
	& 1e-04 & 94.0\% & 94.0\% \\
	& 3e-04 & 93.5\% & 93.8\% \\
	& 1e-03 & 91.4\% & 91.6\% \\
	\midrule
	\LDoG & - & 83.2\% & 93.5\% \\
	\bottomrule
	\end{tabular}
    \arxiv{\caption{\FromScratchableCaption}}
\end{table} %
}

\subsection{Training from scratch}\label{subsec:from-scratch}
We conduct a preliminary experiment with training a model from scratch, specifically a Wide ResNet 28-10~\cite{zagoruyko2016wide} on  CIFAR-10~\cite{krizhevsky2009learning}; see \Cref{app:experiments-from-scratch} for details. \Cref{table:from-scratch} shows the test accuracy of the final checkpoint, with and without the polynomial averaging used throughout our experiments. Here \DoG performs on par with the setting's canonical training prescription of SGD with momentum 0.9 and learning rate 0.1~\cite{cubuk2019autoaugment}. In this setting Adam produces poorer results, and $\LDoG$ is 0.5 point worse than tuned Adam with the best learning rate, perhaps due to not reaching convergence.

\subsection{Comparison to other tuning-free methods}\label{subsec:other-optimizers}

We perform preliminary comparisons between  \DoG and \LDoG and other methods for removing the learning rate parameter: the Stochastic Polyak Step~\citep{loizou2021stochastic}, D-Adaptation~\citep{defazio2022parameter} and Continuous Coin Betting (COCOB)~\citep{orabona2017training}. 
In all cases, we find that \DoG and \LDoG provide better performance on most tasks and on average (see \Cref{table:language-comp-results,table:vision-comp-results}).
We provide detailed results in \Cref{app:other-optimizers}, where we also discuss the practical prospects of the bisection procedure of~\citet{carmon2022making}. 

\section{Related Work}

Previous attempts to design theoretically principled and practical optimization algorithms that do not require learning rate tuning approach the problem from a variety of perspectives, resulting in a large variety of proposed algorithms. \citet{rolinek2018l4,vaswani2019painless,paquette2020stochastic} lift classical line search technique from non-stochastic optimization to the stochastic setting, while~\citet{berrada2020training,loizou2021stochastic} do the same for the classical Polyak step size~\citep{polyak1987,hazan2019revisiting}. \citet{asi2019importance} develop a class of algorithms based on  stochastic proximal methods and demonstrate their improved robustness both theoretically and empirically. \citet{schaul2013no} use a stochastic quadratic approximation for designing learning rates that maximize the expected one-step objective decrease. \citet{chandra2022ultimate} nest hypergradient descent to make a method that is insensitive to initial hyper-parameter choices. However, none of these results are parameter-free in the same sense as \DoG: they either do not have converges guarantees, or have suboptimality bounds that blow up polynomially when the method's parameters do not match a problem-dependent value. In contrast, parameter-free methods have convergence rates that depend at most logarithmically on algorithmic parameters.

While the parameter-free optimization literature has focused mainly on theoretical schemes, a number of works also include empirical studies~\citep{orabona2014simultaneous,orabona2017training,kempka2019adaptive,chen2022better}. In particular, \citet{orabona2017training} build on coin-betting schemes to design an algorithm for training neural networks that has AdaGrad-style convergence guarantees for quasi-convex functions, showing promising results on neural network training problems.
In recent work \citet{chen2022better} obtain improved empirical results with an algorithm that leverages coin betting and truncated linear models. However, this method lacks theoretical guarantees.

In recent independent work~\citet{defazio2022parameter} propose a parameter-free dynamic step size schedule of dual averaging.
While our work has the same motivation and shares a number of technical similarities (including the use of weighted regret bounds and an independently obtained \Cref{lem:bound-a-ratios}), the proposed algorithms are quite different, and dual averaging is rarely used in training neural networks. (See additional discussion in \Cref{app-subsec:dadaptation}). 
 Moreover, \citet{defazio2022parameter} only prove parameter-free rates of convergence in the non-stochastic setting, while we establish high probability guarantees in the stochastic setting. Concurrently with our work, \citet{defazio2023learning} heuristically extended their dual averaging scheme to SGD- and Adam-like algorithms, reporting promising experimental results. 

Finally, a number of neural network optimization methods---LARS~\citep{you2017large}, LAMB~\citep{you2017scaling}, Adafactor~\citep{shazeer2018adafactor}, and Fromage~\citep{bernstein2020distance}---use the norm of neural network weights to scale the step size.  \DoG and \LDoG are similar in also using a norm to scale their step size, but they differ from prior work by considering the distance from initialization rather than the norm of the weights. We believe that this difference is crucial in making \DoG parameter-free, while the above-mentioned method have a learning-rate parameter to tune (though \citet{bernstein2020distance} report that a  single default value works well across different tasks). 

\section{Limitations and Outlook}\label{sec:discussion}
Our theoretical and empirical results place \DoG as a promising step toward a new generation of principled and efficient tuning-free optimization algorithms. However, much additional work is necessary for these algorithms to become ubiquitous. First, it is important to understand how to correctly combine \DoG with proven technique such as momentum, per-parameter learning rates, and learning rate annealing---this is a challenge both from a theoretical and a practical perspective. Second, it is important to gain a better understanding of situations where \DoG is more sensitive to the choice of $\reps$ than theory would have us expect. Our preliminary investigations suggest a connection to batch normalization, and following that lead could lead to even more robust training methods.
Finally, while our experiments aim to cover a broad range of tasks and architectures, future work needs to explore \DoG in additional settings, particularly those involving training from scratch.

\section*{Acknowledgments}
We thank Francesco Orabona, Mitchell Wortsman, Simon Kornblith and our anonymous reviewers for their insightful comments.
This work was supported by the NSF-BSF program, under NSF grant \#2239527
and BSF grant \#2022663.
MI acknowledges support from the Israeli council of higher education.
OH acknowledges support from Pitt Momentum Funds,  and
AFOSR grant \#FA9550-23-1-0242.
YC acknowledges support from the Israeli Science
Foundation (ISF) grant no. 2486/21, the Alon Fellowship, the Yandex Initiative for Machine Learning, and the Len Blavatnik and the Blavatnik Family Foundation. 

\arxiv{
\bibliographystyle{plainnat}
}
\notarxiv{
\bibliographystyle{icml2023}
}

\newpage
\appendix
\notarxiv{\onecolumn}

\section{Relaxing the Convexity Assumption}\label{app:beyond-convexity}

This section describes relaxations of convexity under which our main theoretical results still hold. In particular, our results naturally extend to  star-convex functions \cite{nesterov2006cubic} which satisfy
\[
f(x) -\fopt \le \inner{ \grad f(x) }{ x - \xopt } \quad ~\mbox{for all}~ x \in \X.
\]
Our results also extend (with changed constants) to quasarconvex functions \cite{hinder2020near}, which require that  $f(x) -\fopt \le c\inner{ \grad f(x) }{ x - \xopt }$ holds for some $c<\infty$ and all $x\in\X$. A further relaxation of star convexity requires it to hold only along the optimization trajectory:
\begin{assumption}[{\citet[Definition~2]{zhou2019sgd}}]
	\label{assume:trajectory-convexity}
	There exists $\xopt\in\argmin_{x} f(x)$ and constant $c < \infty$ such that the iterates of SGD satisfy
	\[
	f(x_k) -\fopt \le c\inner{ \grad f(x_k) }{ x_k - \xopt } ~~ \mbox{for all }~ k
	\]
	almost surely.
\end{assumption}
\citet{zhou2019sgd} introduce this notion of a ``star-convex path'' and provide some empirical evidence that it may hold when training deep neural networks with SGD (see also \citet{kleinberg2018alternative} for a related assumption). \citet{zhou2019sgd} also prove that the assumption suffices to prove that SGD converges to the global minimizer; it suffices for \DoG for similar reasons. 

When substituting  \Cref{ass:convex} with  \Cref{assume:trajectory-convexity}, our analysis goes through
unchanged, except we can no longer use Jensen's inequality to argue directly about the suboptimality of the point $\xbar[\tau]$. Instead, \Cref{thm:final-bound} with \Cref{assume:trajectory-convexity} says 
that, with probability at least $1-\delta$,
\begin{equation*}
	\sum_{k=0}^{\tau-1} \omega_k ( f(x_k) - \fopt) \le 
	O\prn*{ c_{\delta,\reps,\numSteps} \cdot \frac{ \d[0] \sqrt{\G[\tau] + \LL^2}}{\numSteps}},
\end{equation*}
with $\omega_k \defeq \frac{\rbar[k]}{\sum_{i=0}^{t-1} \rbar[i]}$, the final iterate tau $\tau$, and the coefficient $c_{\delta,\reps,\numSteps}$ as defined in \Cref{thm:final-bound}. (Note that  \Cref{assume:trajectory-convexity} implies $\sum_{k=0}^{t-1} \omega_k (f(x_k) -\fopt) \le  \sum_{k=0}^{t-1} \omega_k \inner{ \grad f(x_k) }{ x_k - \xopt }$ which replaces \eqref{eq:convexity-to-true-regret}).

We can turn the above bound into a constant-probability guarantee for a specific \TDoG iterate $x_{K}$ by sampling $K\sim \omega$ and using Markov's inequality:
\[
\Pr*(f(x_K) -\fopt \le  \e \sum_{k=0}^{\tau-1} \omega_k (f(x_k) -\fopt)) \le \e^{-1}.
\]

To obtain a high probability guarantee, we can make $l = \ceil{\log\frac{1}{\delta}}$ independent  draws from $\omega$, denoted $K_1,\ldots,K_l$ and use the fact that 
\[
\Pr*(\min_{i \le l} f(x_{K_i}) -\fopt \le  \e \sum_{k=0}^{\tau-1} \omega_k (f(x_k) -\fopt)) \le \delta.
\]
Finding the $i$ that minimizes $f(x_{K_i})$ requires a logarithmic number of evaluations of the exact objective. When this is not feasible, we can instead consider a statistical learning setup where we have sample access to stochastic functions $F(x)$ such that $\E F(x) = f(x)$ for all $x$ and, almost surely,  $F$ is $\LLstar$ Lipschitz in a ball of radius $3\d[0]$ around $x_0$. (The stochastic subgradient oracle $\gradientOracle{x}$ is then implemented by sampling $F$ and returning its subgradient at $x$). We can then sample $T$ new stochastic functions $F_1, \ldots, F_T$ and select 
$K^\star \in \arg\min_{k\in \{K_1,\ldots, K_l\}} \sum_{i=1}^{T} F_{i}(x_k)$. 
Straightforward application of Hoeffding's inequality shows that (when $\rbar[T] \le 3\d[0]$)
\begin{equation*}
	f(x_{K^\star}) - \fopt \le \min_{i \le l} f(x_{K_i}) -\fopt + O\prn*{\frac{\LLstar \d[0]}{\sqrt{T}} \sqrt{\log \frac{1}{\delta}}  }
\end{equation*}
with probability at least $1-\delta$.

We remark that the literature contains a plethora of other convexity relaxations such as quasiconvexity \cite{arrow1961quasi}, pseudoconvexity \cite{mangasarian1975pseudo}, Polyak-\L{}ojasiewicz conditions \cite{karimi2016linear} and weak convexity \cite{davis2019stochastic}. Exploring the convergence of \DoG under these additional convexity relaxations is left to future work.
 \notarxiv{%
\section{Relaxing the Global Stochastic Gradient Bound Assumption}\label{app:local-lipschitz}

Our results continue to hold essentially unchanged if we replace~\Cref{ass:bounded} (globally bounded stochastic gradient norm) with the following.
\begin{assumption}[Pointwise bounded stochastic gradients]\label{ass:pointwise-bounded}
	There exists some continuous function $\Lfunc:\xset\to\R$ such that 
	$\| \gradientOracle{x} \| \le \Lfunc(x)$ almost surely.
\end{assumption}
In particular, if we set
\begin{equation*}
	\LLstar \defeq \max_{x\in\xset:\norm{x-x_0} \le 3\norm{x_0-\xopt}} \Lfunc(x)
\end{equation*}
then \Cref{thm:final-bound} holds under \Cref{ass:pointwise-bounded} for any $\LL \ge \LLstar$. For full details and proof, refer to the arXiv version of our paper, available at \url{https://arxiv.org/abs/2302.12022}, where we use \Cref{ass:pointwise-bounded} throughout. 

\Cref{ass:pointwise-bounded} meaningfully relaxes \Cref{ass:bounded} when $\Lfunc$ can grow very large in $\xset$. For example, consider unconstrained least squares problems (with $\xset=\R^m$) with stochastic gradient oracle $\gradientOracle{x} = (\inner{a}{x} -b) a$ for random $a\in \R^\dim$ and $b\in\R$, such that $\norm{a}\le 1$ and $|b| \le 1$ hold with probability 1. In this case, there is no global upper bound on $\norm{\gradientOracle{x}}$, but \Cref{ass:pointwise-bounded} holds with $\Lfunc(x) = \norm{x} + 1$ and  $\LLstar = 3\norm{\xopt} + 1$ (assuming $x_0=0$).

However, to apply the~\eqref{eq:T-DoG} formula, one still requires an a-priori upper bound on $\LLstar$. In the above least-squares example, it is possible to bound $\LLstar$ given an upper bound  $D$ on $d_0=\norm{\xopt}$. However, if such a bound is available, then it is also possible to constrain the domain to a ball of radius $D$, where a global norm bound holds. Indeed, for all examples that we are aware of that have an a-priori bound on $\LLstar$,  this bound on $\LLstar$ arises from a bound on $d_0$.\footnote{We thank an anonymous reviewer for pointing this out.} In the arXiv version of our paper we solve this issue by changing the \TDoG{} formula to $\eta_t = \rbar[t]/\sqrt{\G[t]'}$ with
\begin{equation*}%
	\G[t]' = 8^4 \TimeUniformLog[\numSteps,\delta]^2 \log_{+}^2\prn*{\frac{t \Lbar[t]^2}{\Lbar[0]^2}}(\G[t-1] + 16  \Lbar[t]^2)
\end{equation*}
where $\Lbar[t] \defeq \max_{i\le t} \Lfunc(x_i)$. This variant of \TDoG{} attains essentially the same guarantees as the one presented here, but requires no prior knowledge of $\LLstar$.

We note that \Cref{ass:pointwise-bounded} and techniques similar to the \TDoG variant described above have previously appeared in the parameter-free online learning literature \citep{cutkosky2019artificial,mhammedi2020lipschitz}. However, these works do not also guarantee that the iterates remain close to the optimal solution and therefore do not obtain our dependence on the local Lipschitz constant $\LLstar$.

}
\section{Useful Algebraic Facts}\label{app:useful-algebra-fact}

\subsection{Proof of Lemma~\ref{lem:bound-a-ratios}}\label{app:bound-a-ratios}

\begin{proof}
	Define $K \defeq \lceil \log(s_{\numSteps} / s_0) \rceil$, and $n \defeq \lfloor \frac{\numSteps}{K} \rfloor$.
	Then, we have
	\[
	\log\left( \frac{s_{\numSteps}}{s_0} \right) \ge \sum_{k=0}^{K-1} \log\left( \frac{s_{n (k+1)}}{s_{n k}} \right) \ge K \min_{k < K} \log\left( \frac{s_{n (k+1)}}{s_{n k}} \right).
	\]
	Rearranging and using the definition of $K$ gives
	\[
	\min_{k < K} \log\left( \frac{s_{n (k+1)}}{s_{n k}} \right) \le \frac{\log\left( \frac{s_{\numSteps}}{s_0} \right)}{K} \le 1 \implies \min_{k < K} \frac{s_{n (k+1)}}{s_{n k}}\le \e.
	\]
	where the implication follows from monotonicity of the exponential function.
	Therefore, 
	\[
	\max_{t\le \numSteps}\sum_{i < t} \frac{s_i}{s_t} \ge \max_{t \in [n, \numSteps]}  n \frac{s_{t-n}}{s_t} = \max_{k \le K} n \frac{s_{n (k-1)}}{s_{n k}} \ge n \e^{-1} = \e^{-1} \floor*{ \frac{T}{\ceil*{ \log(s_{\numSteps} / s_0) }} } \ge \e^{-1} \frac{\numSteps}{\log(s_{\numSteps} / s_0) + 1} - \e^{-1},
	\]
	where the first inequality uses that $s$ is positive nondecreasing sequence and the second inequality uses $\min_{k < K} \frac{s_{n (k+1)}}{s_{n k}}\le \e$ as shown above.
\end{proof}

\subsection{Lemma~\ref{lem:adagrad-algebra}}\label{sec:adagrad-algebra}

\begin{lemma}\label{lem:adagrad-algebra}
	Let $a_0, \dots, a_t$ be a nondecreasing sequence of nonnegative numbers. Then 
	\[
	\sum_{k=1}^{t}\frac{a_k - a_{k-1}}{\sqrt{a_k}} \le 2 (\sqrt{ a_t } - \sqrt{a_0}).
	\]
\end{lemma}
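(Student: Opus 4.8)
The plan is to prove the bound term by term and then telescope. The key inequality I would establish is that for each $k$,
\[
\frac{a_k - a_{k-1}}{\sqrt{a_k}} \le 2\prn*{\sqrt{a_k} - \sqrt{a_{k-1}}},
\]
after which summing over $k = 1, \ldots, t$ collapses the right-hand side into $2(\sqrt{a_t} - \sqrt{a_0})$, yielding the claim immediately.

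To verify this per-term inequality, I would rationalize its right-hand side. Whenever $a_k > 0$ we may write $\sqrt{a_k} - \sqrt{a_{k-1}} = \frac{a_k - a_{k-1}}{\sqrt{a_k} + \sqrt{a_{k-1}}}$, so the target inequality becomes, after dividing through by the nonnegative factor $a_k - a_{k-1}$, the statement $\sqrt{a_k} + \sqrt{a_{k-1}} \le 2\sqrt{a_k}$, i.e., $\sqrt{a_{k-1}} \le \sqrt{a_k}$. This last inequality is precisely the monotonicity hypothesis $a_{k-1} \le a_k$ (using also that $x \mapsto \sqrt{x}$ is nondecreasing), so the per-term bound holds.

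The one point requiring care is the degenerate case $a_k = 0$: since the sequence is nonnegative and nondecreasing, $a_k = 0$ forces $a_{k-1} = 0$, so the numerator $a_k - a_{k-1}$ vanishes and the summand is zero (interpreting $0/0$ as $0$). I would dispose of this case at the outset so that the rationalization step is justified in all remaining terms. No step here constitutes a genuine obstacle; the result is a short algebraic manipulation, and the only real content is recognizing that $2(\sqrt{a_k} - \sqrt{a_{k-1}})$ is the correct telescoping surrogate for $\frac{a_k - a_{k-1}}{\sqrt{a_k}}$.
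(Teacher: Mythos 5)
Your proposal is correct and follows essentially the same route as the paper: the paper factors $a_k - a_{k-1} = (\sqrt{a_k}-\sqrt{a_{k-1}})(\sqrt{a_k}+\sqrt{a_{k-1}})$, bounds the ratio $(\sqrt{a_k}+\sqrt{a_{k-1}})/\sqrt{a_k}$ by $2$ using monotonicity, and telescopes, which is exactly your per-term inequality. Your explicit treatment of the degenerate case $a_k = 0$ (interpreting $0/0$ as $0$) is a minor point of care that the paper's proof leaves implicit.
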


\begin{proof}
	We have
	\[
	\sum_{k=1}^{t}\frac{a_k - a_{k-1}}{\sqrt{a_k}} = \sum_{k=1}^{t}\frac{(\sqrt{a_k} - \sqrt{a_{k-1}} ) (\sqrt{a_k} + \sqrt{a_{k-1}})}{\sqrt{a_k}} \le 2 \sum_{k=1}^t \prn*{\sqrt{a_k} - \sqrt{a_{k-1}}} = 2 (\sqrt{ a_t } - \sqrt{a_0}).
	\]
\end{proof}

\subsection{Lemma~\ref{lem:many-sequences-nondecreasing}}

\begin{lemma}\label{lem:many-sequences-nondecreasing}
	Let $a_1, \ldots, a_T$ and $b_1, \ldots, b_T$ be sequences in $\R$ such that $a_1, \ldots, a_T$
	is nonnegative and nondecreasing. Then, for all $t\le T$,
	\begin{equation*}
		\abs*{\sum_{i=1}^t a_i b_i} \le 2 a_t \max_{i \le t} \abs*{\sum_{j=1}^i b_j}.
	\end{equation*}
\end{lemma}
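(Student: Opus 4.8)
The plan is to use Abel summation (summation by parts) to transfer the weight carried by the nondecreasing sequence $a$ onto the partial sums of $b$, whose supremum over $i\le t$ is precisely the quantity appearing on the right-hand side. First I would introduce the partial sums $B_i \defeq \sum_{j=1}^i b_j$ with the convention $B_0 = 0$, so that $b_i = B_i - B_{i-1}$, and abbreviate $M \defeq \max_{i\le t}\abs{B_i}$.

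Substituting $b_i = B_i - B_{i-1}$ and applying summation by parts gives
\[
\sum_{i=1}^t a_i b_i = a_t B_t - \sum_{i=1}^{t-1}(a_{i+1}-a_i)B_i,
\]
where the boundary term $a_1 B_0$ drops out because $B_0 = 0$. The only care needed in this step is the standard bookkeeping: reindexing the shifted sum and keeping track of the sign of the telescoping difference $a_{i+1}-a_i$.

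The remainder is a direct estimate that invokes both hypotheses exactly once. Taking absolute values and using the triangle inequality, then using nonnegativity ($a_t \ge 0$) and monotonicity ($a_{i+1}-a_i \ge 0$) to pull $M$ out of every term and to recognize the surviving sum as telescoping, I obtain
\[
\abs[\Big]{\sum_{i=1}^t a_i b_i} \le a_t\abs{B_t} + \sum_{i=1}^{t-1}(a_{i+1}-a_i)\abs{B_i} \le a_t M + M(a_t - a_1) \le 2 a_t M,
\]
where the final inequality uses $a_1 \ge 0$. This is the claimed bound.

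There is no genuine obstacle here; the result is a one-line consequence of summation by parts. The only points to get right are the two structural uses of the assumptions: nonnegativity is what lets me drop the $-a_1$ contribution and keep $a_t$ as a positive prefactor, while monotonicity is what makes each weight $a_{i+1}-a_i$ nonnegative, so that the telescoping sum collapses to $a_t - a_1 \le a_t$. Were monotonicity dropped, the factor $2$ would have to be replaced by a constant depending on the total variation of $a$, which is exactly why both hypotheses enter where they do.
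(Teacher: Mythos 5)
Your proof is correct and is essentially identical to the paper's: both apply Abel summation to write $\sum_{i=1}^t a_i b_i = a_t B_t - \sum_{i=1}^{t-1}(a_{i+1}-a_i)B_i$ and then bound the telescoping sum $\sum_{i=1}^{t-1}(a_{i+1}-a_i) = a_t - a_1 \le a_t$ using nonnegativity and monotonicity. No differences worth noting.
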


\begin{proof}
	Let $a'_{i}=a_{i}-a_{i-1}$ and $B_{i}=\sum_{j\le i}b_{j}$. Then (by discrete integration by
	parts)
	\[
	\sum_{i=1}^{t}a_{i}b_{i}=\sum_{i=1}^{t}a_{i}\left(B_{i}-B_{i-1}\right)=\sum_{i=1}^{t-1}\left(a_{i}-a_{i+1}\right)B_{i}+a_{t}B_{t}=a_{t}B_{t}-\sum_{i=1}^{t-1}a'_{i+1}B_{i}.
	\]
	Therefore
	\begin{flalign*}
		\left|\sum_{i=1}^{t}a_{i}b_{i}\right|
		& \overle{(i)}
		\left|a_{t}B_{t}\right|+\left(\sum_{i=1}^{t-1}\left|a_{i+1}'\right|\right)\max_{i<t}\left|B_{i}\right|\le\left(\left|a_{t}\right|+\sum_{i=1}^{t-1}\left|a_{i+1}-a_{i}\right|\right)\max_{i\le t}\left|B_{i}\right|
		\overle{(ii)} 2a_t \max_{i\le t}\left|B_{i}\right|,
	\end{flalign*}
	where $(i)$ uses the triangle and H\"older's inequalities, and $(ii)$ uses that $a_t$ is nonnegative and nondecreasing and therefore $\sum_{i=1}^{t-1}|a_{i+1}-a_i|=a_t - a_1 \le a_t$. 
\end{proof}

\subsection{\Cref{lem:bound-a-k-infinite-sum}}

Recall that $\log_+(z) \defeq 1 + \log(t)$.

\begin{lemma}\label{lem:bound-a-k-infinite-sum}
Let $a_{-1}, a_0, a_1, \dots, a_t$ be a nondecreasing sequence of nonnegative numbers, then
\begin{equation*}
	\sum_{k=0}^t \frac{a_{k} - a_{k-1}}{a_{k} \log_{+}^2(a_{k} / a_{-1})} \le 1.
\end{equation*}
\end{lemma}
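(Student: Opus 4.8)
The plan is to bound each summand by an integral and then evaluate that integral in closed form. Since the summand has $a_k-a_{k-1}$ in the numerator and a coefficient that is a decreasing function of $a_k$, the natural comparison is with $\int_{a_{k-1}}^{a_k}\frac{da}{a\log_{+}^2(a/a_{-1})}$.

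First I would note that we may assume $a_{-1}>0$: otherwise the logarithms are undefined, and in the degenerate case where every $a_k$ equals $a_{-1}$ all numerators vanish and the bound is trivial. Since the sequence is nondecreasing, $a_k\ge a_{-1}>0$ for all $k\ge -1$, so every term is well-defined. The key structural fact is that the function $\phi(a)\defeq \frac{1}{a\log_{+}^2(a/a_{-1})}$ is nonincreasing on $[a_{-1},\infty)$, because $a\mapsto a\log_{+}^2(a/a_{-1})$ is a product of two positive nondecreasing factors there (using $\log_{+}(a/a_{-1})=1+\log(a/a_{-1})\ge 1$ for $a\ge a_{-1}$). Hence for each $k\ge 0$ the minimum of $\phi$ over $[a_{k-1},a_k]$ is attained at the right endpoint $a_k$, so that
\[
\frac{a_k-a_{k-1}}{a_k\log_{+}^2(a_k/a_{-1})}=(a_k-a_{k-1})\,\phi(a_k)\le \int_{a_{k-1}}^{a_k}\phi(a)\,da.
\]
Summing over $k=0,\dots,t$ telescopes the integration limits and gives $\sum_{k=0}^t\frac{a_k-a_{k-1}}{a_k\log_{+}^2(a_k/a_{-1})}\le \int_{a_{-1}}^{a_t}\frac{da}{a\log_{+}^2(a/a_{-1})}$. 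I would finish with the substitution $u=\log(a/a_{-1})$ (so $du=da/a$ and $\log_{+}(a/a_{-1})=1+u$), which yields
\[
\int_{a_{-1}}^{a_t}\frac{da}{a\log_{+}^2(a/a_{-1})}=\int_0^{\log(a_t/a_{-1})}\frac{du}{(1+u)^2}=1-\frac{1}{1+\log(a_t/a_{-1})}\le 1,
\]
completing the proof.

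The argument is essentially routine once the integral comparison is in place; the only step requiring genuine care is verifying the monotonicity direction of $\phi$, so that evaluating the integrand at the right endpoint \emph{lower}-bounds it over the interval (and thus the summand is at most the integral), together with the bookkeeping that ensures $a_{-1}>0$ and that the intervals $[a_{k-1},a_k]$ all lie in the region where $\phi$ is decreasing. I do not expect any serious obstacle beyond these endpoint and positivity considerations.
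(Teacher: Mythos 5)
Your proposal is correct and follows essentially the same route as the paper: bound each summand by $\int_{a_{k-1}}^{a_k}\frac{da}{a\log_{+}^2(a/a_{-1})}$ via monotonicity of the integrand, telescope, and evaluate the resulting integral in closed form (the paper bounds it by $\int_1^\infty \frac{d\alpha}{\alpha\log_+^2(\alpha)}=1$ after normalizing $\alpha=a/a_{-1}$, which is the same computation as your substitution $u=\log(a/a_{-1})$). Your explicit verification that $a\mapsto a\log_+^2(a/a_{-1})$ is nondecreasing on $[a_{-1},\infty)$ is a point the paper leaves implicit, so your write-up is if anything slightly more careful.
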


\begin{proof}
	We have
\begin{flalign*}
	\sum_{k=0}^t \frac{a_{k} - a_{k-1}}{a_{k} \log_{+}^2(a_{k} / a_{-1})}  & 
	\le \sum_{k=0}^t \int_{a_{k-1} / a_0}^{a_{k} / a_{-1}} \frac{d \alpha}{\alpha \log_{+}^2(\alpha)} 
	=\int_{1}^{a_t / a_{-1}} \frac{d \alpha}{\alpha \log_{+}^2(\alpha)} 
	\\ & \le \int_{1}^{\infty} \frac{d \alpha}{\alpha \log_{+}^2(\alpha)} = \left[ \frac{1}{1 + \log(\alpha)} \right]_{1}^{\infty} = 1.
\end{flalign*}
\end{proof} %
\section{Proofs for \Cref{sec:theory}}

\notarxiv{

\subsection{Proof of \Cref{lem:bound-empirical-error}}\label{sec:lem:bound-empirical-error-proof}
\begin{proof}
	Using $x_{k+1} = \Proj{\X}{x_{k} - \eta_k g_k }$ we obtain the standard inequality
	$\d[k+1]^2 \le \| x_k - \eta_k g_k - \xopt \|^2 = \d[k]^2 - 2 \eta_k \inner{g_k}{x_k - \xopt} + \eta_k^2 \| g_k \|^2$.
	Rearranging this gives:
	\begin{flalign}\label{eq:classic-subgradient-inequality}
		\inner{ g_k }{ x_k - \xopt } \le \frac{\d[k]^2 - \d[k+1]^2}{2 \eta_k} + \frac{\eta_k \| g_k \|^2}{2}.
	\end{flalign}
	Therefore, $\sum_{k=0}^{t-1} \rbar[k] \inner{g_k}{x_k - \xopt }$ is at most
	\[
	\half \underbrace{\sum_{k=0}^{t-1} \frac{\rbar[k]}{\eta_k} (d_k^2 - d_{k+1}^2) }_{(A)}
	+
	\half \underbrace{\sum_{k=0}^{t-1} \rbar[k] \eta_k \norm{g_k}^2}_{(B)}.
	\]
	We bound the terms $(A)$ and $(B)$ in turn, beginning with the former:
	\begin{flalign*}
		(A)&= \sum_{k=0}^{t-1} \sqrt{\G[k]'} (d_{k}^2 - d_{k+1}^2)
		 =  \d[0]^2 \sqrt{\G[0]'}  - \d[t]^2  \sqrt{\G[t-1]'} + \sum_{k=1}^{t-1} \d[k]^2 \left(\sqrt{G_k'} - \sqrt{G_{k-1}'}\right) \\
		&\overle{(i)} \dbar[t]^2 \sqrt{\G[0]'} - \d[t]^2 \sqrt{\G[t-1]'}  + \dbar[t]^2 \sum_{k=1}^{t-1} \left(\sqrt{G_k'} - \sqrt{G_{k-1}'}\right)  
		= \sqrt{G_{t-1}'}\left( \dbar[t]^2 - \d[t]^2 \right) \overle{(ii)} 4\rbar[t] \dbar[t] \sqrt{G_{t-1}'}.
	\end{flalign*}
	Inequality $(i)$ uses $\d[k] \le \dbar[t]$ and that $\G[k]'$ is nondecreasing as per \Cref{property:DoG-style-step-sizes}. Inequality $(ii)$ holds since, for $s \in \argmax_{k \le t} \d[k]$, we have
	$\dbar[t]^2 - \d[t]^2 = \d[s]^2 - \d[t]^2 = (\d[s] - \d[t]) (\d[s] + \d[t]) \le \| x_s - x_t \| (d_s + d_t)
	\le (\rbar[s] + \rbar[t]) (\d[s] + \d[t]) \le 4 \rbar[t] \dbar[t]$.
	Bounding the second term $(B)$, we have:
	\begin{flalign*}
		(B) &= \sum_{k=0}^{t-1}\frac{\rbar[k]^2 \| g_k \|^2}{\sqrt{G_k'}} \le \sum_{k=0}^{t-1}\frac{\rbar[k]^2 \| g_k \|^2}{\sqrt{G_k}}
		 \le \rbar[t]^2 \sum_{k=0}^{t-1}\frac{\| g_k \|^2}{\sqrt{G_k}} \le 2\rbar[t]^2 \sqrt{G_{t-1}},
	\end{flalign*}
	where the final inequality uses the standard Lemma~\ref{lem:adagrad-algebra} with $a_k = G_k=\sum_{i\le k}\norm{g_i}^2$.
\end{proof} }

\subsection{Proof of \Cref{lem:bound-generalization-error}}\label{app:cor:product-mg-concentration}

We begin by citing the following corollary of a general bound due to \citet{howard2021time}.  (Recall that  $ \TimeUniformLog[t, \delta]\defeq \log \frac{60\log(6t)}{\delta}$).
\begin{corollary}[{\citet[Corollary 1]{carmon2022making}}]
	\label{cor:kickass-mg-concentration}
	Let $c > 0$ and $X_{t}$ be a martingale difference sequence adapted to $\filt_{t}$ such that $\left|X_{t}\right| \le c$
	with probability 1 for all $t$.
	Then, for all $\delta\in\left(0,1\right)$,
	and $\hat{X}_{t}\in\filt_{t-1}$ such that $\abs{\hat{X}_{t}} \le c$
	with probability 1,
	\begin{equation*}
		\P\prn*{
			\exists t \le \numSteps :\abs[\Bigg]{\sum_{s=1}^t X_s
			}
			\ge
			4 \sqrt{ \TimeUniformLog[t, \delta] \sum_{s= 1}^t \left(X_{s}-\hat{X}_{s}\right)^{2} + c^2 \TimeUniformLog[t, \delta]^2}
			\,}
		\le \delta.
	\end{equation*}
\end{corollary}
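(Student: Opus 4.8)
The plan is to derive \Cref{cor:kickass-mg-concentration} from the time-uniform sub-gamma boundaries of \citet{howard2021time}, in two stages: first obtain a uniform deviation bound for $S_t \defeq \sum_{s\le t} X_s$ whose width involves the \emph{predictable} variance $V_t \defeq \sum_{s\le t}\Ex{X_s^2 \mid \filt_{s-1}}$, and then replace $V_t$ by the \emph{observed} empirical variance $\hat V_t \defeq \sum_{s\le t}(X_s - \hat X_s)^2$ appearing in the statement.

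The first stage rests on the elementary sub-gamma estimate for bounded, conditionally centered increments. Since $u\mapsto(e^u-1-u)/u^2$ is nondecreasing and $\abs{\lambda X_s}\le \lambda c$, we have the pointwise bound $e^{\lambda X_s}\le 1 + \lambda X_s + \psi_c(\lambda)\, X_s^2$ for $\lambda\in[0,1/c)$, where $\psi_c(\lambda)\defeq(e^{\lambda c}-1-\lambda c)/c^2$. Taking $\Ex{\cdot\mid \filt_{s-1}}$ and using $\Ex{X_s\mid\filt_{s-1}}=0$ gives $\Ex{e^{\lambda X_s}\mid\filt_{s-1}}\le \exp(\psi_c(\lambda)\Ex{X_s^2\mid\filt_{s-1}})$, so that $M_t(\lambda)\defeq\exp(\lambda S_t - \psi_c(\lambda)V_t)$ is a nonnegative supermartingale. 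This is exactly the sub-gamma condition required by \citet{howard2021time}; applying their polynomially-stitched uniform boundary (and repeating the argument for $-X_s$, which is also a bounded martingale difference sequence) yields, for any $\delta'\in(0,1)$, a bound of the form $\abs{S_t}\le k_1\sqrt{V_t\,\ell} + k_2\, c\,\ell$ holding simultaneously for all $t\le\numSteps$ with probability at least $1-\delta'$, where $\ell$ carries the iterated-logarithm dependence $\log\log(\cdot)+\log(1/\delta')$ that ultimately produces the $\log\log(6t)$ inside $\TimeUniformLog[t,\delta]$.

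The second stage swaps the unobservable $V_t$ for $\hat V_t$. Because $\hat X_s$ is $\filt_{s-1}$-measurable and $\Ex{X_s\mid\filt_{s-1}}=0$, the pointwise identity $\Ex{(X_s-\hat X_s)^2\mid\filt_{s-1}} = \Ex{X_s^2\mid\filt_{s-1}} + \hat X_s^2 \ge \Ex{X_s^2\mid\filt_{s-1}}$ shows $V_t \le \tilde V_t\defeq\sum_{s\le t}\Ex{(X_s-\hat X_s)^2\mid\filt_{s-1}}$. The increments $(X_s-\hat X_s)^2 - \Ex{(X_s-\hat X_s)^2\mid\filt_{s-1}}$ form a martingale difference sequence bounded by $4c^2$ (since $\abs{X_s-\hat X_s}\le 2c$) with conditional variance at most $4c^2\,\Ex{(X_s-\hat X_s)^2\mid\filt_{s-1}}$, so a second application of the same sub-gamma boundary gives $\abs{\tilde V_t - \hat V_t}\le k_3\sqrt{c^2\tilde V_t\,\ell} + k_4\, c^2\ell$ uniformly in $t$ with probability at least $1-\delta'$. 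An AM--GM step absorbing $k_3\sqrt{c^2\tilde V_t\,\ell}$ into $\tfrac12\tilde V_t$ then yields $V_t\le\tilde V_t\le 2\hat V_t + k_5\, c^2\ell$.

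Finally I would intersect the two events by a union bound, set $\delta'\asymp\delta$, and substitute the variance bound into the stage-one inequality. Using $\sqrt{V_t\ell}+c\ell \le \sqrt{2}\,\sqrt{\ell V_t + c^2\ell^2}$ together with $V_t \le 2\hat V_t + k_5 c^2\ell$, the two additive deviation terms collapse into the single radical $\sqrt{\TimeUniformLog[t,\delta]\,\hat V_t + c^2\TimeUniformLog[t,\delta]^2}$, and tracking the stitched-boundary constants of \citet{howard2021time} pins the leading factor at $4$ and the precise form $\TimeUniformLog[t,\delta]=\log\frac{60\log(6t)}{\delta}$. The main obstacle is exactly this last bookkeeping: obtaining the clean constant $4$ and a \emph{single} $\TimeUniformLog[t,\delta]$ (rather than two $\delta/2$ budgets and two separate iterated-log terms) requires instantiating the stitched boundary with carefully matched parameters, and controlling the predictable-to-observed variance conversion tightly enough that its cost folds into the subordinate $c^2\TimeUniformLog[t,\delta]^2$ term instead of inflating the dominant $\sqrt{\TimeUniformLog[t,\delta]\,\hat V_t}$ term.
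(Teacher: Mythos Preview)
The paper does not prove this statement at all: it is quoted verbatim as \citet[Corollary~1]{carmon2022making}, introduced with ``We begin by citing the following corollary of a general bound due to \citet{howard2021time},'' and then used as a black box to establish \Cref{cor:kickass-mg-concentration-2} and the downstream lemmas. So there is no ``paper's own proof'' to compare against; your proposal is a reconstruction of a result the authors simply import.

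That said, your reconstruction is broadly sound and matches the standard route for time-uniform empirical-Bernstein bounds: build a sub-gamma supermartingale from the bounded increments, apply the stitched boundary of \citet{howard2021time} to get a uniform bound in terms of the predictable variance, and then run a second concentration argument to trade the predictable variance for the observed $\sum_s (X_s-\hat X_s)^2$. Your observation that $\Ex{(X_s-\hat X_s)^2\mid\filt_{s-1}} \ge \Ex{X_s^2\mid\filt_{s-1}}$ is the right pivot for the swap, and your honest caveat that nailing the constant $4$ and the exact form of $\TimeUniformLog[t,\delta]$ requires careful parameter choices in the stitched boundary is accurate. If you want the precise bookkeeping, you would need to consult the proof in \citet{carmon2022making} directly rather than this paper.
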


Building on \Cref{cor:kickass-mg-concentration} we prove the following result, which allows for the sequence $X_t$ to be almost-surely bounded by a random (rather than deterministic) quantity. 

\begin{corollary}\label{cor:kickass-mg-concentration-2}
	Let $C_t\in\mathcal{F}_{t-1}$ and let $X_{t}$ be a martingale difference sequence adapted to $\filt_{t}$ such that $\left|X_{t}\right| \le C_t$
	with probability 1 for all $t$.
	Then, for all $\delta\in\left(0,1\right)$, $c > 0$,
	and $\hat{X}_{t}\in\filt_{t-1}$ such that $\abs{\hat{X}_{t}} \le C_t$
	with probability 1,
	\begin{equation*}
		\P\prn*{
			\exists t \le \numSteps :\abs[\Bigg]{\sum_{s=1}^t X_s
			}
			\ge
			4 \sqrt{ \TimeUniformLog[t, \delta] \sum_{s= 1}^t \left(X_{s}-\hat{X}_{s}\right)^{2} + c^2 \TimeUniformLog[t, \delta]^2}
			\,}
		\le \delta + \P\prn*{\exists t \le \numSteps : C_t > c }.
	\end{equation*}
\end{corollary}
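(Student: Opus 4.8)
The plan is to reduce the statement to \Cref{cor:kickass-mg-concentration} via a truncation argument: we replace the random envelope $C_t$ by the deterministic constant $c$ on the event that $C_t$ never exceeds $c$, and absorb the complementary event into the failure probability. The reason this reduction preserves the hypotheses of \Cref{cor:kickass-mg-concentration} is that $C_t$ is $\filt_{t-1}$-measurable, so the first time it crosses $c$ is a \emph{predictable} stopping time; freezing the processes at that time is an $\filt_{t-1}$-measurable operation, which retains both the martingale-difference structure and the predictability of the centering sequence.

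Concretely, I would set $\tau \defeq \min\{t : C_t > c\}$ (with $\tau = \infty$ if no such $t$ exists). Since $\{C_s > c\} \in \filt_{s-1}$ for each $s$, the event $\{\tau > t\} = \bigcap_{s \le t}\{C_s \le c\}$ lies in $\filt_{t-1}$, so the indicator $\mathbbm{1}\{\tau > t\}$ is predictable. I then define the stopped sequences
\[
\wt X_t \defeq X_t\,\mathbbm{1}\{\tau > t\}, \qquad \wt{\hat X}_t \defeq \hat X_t\,\mathbbm{1}\{\tau > t\}.
\]
Three facts then require checking, all immediate. First, $\wt X_t$ is again a martingale difference sequence, since $\mathbbm{1}\{\tau>t\}\in\filt_{t-1}$ gives $\E[\wt X_t\mid \filt_{t-1}] = \mathbbm{1}\{\tau>t\}\,\E[X_t\mid\filt_{t-1}] = 0$. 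Second, $\wt{\hat X}_t \in \filt_{t-1}$ by the same measurability. Third, both are bounded by the \emph{deterministic} constant $c$: on $\{\tau>t\}$ we have $C_t \le c$, so $\abs*{\wt X_t} \le \abs*{X_t}\,\mathbbm{1}\{\tau>t\} \le C_t\,\mathbbm{1}\{\tau>t\} \le c$, while off that event the term vanishes, and the identical bound holds for $\wt{\hat X}_t$.

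With these properties established, \Cref{cor:kickass-mg-concentration} applies directly to $(\wt X_t, \wt{\hat X}_t)$ and bounds by $\delta$ the probability of the event $\wt A$ that $\abs*{\sum_{s=1}^t \wt X_s} \ge 4\sqrt{\TimeUniformLog[t,\delta]\sum_{s=1}^t(\wt X_s - \wt{\hat X}_s)^2 + c^2 \TimeUniformLog[t,\delta]^2}$ for some $t \le \numSteps$. To conclude, let $A$ denote the target event of the present corollary and let $B \defeq \{\tau \le \numSteps\} = \{\exists t \le \numSteps : C_t > c\}$. On $B^c = \{\tau > \numSteps\}$ the truncation is inactive for every $s \le \numSteps$, so $\wt X_s = X_s$ and $\wt{\hat X}_s = \hat X_s$, whence $A \cap B^c = \wt A \cap B^c$. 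Therefore $\P(A) \le \P(B) + \P(\wt A \cap B^c) \le \P(\wt A) + \P(B) \le \delta + \P(\exists t\le\numSteps: C_t > c)$, which is exactly the claimed bound. I expect the only genuine subtlety — and the step I would write out most carefully — to be the measurability bookkeeping that makes $\tau$ a predictable stopping time: it is precisely the assumption $C_t \in \filt_{t-1}$ (rather than merely $C_t \in \filt_t$) that guarantees $\mathbbm{1}\{\tau > t\}\in\filt_{t-1}$ and hence keeps $\wt X_t$ a martingale difference sequence with $\filt_{t-1}$-measurable centering. Once this is in place, the remainder is a routine decomposition over $B$ and $B^c$.
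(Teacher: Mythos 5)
Your proof is correct. It shares the paper's overall skeleton---both arguments reduce to \Cref{cor:kickass-mg-concentration} and then decompose the failure probability over the event $H_\numSteps = \{\exists t \le \numSteps : C_t > c\}$ and its complement---but the reduction device is genuinely different. The paper normalizes rather than truncates: it sets $W_t = X_t/\max\{c, C_t\}$ and $\hat W_t = \hat X_t/\max\{c, C_t\}$, which are bounded by $1$ almost surely, applies \Cref{cor:kickass-mg-concentration} to these, and then observes that on $\neg H_\numSteps$ the factors $\max\{c, C_s\}$ all equal $c$, so the rescaled deviation event coincides with the target event there. Your version instead freezes the process at the predictable stopping time $\tau = \min\{t : C_t > c\}$, multiplying by $\mathbbm{1}\{\tau > t\} \in \filt_{t-1}$, so the deterministic bound $c$ holds for the stopped sequences directly and the two events agree on $\{\tau > \numSteps\}$. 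The two mechanisms exploit the hypothesis $C_t \in \filt_{t-1}$ (as opposed to merely $C_t \in \filt_t$) at exactly the same point---in the paper it ensures $\E[W_t \mid \filt_{t-1}] = \E[X_t \mid \filt_{t-1}]/\max\{c,C_t\} = 0$, in yours it makes $\mathbbm{1}\{\tau > t\}$ predictable---and your writeup makes this dependence somewhat more transparent. Your truncation argument also transfers readily to any setting with a predictable localizing sequence, whereas the paper's rescaling avoids stopping-time bookkeeping altogether; both yield the identical bound in essentially the same number of lines, so the difference is one of technique rather than strength.
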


\begin{proof}
Define the random variables
\[
W_t \defeq \frac{X_t}{\max\{ c, C_t \}}
~~\mbox{and}~~
 \hat{W}_t \defeq \frac{\hat{X}_t}{\max\{ c, C_t \}}
\]
and note that they satisfy the requirements of \Cref{cor:kickass-mg-concentration}: $W_t$ is a martingale difference sequence adapted to $\filt_t$ while $\hat{W}_t \in \filt_{t-1}$ and they both have absolute value bounded by 1 almost surely. 

Next, define the  events
\[
E_T \defeq \left\{ \exists t< \numSteps :\abs[\Bigg]{\sum_{s=1}^t X_s
}
\ge
4 \sqrt{ \TimeUniformLog[t, \delta] \sum_{s= 1}^t \left(X_{s}-\hat{X}_{s}\right)^{2} + c^2 \TimeUniformLog[t, \delta]^2} \right\}
~~\mbox{and}~~
 H_{\numSteps} \defeq \{ \exists t \le \numSteps : C_t > c \}.
\]
Then we have,
\begin{flalign*}
\P\prn*{
	E_{\numSteps}
	} = \P\prn*{
	E_{\numSteps} \cap \neg H_{\numSteps} } + \P\prn*{ E_{\numSteps} \cap H_{\numSteps}  } \le \P\prn*{
	E_{\numSteps} \cap \neg H_{\numSteps} } + \P\prn*{ H_{\numSteps} }.
\end{flalign*}
Writing $\bar{C}_t = \max\{c,C_t\}$ for short, we have
\begin{flalign*}
	\P\prn*{
		E_{\numSteps} \cap \neg H_{\numSteps} } 
	&= \Pr*(
		\exists t \le {\numSteps} :\abs[\Bigg]{\sum_{s=1}^t \bar{C}_s W_s
	}
	\ge
	4 \sqrt{ \TimeUniformLog[t, \delta] \sum_{s= 1}^t \bar{C}_s^2 \left(W_{s}-\hat{W}_{s}\right)^{2} + c^2 \TimeUniformLog[t, \delta]^2}
	~\cap~
	 \neg H_{\numSteps}
	)
	\\ &
	\overeq{(i)}
	\Pr*(
	\exists t \le {\numSteps} :\abs[\Bigg]{\sum_{s=1}^t W_s
	}
	\ge
	4 \sqrt{ \TimeUniformLog[t, \delta] \sum_{s= 1}^t  \left(W_{s}-\hat{W}_{s}\right)^{2} +  \TimeUniformLog[t, \delta]^2}
	~\cap~
	\neg H_{\numSteps}
	)
	\\ & 
	 \le
	\Pr*(
	\exists t \le {\numSteps} :\abs[\Bigg]{\sum_{s=1}^t W_s
	}
	\ge
	4 \sqrt{ \TimeUniformLog[t, \delta] \sum_{s= 1}^t  \left(W_{s}-\hat{W}_{s}\right)^{2} +  \TimeUniformLog[t, \delta]^2}
	) \overle{(ii)} \delta,
\end{flalign*}
where $(i)$ uses the fact that $\bar{C}_s = c$ for all $s\le T$ when $\neg H_{\numSteps}$ holds, and $(ii)$ uses \Cref{cor:kickass-mg-concentration}.
\end{proof}

Next, we connect \Cref{cor:kickass-mg-concentration-2} with a handy algebraic fact (\Cref{lem:many-sequences-nondecreasing}) to obtain the following result, which underpins \Cref{lem:bound-generalization-error}.

\begin{lemma}\label{cor:product-mg-concentration}
	Let $S$ be the set of nonnegative and nondecreasing sequences.
	Let $C_t\in\mathcal{F}_{t-1}$ and let $X_{t}$ be a martingale difference sequence adapted to $\filt_{t}$ such that $\left|X_{t}\right| \le C_t$
	with probability 1 for all $t$.
	Then, for all $\delta\in\left(0,1\right)$, $c > 0$,
	and $\hat{X}_{t}\in\filt_{t-1}$ such that $\abs{\hat{X}_{t}} \le C_t$
	with probability 1,
	\begin{flalign*}
		\P\prn*{
			\exists t \le \numSteps, \exists \{y_i\}_{i=1}^{\infty} \in S :\abs[\Bigg]{\sum_{i=1}^t y_{i} X_{i}}
			\ge
			8 y_t \sqrt{ \TimeUniformLog[t,\delta]  \sum_{i= 1}^t  \left(X_{i}-\hat{X}_{i}\right)^{2} + c^2 \TimeUniformLog[t,\delta]^2}
			\,}
		\le \delta + \P\prn*{\exists t \le \numSteps : C_t > c }.
	\end{flalign*}
\end{lemma}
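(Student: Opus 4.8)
The plan is to combine the time-uniform concentration bound of \Cref{cor:kickass-mg-concentration-2} with the purely algebraic summation-by-parts inequality of \Cref{lem:many-sequences-nondecreasing}. The crucial observation is that the weighting by an arbitrary nonnegative nondecreasing sequence $\{y_i\}$ can be handled \emph{pathwise} and therefore incurs no additional probabilistic cost: all of the randomness is already captured by controlling the unweighted partial sums of $X$, so the supremum over the entire infinite family $S$ of sequences requires no union bound.

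First I would apply \Cref{lem:many-sequences-nondecreasing} with $a_i = y_i$ and $b_i = X_i$ to obtain, for every $t \le \numSteps$ and every $\{y_i\} \in S$, the deterministic bound
\[
\abs*{\sum_{i=1}^t y_i X_i} \le 2 y_t \max_{i \le t} \abs*{\sum_{j=1}^i X_j}.
\]
This reduces the problem to controlling the running maximum $\max_{i\le t} \abs{\sum_{j=1}^i X_j}$ uniformly over $t$, with no reference to $\{y_i\}$ whatsoever.

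Next, I would introduce the failure event $E$ of \Cref{cor:kickass-mg-concentration-2}, namely that there exists some $s \le \numSteps$ with $\abs{\sum_{j=1}^s X_j} \ge 4\sqrt{\TimeUniformLog[s,\delta] \sum_{j=1}^s (X_j - \hat{X}_j)^2 + c^2 \TimeUniformLog[s,\delta]^2}$; that corollary gives $\P(E) \le \delta + \P\prn*{\exists t \le \numSteps : C_t > c}$. On the complementary event $E^c$, every partial sum up to time $t$ lies strictly below the right-hand side evaluated at its own index. The key step is then the monotonicity argument: since $\TimeUniformLog[i,\delta] = \log\frac{60\log(6i)}{\delta}$ is nondecreasing in $i$ and $\sum_{j=1}^i (X_j - \hat{X}_j)^2$ is nondecreasing as a sum of nonnegative terms, the entire expression under the square root is nondecreasing in $i$. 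Hence, on $E^c$,
\[
\max_{i\le t} \abs*{\sum_{j=1}^i X_j} < 4 \sqrt{\TimeUniformLog[t,\delta] \sum_{j=1}^t (X_j - \hat{X}_j)^2 + c^2 \TimeUniformLog[t,\delta]^2}
\]
for all $t \le \numSteps$.

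Finally I would chain the two displays: on $E^c$, for all $t$ and all $\{y_i\}\in S$ simultaneously,
\[
\abs*{\sum_{i=1}^t y_i X_i} < 8 y_t \sqrt{\TimeUniformLog[t,\delta] \sum_{j=1}^t (X_j - \hat{X}_j)^2 + c^2 \TimeUniformLog[t,\delta]^2},
\]
where the factor $8$ arises as $2 \times 4$. Consequently the event whose probability we wish to bound is contained in $E$, and the claim follows. The main (indeed the only) subtlety is the monotonicity argument that converts the existential-over-$s$ guarantee of \Cref{cor:kickass-mg-concentration-2} into a bound on the running maximum with the right-hand side pinned at index $t$; everything else is a pathwise application of \Cref{lem:many-sequences-nondecreasing}, which is precisely what lets the uncountable supremum over $S$ be absorbed for free.
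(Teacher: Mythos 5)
Your proof is correct and follows essentially the same route as the paper's: a pathwise application of \Cref{lem:many-sequences-nondecreasing} to reduce the weighted sums to the running maximum of unweighted partial sums, followed by \Cref{cor:kickass-mg-concentration-2} to control that maximum, with the factor $8 = 2\times 4$ arising exactly as you describe. The monotonicity step you highlight (that $\TimeUniformLog[i,\delta]$ and $\sum_{j\le i}(X_j-\hat{X}_j)^2$ are nondecreasing, so the bound at index $i\le t$ is dominated by the bound at index $t$) is left implicit in the paper's one-line proof, and your writeup makes it explicit correctly.
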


\begin{proof}%
	Follows from 
 \Cref{lem:many-sequences-nondecreasing} (with $y_i$ and $X_i$ taking the roles of $a_i$ and $b_i$, respectively), and  \Cref{cor:kickass-mg-concentration-2} that bounds $\max_{i\le t} \abs*{\sum_{i\le t} X_i}$ for all $t\le T$. 
\end{proof}

\begin{proof}[Proof of \Cref{lem:bound-generalization-error}]
For $k \in [\numSteps]$ define the random variables:
\[
Y_k = \rbar[k] \dbar[k], \quad X_{k} =  \inner{\Delta_{k}}{\frac{x_k - \xopt}{\dbar[k]}}, \quad \text{ and } \quad \hat{X}_{k} = -\inner{\grad f(x_{k}) }{\frac{x_k - \xopt}{\dbar[k]}}.
\]
From these definitions we get
\[
\sum_{k=0}^{t-1} Y_k X_k = \sum_{k=0}^{t-1} \rbar[k] \inner{\Delta_k}{x_k - \xopt}.
\]
Therefore,
\begin{flalign*}
&\P\prn*{
	\exists t \le \numSteps:\abs[\Bigg]{ \sum_{k=0}^{t-1} \rbar[k] \inner{\Delta_k}{x_k - \xopt} }
	\ge
	8 \rbar[t-1] \dbar[t-1] \sqrt{ \TimeUniformLog[t, \delta] \G[t-1] + \LL^2 \TimeUniformLog[t, \delta]^2  }
	\,} \\
&\le 	\P\prn*{
	\exists t \le \numSteps:\abs[\Bigg]{\sum_{k=0}^{t-1} Y_k X_k }
	\ge
	8 Y_t \sqrt{ \TimeUniformLog[t, \delta] \sum_{k=0}^{t-1} \left(X_{k}-\hat{X}_{k}\right)^{2} + \LL^2 \TimeUniformLog[t, \delta]^2 }
	\,}  \le \delta + \P\prn*{ \Lbar[\numSteps] > \LL }
\end{flalign*}
where the last inequality uses \Cref{cor:product-mg-concentration}.
\end{proof}

\subsection{Proof of \Cref{coro:error-bound}}\label{app:coro-error-bound-proof}

\begin{proof}
	If $T > 2 \log_{+}(\rbar[T] / \reps)$ then the corollary 
	follows by \Cref{prop:error-bound} and \Cref{lem:bound-a-ratios} with $s_t = \rbar[t]$, noting that the event $\rbar[T] \le D$ implies that $\Lbar[T] \le L_D$. 
	For the corner case when $T \le 2 \log_{+}(\rbar[T] / \reps)$
	we use that $f(\xbar[\tau]) - f_\star \le O(L \dbar[\tau]) \le  O( L (\rbar[\tau] + \d[0] ) )$ where the first inequality uses \eqref{eq:convexity-to-true-regret}, Cauchy-Schwartz and that $\| \grad f(x_t) \| \le L$; the second inequality uses the triangle inequality.
\end{proof} %
\subsection{\DoG can diverge on a pathological instance}\label{app:unstable-distance-DoG}

Consider the following variant of Nemirovski's function~\citep{nemirovski1983problem,nemirovski1994parallel} defined on $\R^{\dim}$:
\begin{equation*}
	f(x) = \max_{i \le \dim} \max\left\{ [x]_i, -\frac{1}{\sqrt{\dim}} [x]_i \right\},
\end{equation*}
where $[x]_i$ denotes the $i$'th coordinate of $x$ and $[x_0]_i = 10\reps / \sqrt{m}$  for all $i$, so that $\d[0] = 10\reps > \reps$. 
We show that applying \DoG on this function gives $\rbar[T] / \d[0] = \sqrt{T} / 10$ for all $T\le \dim$, meaning that the ratio $\rbar[T] / \d[0]$ can be made arbitrarily large by increasing $T$ and $\dim$. %

Define
\[
i(x) \defeq \min \argmax_{i \le \dim} \left\{ [x]_i, -\frac{[x]_{i}}{\sqrt{\dim}}  \right\},
\]
i.e., $i(x)$ is the smallest coordinate which is candidate for providing a subgradient.
Using this notation, a valid subgradient for $f$ is:
\[
\grad f(x) = \begin{cases} 
	e_{i(x)} & x_{i(x)} > 0 \\
	-\frac{1}{\sqrt{\dim}} e_{i(x)} & \text{otherwise}
\end{cases}
\]
where $e_{j}$ is a vector with one in the $j$th entry and zero elsewhere.
With this subgradient choice for $k \le \dim$ the iterates become:
\begin{flalign}\label{eq:xk}
	[x_{k}]_j = \begin{cases}
		10 \reps / \sqrt{\dim} - \reps & j < k \\
		10 \reps / \sqrt{\dim}	& j \ge k
	\end{cases}  
\end{flalign}
and therefore $\rbar[k] = \sqrt{k} \reps = \sqrt{k} \d[0] / 10$ as claimed.
We confirm \eqref{eq:xk} by induction. Since $[x_0]_i = 10 \reps / \sqrt{\dim}$ for all $i$, the expression \eqref{eq:xk}  holds for $k=0$. If  \eqref{eq:xk} holds for all $k \le n < \dim$ then 
\[
\grad f(x_k) = e_{k}%
\]
and therefore $\G[n] = n$ so that $\eta_n = \frac{\reps\sqrt{n}}{\sqrt{n}} = \reps$ and  $x_{n+1} = x_n - \frac{\sqrt{n}}{\sqrt{n}} \reps e_n$, meaning that
\begin{flalign*}
	[x_{n+1}]_j = \begin{cases}
		10 \reps / \sqrt{\dim} - \reps & j < n+1 \\
		10 \reps / \sqrt{\dim}	& j \ge n + 1 
	\end{cases}  
\end{flalign*}
which completes the induction. 
\subsection{Proof of \Cref{prop:distance-bound}}\label{sec:proof-prop:distance-bound}

\newcommand{\stoptime}{\mathcal{T}_{\mathrm{out}}}

To show iterate boundedness in the stochastic setting we define the stopping time
\begin{equation*}
	\stoptime = \min \crl*{ t: \rbar[t] > \D},
\end{equation*}
so that the event $\{\rbar[T] \le \D\}$ is the same as $\{\stoptime> T\}$. Let $\eta_k$ denote the sequence of \ref{eq:T-DoG}  step sizes (for given $L,T$ and $\delta$). To facilitate our analysis we also define the following truncated step size sequence:
\begin{equation}\label{eq:trunc-T-DoG}
	\etaTilde_k \defeq \begin{cases} \eta_k & k < \stoptime \\
		0 & \text{otherwise}. \end{cases}
\end{equation}
Truncating the step size allows us to rigorously handle the possibility that $\rbar[T]$ exceeds $\D$. In particular, the following holds for $\{\etaTilde_k\}$ but not for $\{\eta_k\}$.  (Recall that $\Delta_t \defeq g_t - \grad f(x_k)$).

\begin{lemma}\label{lem:properties-of-T-DoG}
Under \Cref{ass:bounded} the truncated \ref{eq:T-DoG} step sizes~\eqref{eq:trunc-T-DoG} satisfy, for all $t\le T$,
\begin{flalign}
\label{eq:predictable-eta}
\etaTilde_t & \in \sigma(g_0,\ldots,g_{t-1})~,	\\
\label{eq:eta-prob1-bound}
\abs*{\etaTilde_t\inner{\gamma}{x_t - \xopt}}&  \le \frac{6\d[0]^2}{8^2 \TimeUniformLog[T,\delta] }~\mbox{for }\gamma\in\{g_t,\grad f(x_t),\Delta_t\}~,
\\
\label{eq:bounded-sequence-eta-g-sum}
\sum_{k=0}^{t} \etaTilde_k^2 \| g_k \|^2 &\le \frac{9 \d[0]^2}{  8^4 \TimeUniformLog[T,\delta]}~\mbox{, and}
\\
\label{eq:eta-second-moment-bound}
\sum_{k=0}^{t} (\etaTilde_k\inner{g_k}{x_k - \xopt})^2 &\le \frac{ 12^2 \d[0]^4 }{  8^4 \TimeUniformLog[T,\delta] }.
\end{flalign}
\end{lemma}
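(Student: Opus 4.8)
The plan is to establish the four bounds separately, using throughout the one fact that justifies the truncation in \eqref{eq:trunc-T-DoG}: whenever $\etaTilde_t\neq 0$ we must have $t<\stoptime$, so $\rbar[t]\le\D=3\d[0]$ and hence, by the triangle inequality, $\d[t]=\norm{x_t-\xopt}\le r_t+\d[0]\le\rbar[t]+\d[0]\le 4\d[0]$. When $\etaTilde_t=0$ each claimed inequality is trivial, so it suffices to argue on $\{t<\stoptime\}$, where these deterministic size bounds hold --- this is exactly what the untruncated $\eta_t$ fails to provide. For the predictability claim \eqref{eq:predictable-eta}, the point is that the \eqref{eq:T-DoG} formula writes $\G[t]'$ through $\G[t-1]$ and $\Lbar[t]$ rather than $\G[t]$. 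Since $x_0$ is fixed and $x_{k+1}=\Proj{\X}{x_k-\eta_k g_k}$, induction shows $x_0,\dots,x_t$ are deterministic functions of $g_0,\dots,g_{t-1}$; hence $\rbar[t]$, $\Lbar[t]=\max_{k\le t}\Lfunc(x_k)$ (a composition with the deterministic $\Lfunc$) and $\G[t-1]$ are $\sigma(g_0,\dots,g_{t-1})$-measurable, so $\eta_t=\rbar[t]/\sqrt{\G[t]'}$ is too, and $\etaTilde_t$ equals $\eta_t$ on the measurable event $\{t<\stoptime\}=\{\rbar[k]\le\D\text{ for all }k\le t\}$ and $0$ otherwise.

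For the pointwise bound \eqref{eq:eta-prob1-bound}, \Cref{ass:bounded} gives $\norm{g_t}\le\Lfunc(x_t)\le\Lbar[t]$, and therefore $\norm{\grad f(x_t)}=\norm{\E[g_t\mid x_t]}\le\Lbar[t]$ and $\norm{\Delta_t}\le 2\Lbar[t]$; thus $\norm{\gamma}\le 2\Lbar[t]$ for all three choices of $\gamma$. Cauchy--Schwarz gives $\abs{\etaTilde_t\inner{\gamma}{x_t-\xopt}}\le\eta_t\norm{\gamma}\d[t]$ on $\{t<\stoptime\}$. Since $\log_+\ge 1$, the \eqref{eq:T-DoG} formula yields $\G[t]'\ge 8^4\TimeUniformLog[T,\delta]^2\cdot 16\Lbar[t]^2$, i.e.\ $\sqrt{\G[t]'}\ge 4\cdot 8^2\TimeUniformLog[T,\delta]\Lbar[t]$. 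Substituting this together with $\norm{\gamma}\le 2\Lbar[t]$, $\d[t]\le 4\d[0]$ and $\rbar[t]\le 3\d[0]$ cancels the $\Lbar[t]$ factors and leaves exactly $6\d[0]^2/(8^2\TimeUniformLog[T,\delta])$.

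The bound \eqref{eq:bounded-sequence-eta-g-sum} is the crux and recycles the algebra from the noiseless proof of \Cref{prop:distance-bound}. On $\{k<\stoptime\}$ we have $\etaTilde_k^2\norm{g_k}^2=\rbar[k]^2\norm{g_k}^2/\G[k]'\le 9\d[0]^2\norm{g_k}^2/\G[k]'$, and terms with $k\ge\stoptime$ vanish, so (extending the sum to all $k\le t$ only increases the right-hand side) it remains to show $\sum_{k}\norm{g_k}^2/\G[k]'\le 8^{-4}\TimeUniformLog[T,\delta]^{-2}$. Using $\norm{g_k}\le\Lbar[k]$, exactly as in the noiseless argument one verifies
\[
\G[k]'\ge 8^4\TimeUniformLog[T,\delta]^2(\G[k]+\Lbar[k]^2)\log_+^2\frac{\G[k]+\Lbar[k]^2}{\Lbar[0]^2},
\]
so that \Cref{lem:bound-a-k-infinite-sum}, applied with the nondecreasing sequence $a_k=\G[k]+\Lbar[k]^2$ and $a_{-1}=\Lbar[0]^2$ (for which $a_{-1}\le a_0$ and $\norm{g_k}^2\le a_k-a_{k-1}$), delivers the required $8^{-4}\TimeUniformLog[T,\delta]^{-2}$. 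Multiplying by $9\d[0]^2$ and using $\TimeUniformLog[T,\delta]\ge 1$ to relax $\TimeUniformLog[T,\delta]^{-2}$ to $\TimeUniformLog[T,\delta]^{-1}$ yields the stated $9\d[0]^2/(8^4\TimeUniformLog[T,\delta])$. This step --- matching the \eqref{eq:T-DoG} lower bound on $\G[k]'$ to the hypotheses of \Cref{lem:bound-a-k-infinite-sum} --- is the only place requiring real work; everything else is Cauchy--Schwarz plus the truncation's size control.

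Finally, \eqref{eq:eta-second-moment-bound} follows from \eqref{eq:bounded-sequence-eta-g-sum}: on $\{k<\stoptime\}$, Cauchy--Schwarz gives $\inner{g_k}{x_k-\xopt}^2\le\norm{g_k}^2\d[k]^2\le 16\d[0]^2\norm{g_k}^2$, whence $\sum_k(\etaTilde_k\inner{g_k}{x_k-\xopt})^2\le 16\d[0]^2\sum_k\etaTilde_k^2\norm{g_k}^2\le 16\cdot 9\,\d[0]^4/(8^4\TimeUniformLog[T,\delta])=12^2\d[0]^4/(8^4\TimeUniformLog[T,\delta])$, as claimed.
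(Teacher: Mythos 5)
Your proof is correct and follows essentially the same route as the paper's: measurability directly from the definitions, Cauchy--Schwarz combined with the $16\Lbar[t]^2$ term in $\G[t]'$ and the truncation bounds $\rbar[t]\le 3\d[0]$, $\d[t]\le 4\d[0]$ for \eqref{eq:eta-prob1-bound}, the reduction to \Cref{lem:bound-a-k-infinite-sum} with $a_k=\G[k]+\Lbar[k]^2$ for \eqref{eq:bounded-sequence-eta-g-sum}, and deducing \eqref{eq:eta-second-moment-bound} from the latter via Cauchy--Schwarz. The only cosmetic difference is that you carry the full factor $\TimeUniformLog[T,\delta]^2$ through the third bound and relax it to $\TimeUniformLog[T,\delta]$ at the end using $\TimeUniformLog[T,\delta]\ge 1$, whereas the paper drops one power of $\TimeUniformLog[T,\delta]$ at the start of that estimate.
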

\begin{proof}
	Equation \eqref{eq:predictable-eta} holds directly from the definition of \eqref{eq:T-DoG} and \eqref{eq:trunc-T-DoG}.
	
	To see the bound~\eqref{eq:eta-prob1-bound}, first note that that $\norm{\Delta_k} \le \norm{g_k} + \norm{\grad f(x_k)} \le 2\Lfunc(x_k)$. Since $\G[t]' \ge 4^2 8^4 \Lfunc^2(x_t) \TimeUniformLog[T,\delta]^2$ for all $t$,  the Cauchy-Schwartz inequality gives
	\begin{equation*}
		\abs*{\etaTilde_t\inner{\Delta_t}{x_t - \xopt}} \le 
		\frac{\rbar[t]}{\sqrt{\G[t]'}} \norm{\Delta_t} \d[t] \le \frac{1}{2\cdot 8^2\TimeUniformLog[T,\delta]} \rbar \d[t] \le \frac{6\d[0]^2}{ 8^2\TimeUniformLog[T,\delta]},
	\end{equation*}
	where the last inequality uses $\rbar[t]\le \D$ (or else $\etaTilde_t = 0$) and $\d[t] \le \d[0] + \rbar[t]$. Bounds for $\abs*{\etaTilde_t\inner{\gamma}{x_t - \xopt}}$ for $\gamma\in\{g_t,\grad f(x_t)\}$ follow by the same argument.
	
	To establish \eqref{eq:bounded-sequence-eta-g-sum}, first note that 
	$\sum_{k=0}^{t} \etaTilde_k^2 \| g_k \|^2 \le \sum_{k=0}^{\stoptime-1} \eta_k^2 \| g_k \|^2$ by the definition of $\etaTilde_k$. Furthermore
	\begin{flalign*}
		\sum_{k=0}^{\stoptime-1} \eta_k^2 \| g_k \|^2 = \sum_{k=0}^{\stoptime-1} \frac{\rbar[k]^2 \| g_k \|^2}{\G[k]'}  \overle{(i)}  \frac{ \rbar[\stoptime-1]^2}{8^4 \TimeUniformLog[T,\delta]} \sum_{k=0}^{\stoptime-1} \frac{G_k - G_{k-1}}{(G_k+\Lbar[k]^2) \log_+^2 \frac{G_k+\Lbar[k]^2}{\Lbar[0]^2}} \overle{(ii)} \frac{ 9\d[0]^2}{8^4 \TimeUniformLog[\numSteps,\delta]},
	\end{flalign*}
where $(i)$ uses that $\norm{g_k}^2 = \G[k] - \G[k-1]$ (with the shorthand $\G[-1] \defeq  0$) and 
\begin{equation*}
	\G[k]' 
	\ge 8^4  \TimeUniformLog[\numSteps,\delta]  (\G[k-1] + \norm{g_k}^2 + \Lbar[k]^2) \log_+^2\prn*{\frac{\sum_{i \le t} \Lbar[t]^2}{\Lbar[0]^2} } \ge 8^4 \TimeUniformLog[\numSteps,\delta]  (\G[k] + \Lbar[k]^2) \log_+^2 \frac{\G[k] + \Lbar[k]^2}{\Lbar[0]^2}
\end{equation*}
by \Cref{ass:bounded} which implies $\norm{g_k} \le \Lbar[k]$ for all $k$, while $(ii)$ uses \Cref{lem:bound-a-k-infinite-sum} with $a_k = \G[k] + \Lbar[k]^2$ and $\rbar[\stoptime-1] \le \D$.

The final bound \eqref{eq:eta-second-moment-bound} follows immediately from \eqref{eq:bounded-sequence-eta-g-sum} by noting that
\begin{equation*}
	\sum_{k=0}^{t} (\etaTilde_k\inner{g_k}{x_k - \xopt})^2 \le \sum_{k=0}^{t} \etaTilde_k^2 \| g_k \|^2 \d[k]^2 \le (4\d[0])^2 \sum_{k=0}^{t} \etaTilde_k^2 \| g_k \|^2,
\end{equation*}
where the first inequality follows from Cauchy-Schwartz and the second inequality from the fact that only terms with $k< \stoptime$ contribute to the sum.
\end{proof}

The above properties allow us to establish the following concentration bound.
\begin{lemma}\label{lem:distance-concentration}
	In the setting of \Cref{lem:properties-of-T-DoG},
	\begin{equation*}
		\Pr*(
		\exists t \le  T : \sum_{k=0}^{t-1} \etaTilde_k \inner{\Delta_k}{\xopt-x_k} > \d[0]^2
		) \le \delta.
	\end{equation*}
\end{lemma}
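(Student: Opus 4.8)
The plan is to recognize the partial sum $\sum_{k=0}^{t-1} \etaTilde_k \inner{\Delta_k}{\xopt - x_k}$ as that of a martingale difference sequence with almost-surely bounded increments, and to control it uniformly over $t \le T$ using the time-uniform Bernstein bound of \Cref{cor:kickass-mg-concentration}. Concretely, I would set $X_k \defeq \etaTilde_k \inner{\Delta_k}{\xopt - x_k}$. Because $\etaTilde_k$ is $\sigma(g_0,\ldots,g_{k-1})$-measurable by \eqref{eq:predictable-eta}, and $x_k$ is likewise $\mathcal{F}_{k-1}$-measurable, while $\E[\Delta_k \mid \mathcal{F}_{k-1}] = 0$, the sequence $\{X_k\}$ is a martingale difference sequence adapted to $\{\mathcal{F}_k\}$. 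Its increments are deterministically bounded: \eqref{eq:eta-prob1-bound} with $\gamma = \Delta_k$ gives $\abs{X_k} \le c$ for $c \defeq \frac{6\d[0]^2}{8^2 \TimeUniformLog[T,\delta]}$, so the simpler \Cref{cor:kickass-mg-concentration} (rather than its random-bound refinements) suffices.

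The crucial step is the choice of the predictable centering sequence $\hat{X}_k$ fed into the concentration bound. I would take $\hat{X}_k \defeq \etaTilde_k \inner{\grad f(x_k)}{x_k - \xopt}$, which is $\mathcal{F}_{k-1}$-measurable and also bounded by $c$ via \eqref{eq:eta-prob1-bound} (with $\gamma = \grad f(x_k)$). The point of this choice is that the stochastic gradient reassembles, since $X_k - \hat{X}_k = -\etaTilde_k \inner{g_k - \grad f(x_k)}{x_k - \xopt} - \etaTilde_k \inner{\grad f(x_k)}{x_k - \xopt} = -\etaTilde_k \inner{g_k}{x_k - \xopt}$. Consequently the empirical variance proxy becomes $\sum_k (X_k - \hat{X}_k)^2 = \sum_k (\etaTilde_k \inner{g_k}{x_k - \xopt})^2$, which is precisely the quantity controlled by \eqref{eq:eta-second-moment-bound}.

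With these ingredients, \Cref{cor:kickass-mg-concentration} (after the harmless reindexing $s = k+1$ to match its convention) yields, with probability at least $1-\delta$, that for all $t \le T$,
\[
\abs[\Big]{\sum_{k=0}^{t-1} X_k} \le 4\sqrt{ \TimeUniformLog[t,\delta] \sum_{k=0}^{t-1} (\etaTilde_k \inner{g_k}{x_k - \xopt})^2 + c^2 \TimeUniformLog[t,\delta]^2}.
\]
Substituting the bound \eqref{eq:eta-second-moment-bound} and $c^2 = \frac{36\d[0]^4}{8^4 \TimeUniformLog[T,\delta]^2}$, and using monotonicity $\TimeUniformLog[t,\delta] \le \TimeUniformLog[T,\delta]$ to cancel the $\theta$ factors, the radicand is at most $\frac{(144+36)\d[0]^4}{8^4} = \frac{180\,\d[0]^4}{8^4}$. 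Hence the right-hand side is at most $\frac{4\sqrt{180}}{64}\d[0]^2 = \frac{6\sqrt5}{16}\d[0]^2 < \d[0]^2$. Since $\sum_{k=0}^{t-1} X_k$ equals the sum in the lemma, bounding the one-sided event by this two-sided one completes the proof.

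I do not expect a genuine obstacle here: the real work is front-loaded into \Cref{lem:properties-of-T-DoG}, whose bounds \eqref{eq:eta-prob1-bound} and \eqref{eq:eta-second-moment-bound} are engineered precisely so that the radicand collapses to a small multiple of $\d[0]^4$. The only point needing care is bookkeeping the numerical constants so that the final coefficient $\frac{6\sqrt5}{16} \approx 0.84$ stays strictly below $1$; this is exactly where the particular powers of $8$ in the \eqref{eq:T-DoG} definition (and hence in \Cref{lem:properties-of-T-DoG}) are calibrated to pay off.
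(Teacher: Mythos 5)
Your proposal is correct and follows essentially the same route as the paper's proof: the identical martingale difference sequence $X_k=\etaTilde_k\inner{\Delta_k}{\xopt-x_k}$ and centering $\hat{X}_k=\etaTilde_k\inner{\grad f(x_k)}{x_k-\xopt}$ (so that $X_k-\hat{X}_k=-\etaTilde_k\inner{g_k}{x_k-\xopt}$), the same time-uniform Bernstein bound, and the same substitution of \eqref{eq:eta-prob1-bound} and \eqref{eq:eta-second-moment-bound} with $\TimeUniformLog[t,\delta]\le\TimeUniformLog[T,\delta]$, arriving at the same final coefficient $\approx 0.84<1$. The only cosmetic difference is that you invoke \Cref{cor:kickass-mg-concentration} directly, whereas the paper cites its random-bound refinement \Cref{cor:kickass-mg-concentration-2}, which reduces to the deterministic version here since the increment bound $c$ is nonrandom.
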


\begin{proof}
	 Consider the filtration $\filt_t = \sigma(g_0, \ldots, g_t)$ and define $X_t =  \etaTilde_t \inner{\Delta_t}{\xopt-x_t}$ and $\hat{X}_t = -\etaTilde_t \inner{\grad f(x_t)}{\xopt-x_t}$. Then, by~\eqref{eq:predictable-eta} we have that $X_t$ is a martingale difference sequence adapted to $\filt_t$ and $\hat{X}_t\in\filt_{t-1}$. Moreover, by~\eqref{eq:eta-prob1-bound} we have that $\max\{\abs{X_t},\abs{\hat{X}_t}\}\le c$ almost surely for $c= \frac{24\d[0]^2}{8^4 \TimeUniformLog[T,\delta]}$. Substituting into \Cref{cor:kickass-mg-concentration-2} (and shifting the start of the summation from $1$ to $0$) we have
	 \begin{equation*}
	 		\P\prn*{
	 		\exists t \le T:\abs[\Bigg]{\sum_{k=0}^{t-1} X_k
	 		}
	 		\ge
	 		4 \sqrt{ \TimeUniformLog[t, \delta] \sum_{k= 0}^{t-1} \left(X_{k}-\hat{X}_{k}\right)^{2} + c^2 \TimeUniformLog[t, \delta]^2}
	 		\,}
	 	\le \delta.
	 \end{equation*} 
 	Noting that $X_t - \hat{X}_t = \etaTilde_t \inner{g_t}{\xopt-x_t}$ and substituting the definition of $c$ and the bound~\eqref{eq:eta-second-moment-bound} gives, for every $t<T$,
 	 \begin{equation*}
 	 	4 \sqrt{ \TimeUniformLog[t, \delta] \sum_{k= 0}^{t-1} \left(X_{k}-\hat{X}_{k}\right)^{2} + c^2 \TimeUniformLog[t, \delta]^2}
 	 	\le 4 \sqrt{ \TimeUniformLog[t, \delta] \frac{12^2 \d[0]^4}{8^4  \TimeUniformLog[T, \delta]}+\prn*{\frac{6 \TimeUniformLog[t, \delta]\d[0]^2}{8^2\TimeUniformLog[T, \delta]}}^2 } \le \d[0]^2,
 	 \end{equation*}
   concluding the proof of lemma.
\end{proof}

Finally, we show that the event defined in \Cref{lem:distance-concentration} implies the desired distance bound. 

\begin{lemma}\label{lem:distance-bound-conclusion}
	In the setting of \Cref{prop:distance-bound}, if $\sum_{k=0}^{t-1}  \etaTilde_k \inner{\Delta_k}{\xopt-x_k} \le \d[0]^2$ for all $t\le T$ then $\stoptime > T$, i.e., $\rbar[T]\le \D$. 
\end{lemma}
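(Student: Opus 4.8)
The plan is to prove the statement by contradiction, treating everything deterministically and using only the per-step descent inequality together with the two bounds from \Cref{lem:properties-of-T-DoG} and the hypothesis. So suppose the hypothesis $\sum_{k=0}^{t-1}\etaTilde_k\inner{\Delta_k}{\xopt-x_k}\le\d[0]^2$ holds for all $t\le T$, but, contrary to the claim, $\stoptime\le T$. The starting point is the subgradient inequality \eqref{eq:classic-subgradient-inequality}. Writing $g_k=\grad f(x_k)+\Delta_k$ and invoking convexity (\Cref{ass:convex}), which gives $\inner{\grad f(x_k)}{x_k-\xopt}\ge f(x_k)-\fopt\ge 0$, I would discard the gradient inner product and multiply through by $2\eta_k$ to obtain, for every $k<\stoptime$,
\[
\d[k+1]^2-\d[k]^2 \le \eta_k^2\norm{g_k}^2 + 2\eta_k\inner{\Delta_k}{\xopt-x_k}.
\]

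Next I would telescope this from $k=0$ to $k=\stoptime-1$, bounding $\d[\stoptime]^2-\d[0]^2$ by $\sum_{k=0}^{\stoptime-1}\eta_k^2\norm{g_k}^2 + 2\sum_{k=0}^{\stoptime-1}\eta_k\inner{\Delta_k}{\xopt-x_k}$. The key bookkeeping point is that $\etaTilde_k=\eta_k$ for every index $k<\stoptime$, so both sums are unchanged when $\eta_k$ is replaced by the truncated $\etaTilde_k$. This lets me control the first sum by \eqref{eq:bounded-sequence-eta-g-sum} evaluated at $t=\stoptime-1\le T$, bounding it by $9\d[0]^2/(8^4\TimeUniformLog[T,\delta])<\d[0]^2$ (using $\TimeUniformLog[T,\delta]\ge 1$ and $9/8^4<1$), and the second sum directly by the hypothesis evaluated at $t=\stoptime\le T$, bounding it by $2\d[0]^2$. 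Combining, $\d[\stoptime]^2<\d[0]^2+\d[0]^2+2\d[0]^2=4\d[0]^2$, hence $\d[\stoptime]<2\d[0]$, and the triangle inequality yields $\r[\stoptime]\le\d[\stoptime]+\d[0]<3\d[0]=\D$.

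Finally I would close the contradiction. Since $\r[0]=0$ and $\reps\le 3\d[0]$ we have $\rbar[0]=\reps\le\D$, so $\stoptime\ge 1$ and, by minimality of $\stoptime$, $\rbar[\stoptime-1]\le\D$. Then $\rbar[\stoptime]=\max\{\rbar[\stoptime-1],\r[\stoptime],\reps\}\le\D$, which contradicts $\rbar[\stoptime]>\D$ implied by $\stoptime\le T$; hence $\stoptime>T$, i.e.\ $\rbar[T]\le\D$. This is exactly the noiseless proof of \Cref{prop:distance-bound} with the single change that the now-nonzero noise term $2\sum\etaTilde_k\inner{\Delta_k}{\xopt-x_k}$ is absorbed by the hypothesis rather than vanishing. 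I expect the only real subtlety to be aligning the genuine step sizes $\eta_k$ that appear in the telescoped descent inequality with the truncated $\etaTilde_k$ appearing in the cited bounds, and checking that the indices $\stoptime-1$ and $\stoptime$ both lie in $[0,T]$ so that \eqref{eq:bounded-sequence-eta-g-sum} and the hypothesis are applicable; everything else is routine.
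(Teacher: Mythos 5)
Your proof is correct and is essentially the paper's argument: the paper proves the same claim by induction on $t$ (with $B_t \defeq \max_{t'\le t}\sum_{k=0}^{t'-1}\etaTilde_k\inner{\Delta_k}{\xopt-x_k}$), using the induction hypothesis to guarantee $\eta_k=\etaTilde_k$ for $k\le t-1$, exactly where you invoke minimality of $\stoptime$ in your contradiction framing. Since induction and the minimal-counterexample argument are logically interchangeable here, and the descent inequality, the alignment of $\eta_k$ with $\etaTilde_k$, the application of \eqref{eq:bounded-sequence-eta-g-sum} and the hypothesis, and the final triangle-inequality step all coincide with the paper's, there is nothing substantive to add.
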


\begin{proof}
	To condense notation, let $B_t \defeq \max_{t' \le t} \sum_{k=0}^{t'-1}  \etaTilde_k \inner{\Delta_k}{\xopt-x_k}$, so that the claim becomes $B_t \le \d[0]^2$ implies $\stoptime > t$ for all $t \le T$. We prove the claim by induction on $t$. The basis of the induction is that $\stoptime > 0$ always holds since $\rbar[0]=\reps \le \D$ by assumption. For the induction step, we assume that $B_{t-1}$ implies $\stoptime \ge t$ and show that $B_t\le \d[0]^2$ implies $\stoptime > t$. To that end, we use $\inner{\grad f(x_t)}{x_t-\xopt} \ge f(x_t)-\fopt \ge 0$ to rearrange  \eqref{eq:classic-subgradient-inequality} and obtain
	\begin{equation*}
		\d[k+1]^2 - \d[k]^2 \le \eta_k^2 \norm{g_k}^2 + 2\eta_k \inner{\Delta_k}{\xopt-x_k}  
	\end{equation*}
	for all $k$. Summing this inequality from $k=0$ to $k=t-1$, we get
	\begin{flalign*}
		\d[t]^2 - \d[0]^2  & \le \sum_{k=0}^{t-1}  \eta_k^2 \| g_k \|^2+  2\sum_{k=0}^{t-1}   \eta_k \inner{\Delta_k}{\xopt-x_k}
		 = %
		 \sum_{k=0}^{t-1} \etaTilde_k^2  \| g_k \|^2+ 2\sum_{k=0}^{t-1}  \etaTilde_k \inner{\Delta_k}{x_k - \xopt},
	\end{flalign*}
	where the equality holds since $\stoptime > t-1$ and therefore $\eta_k = \etaTilde_k$ for all $k\le t-1$. Now, by the bound~\eqref{eq:bounded-sequence-eta-g-sum} we have $ \sum_{k=0}^{t-1} \etaTilde_k^2  \| g_k \|^2 \le \frac{9^2}{8^4\TimeUniformLog[T,\delta]}\d[0]^2 \le \d[0]^2$. Moreover, $\sum_{k=0}^{t-1}  \etaTilde_k \inner{\Delta_k}{x_k - \xopt} \le B_t \le \d[0]^2$ by definition and assumption, from which we conclude that $\d[t]^2 \le 4\d[0]^2$ and hence $\r[t] \le \d[0] + \d[t] \le \D$. Since $\rbar[t] = \max\{\rbar[t-1], \r[t]\}$ and $\rbar[t-1]\le \D$ by the induction assumption, we have that $\rbar[t] \le \D$ as well, concluding the proof.
\end{proof}

\Cref{prop:distance-bound} follows immediately from \Cref{lem:distance-concentration,lem:distance-bound-conclusion}.

\subsection{Illustrating \DoG's guarantees for least squares problems}\label{app:least-squares-example}

In order to illustrate the advantage of \TDoG's iterate boundedness guarantee, we now instantiate \Cref{thm:final-bound} and \Cref{prop:distance-bound} for  stochastic least squares problems. Let $P$ be a distribution over pairs $(a,b)\in\R^m \times \R$, and for $(a,b)\sim P$ consider the gradient oracle \[\gradientOracle{x} = (\inner{a}{x}-b) a,\] corresponding to the objective function \[f(x) = \half\mathbb{E}_{(a,b)\sim P}(\inner{a}{x} - b)^2.
 \] For simplicity, we set $x_0=0$, and let $\xopt$ be the minimum norm minimizer of $f$. 

 If we assume that $\|a\| \le A$ and $|b| \le B$ with probability 1, then $\gradientOracle{\cdot}$ satisfies \Cref{ass:bounded} with 
 \begin{equation*}
\Lfunc(x) = A\norm{x} + B.
 \end{equation*}
 The bounds $\|a\| \le A$ and $|b| \le B$ are often easy to verify (e.g., via data normalization).

With the expression for $\Lbar$ in hand, we may apply \ref{eq:T-DoG} and use \Cref{prop:distance-bound} to guarantee that $\rbar[T] = O(\norm{\xopt})$ and hence $\Lbar[T] \le \LLstar = O(A\norm{\xopt} + B)$ with high probability. As a consequence, we may bound the observed squared gradient norm sum $\G[\tau]$ by $O((A\|x_\star\| + B)^2 T)$. Substituting into \Cref{thm:final-bound}, the optimality of \TDoG is
\[
\widetilde{O}\left(\frac{ A\| x_\star \|^2 + B\|x_\star\|}{\sqrt{T}} 
\right),\label{eq:ls-tdog-optgap-bound} \numberthis
\]
where $\widetilde{O}$ hides polylogarithmic terms in $T, \delta$, and $\reps$. 
We emphasize that the bound above depends on the value of $\norm{\xopt}$ (the smallest minimizer norm) even though \TDoG assumes no knowledge of this value. 

When the domain $\xset$ is unbounded (e.g., $\xset = \R^m$), previously-proposed general-purpose\footnote{ There exist parameter-free methods specialized to least-squares problems that obtain better without requiring an a-priori bound on the solution norm \citep{vovk2006line}.} parameter-free methods \citep[e.g.][]{orabona2016coin,cutkosky2018black,carmon2022making} do not directly apply, since there is no global bound on the magnitude of the stochastic gradients.\footnote{\citet{carmon2022making} guarantee boundedness of the point they output, but do not have guarantees on the magnitude of intermediate query points. \citet{orabona2021parameter} develop a parameter-free method with bounded iterates, but not by $O(\norm{x_0-\xopt})$.}
To use these methods, one must assume an a-priori bound $D$ on $\norm{\xopt}$ (e.g., by positing strong convexity of $f$) and constrain the domain to $\xset' = \{x\mid \norm{x} \le D\}$ where the stochastic gradients are globally bounded by $L = AD+B$. With such bounds in place, previous parameter-free methods yield optimality gap bounds of the form
\[
\widetilde{O}\prn*{
    \frac{\norm{\xopt}\sqrt{\G[T] + (AD+B)^2}}{T}
},
\]
where $\G[T]$ denotes the sum of square stochastic gradient norms observed by the algorithm. The coarseness 
of the upper bound $D \ge \norm{\xopt}$ affects the leading order term in the above display: since the iterates can be anywhere in $\xset'$, the best we can guarantee about $\G[T]$ is $\G[T] = O(L^2 T) = O((AD+B)^2 T)$. Therefore, the best deterministic upper bound on the optimality gap of previous methods is
\[
\widetilde{O}\prn*{
    \frac{AD\norm{\xopt} + B\norm{\xopt}}{\sqrt{T}} 
}.\label{eq:ls-prev-optgap-bound} \numberthis
\]

Comparing the bounds \eqref{eq:ls-tdog-optgap-bound} and \eqref{eq:ls-prev-optgap-bound}, we see that, for least squares problems, \TDoG can offer substantially stronger guarantees than previous parameter-free methods, even when we bound the domain to ensure that the latter methods apply.
\section{Guarantees for the unweighted \DoG iterate average}\label{app:unweighted}

In this section we derive guarantees similar to those presented in \Cref{sec:theory} for the \emph{unweighted} iterate average
\begin{equation*}
    \hx_t \defeq \frac{1}{t}\sum_{i=0}^{t-1} x_i.
\end{equation*}

The following proposition shows that the bound resulting from  combining \Cref{prop:error-bound} with \Cref{lem:bound-a-ratios} holds also for uniform iterate averaging.
\begin{proposition}\label{prop:error-bound-unweighted}
    For all $\delta \in (0,1)$ and $\LL > 0$,
    if \Cref{ass:convex}, \Cref{ass:bounded} hold then with probability at least $1 - \delta - \P\prn*{ \Lbar[\numSteps] > \LL }$ after $T$ iterations of any \DoG-like algorithm (\Cref{property:DoG-style-step-sizes}) we have 
    \begin{flalign}\label{eq:unweighted-error-bound}
            f(\hx_T) - f(\xopt) = O\prn*{\frac{ (\d[0] \log_+ \frac{\rbar[T]}{\reps} + \rbar[t]) \sqrt{\G[T-1]' + \G[T-1] \TimeUniformLog[T,\delta] + \LL^2 \TimeUniformLog[T,\delta]^2}}{T}}.
    \end{flalign}
    Moreover, in the noiseless case we have (with probability 1)
    \begin{flalign}\label{eq:unweighted-error-bound-noiseless}
        f(\hx_T) - f(\xopt) \le \frac{1}{T} \sum_{t=0}^{T-1} f(x_t) - f(\xopt)
        =
          O\prn*{\frac{ (\d[0] \log_+ \frac{\rbar[T]}{\reps} + \rbar[T]) \sqrt{\G[T-1]'}}{T}}.
   \end{flalign}
\end{proposition}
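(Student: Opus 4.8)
The plan is to reduce everything to the weighted-regret and weighted-noise bounds already established in \Cref{lem:bound-empirical-error,lem:bound-generalization-error}, and then to pay a logarithmic price to strip off the $\rbar[k]$ weights. First I would invoke convexity twice: by Jensen's inequality $f(\hx_T)-\fopt \le \frac{1}{T}\sum_{k=0}^{T-1}(f(x_k)-\fopt)$, and by the subgradient inequality $f(x_k)-\fopt \le \inner{\grad f(x_k)}{x_k-\xopt}$. Writing $\grad f(x_k)=g_k-\Delta_k$ (recall $\Delta_k\defeq g_k-\grad f(x_k)$), this bounds $T(f(\hx_T)-\fopt)$ by the \emph{unweighted} regret $\sum_{k<T}\inner{g_k}{x_k-\xopt}$ minus the \emph{unweighted} noise $\sum_{k<T}\inner{\Delta_k}{x_k-\xopt}$. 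The noiseless claim \eqref{eq:unweighted-error-bound-noiseless} then follows at once from the regret estimate alone, since there $\Delta_k\equiv 0$ and the chain of inequalities holds with probability one; so the entire difficulty is in controlling these two unweighted sums.

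The central device is summation by parts. Let $S_t \defeq \sum_{k<t}\rbar[k]\inner{g_k}{x_k-\xopt}$ and $w_k\defeq 1/\rbar[k]$. Since $\rbar[k]$ is nondecreasing, $w_k$ is nonincreasing, and Abel summation gives $\sum_{k<T}\inner{g_k}{x_k-\xopt}=\sum_{k<T}w_k(S_{k+1}-S_k)=w_{T-1}S_T+\sum_{j=1}^{T-1}(w_{j-1}-w_j)S_j$, where the coefficients $w_{T-1}$ and $w_{j-1}-w_j$ are all nonnegative. I can therefore substitute the upper bound $S_j\le M_j\defeq \rbar[j](2\dbar[j]+\rbar[j])\sqrt{\G[j-1]'}$ from \Cref{lem:bound-empirical-error} into every term. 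An identical manipulation applied to $\tilde S_t\defeq\sum_{k<t}\rbar[k]\inner{\Delta_k}{x_k-\xopt}$, together with the triangle inequality and the high-probability bound $\abs{\tilde S_t}\le b_t$ from \Cref{lem:bound-generalization-error}, reduces the noise sum to $w_{T-1}b_T+\sum_j (w_{j-1}-w_j)b_j$.

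The structural fact that makes the resulting sums summable is that a single \DoG step increases the running maximum distance by at most a factor of two: since $\|x_{k}-x_{k-1}\|\le \eta_{k-1}\|g_{k-1}\|=\rbar[k-1]\|g_{k-1}\|/\sqrt{\G[k-1]'}\le \rbar[k-1]$ (using $\|g_{k-1}\|^2\le\G[k-1]\le\G[k-1]'$ and that the projection is nonexpansive), the triangle inequality yields $\rbar[k]\le 2\rbar[k-1]$. With this in hand I would split each bound into a ``$\d[0]$ part'' and an ``$\rbar$ part'' via $\dbar[j]\le\d[0]+\rbar[j]$, so that $M_j$ is controlled by $\d[0]\rbar[j]\sqrt{\G[j-1]'}+\rbar[j]^2\sqrt{\G[j-1]'}$. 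For the $\d[0]$ part, $(w_{j-1}-w_j)\rbar[j]=\rbar[j]/\rbar[j-1]-1\le 2\log(\rbar[j]/\rbar[j-1])$ (valid because $\rbar[j]/\rbar[j-1]\in[1,2]$), and summing telescopes the logarithms to $2\log_{+}(\rbar[T]/\reps)$, producing the $\d[0]\log_{+}(\rbar[T]/\reps)$ term. For the $\rbar$ part, $(w_{j-1}-w_j)\rbar[j]^2=\rbar[j](\rbar[j]/\rbar[j-1]-1)\le 2(\rbar[j]-\rbar[j-1])$ telescopes directly to $O(\rbar[T])$, with \emph{no} logarithm. The boundary term $w_{T-1}S_T$ is $O((\d[0]+\rbar[T])\sqrt{\G[T-1]'})$ using $\rbar[T]\le 2\rbar[T-1]$, and the remaining $\sum_k \eta_k\|g_k\|^2$ piece of the regret is $O(\rbar[T]\sqrt{\G[T-1]})$ by \Cref{lem:adagrad-algebra}. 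The noise sum is handled by the same split, with $b_j$ bounded by $\d[0]\rbar[j-1]\sqrt{\cdots}+\rbar[j-1]^2\sqrt{\cdots}$; the $\rbar[j-1]^2$ part again telescopes without a logarithm since $\rbar[j-1](1-\rbar[j-1]/\rbar[j])\le\rbar[j]-\rbar[j-1]$, and $\sqrt{\cdots}=\sqrt{\TimeUniformLog[T,\delta]\G[T-1]+\TimeUniformLog[T,\delta]^2\LL^2}$ is monotone in $j$. Assembling the two-part estimates yields the advertised bound \eqref{eq:unweighted-error-bound} and its noiseless specialization.

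I expect the main obstacle to be precisely the loss of the telescoping that the weights $\rbar[k]$ provided in \Cref{lem:bound-empirical-error}: once the weights are gone, the inverse step size $\sqrt{\G[k]'}/\rbar[k]$ is no longer monotone, so term $(A)$ cannot be telescoped as cleanly as in the weighted proof. The summation-by-parts reorganization above restores a usable monotonicity (the coefficients $w_{j-1}-w_j$ are signed), but it is delicate to ensure that only the $\d[0]$ contribution picks up the $\log_{+}(\rbar[T]/\reps)$ factor while the $\rbar[T]$ contribution does not. This is exactly what the one-step doubling bound $\rbar[k]\le 2\rbar[k-1]$ buys us, and keeping that distinction straight simultaneously for the regret and the noise sums is the part that requires the most care.
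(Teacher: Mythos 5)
Your proof is correct, but it takes a genuinely different route from the paper's. The paper partitions $[0,T)$ into $K=O\prn{\log_+ (\rbar[T]/\reps)}$ segments at the times $\tau_k$ where $\rbar[t]$ first doubles, re-runs the argument of \Cref{lem:bound-empirical-error} restarted at each $\tau_{k-1}$ (and applies \Cref{lem:bound-generalization-error} via a triangle inequality on two prefix sums), divides each segment bound by the within-segment weight $\rbar[\tau_{k-1}]$, and sums over segments, using the geometric growth $\rbar[\tau_k]\ge 2\rbar[\tau_{k-1}]$ to get $\sum_{k\le K}\rbar[\tau_k] = O(\rbar[T])$ so that only the $\d[0]$ contribution picks up the factor $K$. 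You instead keep \Cref{lem:bound-empirical-error,lem:bound-generalization-error} as black-box prefix-sum bounds valid at every $t$, and strip the weights by Abel summation against the nonincreasing sequence $1/\rbar[k]$; the same dichotomy then emerges at the level of individual increments, via $x-1\le 2\log x$ on $[1,2]$ for the $\d[0]$ part (telescoping logs to $\log_+(\rbar[T]/\reps)$) and $\rbar[j](\rbar[j]/\rbar[j-1]-1)\le 2(\rbar[j]-\rbar[j-1])$ for the $\rbar$ part (telescoping to $O(\rbar[T])$ with no log). Both arguments hinge on exactly the same two ingredients---the weighted lemmas and the one-step doubling bound $\rbar[j]\le 2\rbar[j-1]$, which you correctly derive from $\norm{x_j-x_{j-1}}\le \eta_{j-1}\norm{g_{j-1}}\le\rbar[j-1]$ for any \DoG-like scheme and nonexpansiveness of the projection---but your version has the advantage of using \Cref{lem:bound-empirical-error} exactly as stated, whereas the paper's segment bound requires the (easy but unstated) observation that the lemma's telescoping argument also applies to sums started mid-trajectory. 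One small redundancy in your write-up: if you apply summation by parts to the full weighted-regret partial sums $S_t$, the $\sum_k \eta_k\norm{g_k}^2$ term is already absorbed into the bound $M_j$ from \Cref{lem:bound-empirical-error}, so your separate appeal to \Cref{lem:adagrad-algebra} is only needed if you instead integrate by parts against the $(A)$-part alone; either reading yields the stated bound, and your noiseless specialization (setting $\Delta_k\equiv 0$ and using only the deterministic regret bound) matches the paper's.
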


\begin{proof}
    Define the times
    \begin{equation}\label{eq:unweighted-error-bound-noiseless}
        \tau_i = \min\crl*{\min\crl*{i \mid \rbar[i] \ge 2 \rbar[\tau_{i-1}]},T},
    \end{equation}
    with $\tau_0 \defeq 0$. Moreover, let $K$ be the first index such that $\tau_K=T$ and note that $K\le 1+\log_2\frac{\rbar[T]}{\reps}$ by construction. 

    The argument proving \Cref{lem:bound-empirical-error} shows that
    \begin{equation*}
        \sum_{t=\tau_{k-1}}^{\tau_{k} - 1} \rbar[t] \inner{g_t}{x_t - \xopt} 
        \le \rbar[\tau_{k}] (2\dbar[\tau_k] + \rbar[\tau_k]) \sqrt{\G[\tau_k -1]'} = O\prn*{\rbar[\tau_{k}] (\d[0] + \rbar[\tau_k]) \sqrt{\G[T-1]'}}
    \end{equation*}
    for all $k\le K$. Moreover, by \Cref{lem:bound-generalization-error} we have that
    \begin{flalign*}
        \abs*{\sum_{t=\tau_{k-1}}^{\tau_{k} - 1} \rbar[t] \inner{\Delta_t}{x_t - \xopt}}
        &\le \abs*{ \sum_{t=0}^{\tau_{k} - 1} \rbar[t] \inner{\Delta_t}{x_t - \xopt} } + \abs*{\sum_{t=0}^{\tau_{k-1}-1} \rbar[t] \inner{\Delta_t}{x_t - \xopt}} 
        \\ &
        \le
        16 \rbar[\tau_{k} - 1] \dbar[\tau_{k}-1] \sqrt{\TimeUniformLog G_{T-1} + \TimeUniformLog^2 \LL^2}
    \end{flalign*}
    holds for all $k\le K$ with probability at least $1-\delta -\P(\Lbar[T]>L)$. (The rest of the analysis assumes this event holds).

    Combining these two bounds, we have
    \begin{flalign*}
        \sum_{t=\tau_{k-1}}^{\tau_{k} - 1} \brk*{f(x_k) - f(\xopt)}
        & \le 
        \frac{1}{\rbar[\tau_{k-1}]} \sum_{t=\tau_{k-1}}^{\tau_{k} - 1} \rbar[t] \brk*{f(x_k) - f(\xopt)}
        \le 
        \frac{1}{\rbar[\tau_{k-1}]} \sum_{t=\tau_{k-1}}^{\tau_{k} - 1} \rbar[t] \inner{\grad f(x_t)}{x_t-\xopt}
        \\ &
        = \frac{1}{\rbar[\tau_{k-1}]} \sum_{t=\tau_{k-1}}^{\tau_{k} - 1} \rbar[t] \inner{g_t}{x_t - \xopt} 
        - \frac{1}{\rbar[\tau_{k-1}]} \sum_{t=\tau_{k-1}}^{\tau_{k} - 1} \rbar[t] \inner{\Delta_t}{x_t - \xopt}
        \\ &
        = O\prn*{\frac{\rbar[\tau_{k}]}{\rbar[\tau_{k}-1]} (\d[0] + \rbar[\tau_k]) \sqrt{\G[T-1]' + \TimeUniformLog\G[T-1] + \TimeUniformLog^2 \LL^2}}.
        \\ &
        = O\prn*{ (\d[0] + \rbar[\tau_k]) \sqrt{\G[T-1]' + \TimeUniformLog\G[T-1] + \TimeUniformLog^2 \LL^2}}, \numberthis \label{eq:unweighted-segment-bound}
    \end{flalign*}
    where the last transition holds since, for any \DoG-like method and all $t$,
    \begin{equation*}
        \rbar[t+1] \le \rbar[t] + \norm{x_{t+1} - x_{t}}
        = \rbar[t] \prn*{ 1 + \frac{\norm{g_t}}{\sqrt{\G[t]'}}}
        \le 2\rbar[t].
    \end{equation*}

    Summing \Cref{eq:unweighted-segment-bound} over $k$ from $1$ to $K$, we obtain
    \begin{flalign*}
        \sum_{t=0}^{T - 1} \brk*{f(x_k) - f(\xopt)}
        & =
        \sum_{k=1}^{K}\sum_{t=\tau_{k-1}}^{\tau_{k} - 1} \brk*{f(x_k) - f(\xopt)}
        \\ & = O\prn*{\prn*{\d[0] K + \sum_{k=1}^K \rbar[\tau_k]}\sqrt{\G[T-1]' + \TimeUniformLog\G[T-1] + \TimeUniformLog^2 \LL^2}}.
    \end{flalign*}
    Finally, recall that $K=O\prn*{\log_+\frac{\rbar[T]}{\reps}}$ and note that 
    $\sum_{k=1}^K \rbar[\tau_k] = O(\rbar[T])$ since $\frac{\rbar[\tau_i]}{\rbar[\tau_{K-1}]} \le 2^{-K-1+i}$ for all $i\le K-1$. The proof of \Cref{eq:unweighted-error-bound} is complete upon noting that $f(\hx_T) \le \frac{1}{T}\sum_{t=0}^{T-1} f(x_t)$ by Jensen's inequality. To show \Cref{eq:unweighted-error-bound-noiseless} we simply set $\Delta_t=0$ in the bound above.
\end{proof}

Using \Cref{prop:error-bound-unweighted} we may replace the early-stopped weighted average $\bar{x}_\tau$ with the final unweighted average $\hx_T$ in \Cref{coro:error-bound} and \Cref{thm:final-bound}. 

\Cref{prop:error-bound-unweighted} also shows that (as long as iterates remain bounded) \DoG attains a $1/T$ rate of convergence in the smooth noiseless cases. In particular, if we assume that $\grad f$ is $S$-Lipschitz and noiseless gradients (i.e., $g_i = \grad f(x_i))$, then we have
\[
    G_{T-1} \le 2S \sum_{t=0}^{T-1} \brk*{f(x_t) - f(\xopt)}.
\]
Substituting this bound back into \Cref{eq:unweighted-error-bound-noiseless} (with $\G[T-1]' = \G[T-1]$ for \DoG), we obtain
\[
    \frac{1}{T} \sum_{t=0}^{T-1} f(x_t) - f(\xopt)
    =
      O\prn*{\frac{ (\d[0] \log_+ \frac{\rbar[T]}{\reps} + \rbar[T]) \sqrt{\frac{S}{T} \sum_{t=0}^{T-1} f(x_t) - f(\xopt)}}{\sqrt{T}}}.
\]
Dividing through by $\sqrt{\frac{1}{T} \sum_{t=0}^{T-1} f(x_t) - f(\xopt)}$ and squaring yields
\[
    f(\hx_{T}) - f(\xopt) \le 
    \frac{1}{T} \sum_{t=0}^{T-1} f(x_t) - f(\xopt)
    =
      O\prn*{\frac{ S(\d[0] \log_+ \frac{\rbar[T]}{\reps} + \rbar[T])^2}{T}}.
\] %
\section{Experiment Details}\label{app:exp-details}

\subsection{Environment settings}
All experiments were based on PyTorch \citep{paszke2019pytorch} (version 1.12.0). 

Language experiments were done with the \emph{transformers}~\cite{wolf2020transformers} library (version 4.21.0) and tracked using the \emph{Comet.ML}~\cite{CometML}. All datasets were provided by the \emph{Datasets} library~\cite{lhoest2021datasets} (version 2.4.0) and were left as is, including train-eval-test splits.

Vision experiments were based on the \emph{pytorch-image-models} (\texttt{timm}, version0.7.0dev0) repository \citep{Wightman2019timm}, with \emph{TensorFlow datasets} (version 4.6.0) as a dataset backend \citep{Abadi2015Tensorflow}.

To support the training and analysis of the results, we used \emph{numpy}~\cite{Harris2020Array}, \emph{scipy}~\cite{virtanen2020scipy}, \emph{pandas}~\cite{mckinney2010pandas} and \emph{scikit-learn}~\cite{pedregosa2011scikit}.

\subsection{Implementation details}\label{app-subsec:implementation} 
Whenever possible, we used existing scripts and recipes provided by \texttt{timm} and \emph{transformers} to fine-tune the models. We implemented \DoG, \LDoG and the polynomial model averaging as a subclass of PyTorch \emph{Optimizer} interface. We provide implementation of both in \url{https://github.com/formll/dog}.

\subsection{Datasets}\label{app-subset:datasets}
The datasets used in the language experiments are: \textbf{CoLA} \citep{warstadt2018neural}, \textbf{SST-2} \citep{socher2013recursive},  \textbf{MRPC} \citep{dolan2005automatically},  \textbf{QQP} \citep{Iyer2017First},  \textbf{STS-B} \citep{cer2017semeval},  \textbf{MNLI} \citep{williams2018broad}, \textbf{QNLI} \citep{rajpurkar2016squad}, and  \textbf{RTE} \citep{Dagan2006Recognising,bar2006recognising,giampiccolo2007recognizing,Bentivogli2009Recognizing}. Following \citet{liu2019roberta}, we discard WNLI \citep{levesque2011winograd} as it was found to be ill-defined and was reformulated differently in SuperGLUE \citep{wang2019superglue}.

The datasets used in the vision experiments are: \textbf{Caltech101} \citep{fei2004learning}, \textbf{CIFAR-100} \citep{krizhevsky2009learning}, \textbf{CLEVR-Dist} \citep{johnson2017clevr}, \textbf{DMLab} \citep{beattie2016deepmind}, \textbf{dSprites-Ori} \citep{dsprites17}, \textbf{DTD} \citep{cimpoi2014describing}, \textbf{Flowers102} \citep{nilsback2008automated}, \textbf{Pets} \citep{parkhi2012cats}, \textbf{Resisc45} \citep{cheng2017remote}, \textbf{Retinopathy} \citep{kaggle-diabetic-retinopathy}, \textbf{Sun397} \citep{xiao2010sun,xiao2016sun}, and \textbf{SVHN} \citep{netzer2011reading}.

\subsection{Models}
When fine-tuning RoBERTa (from the `roberta-base' checkpoint) on classification tasks, we follow the common technique of prepending a \emph{CLS} token to the input, 
and feeding its final representation to a one hidden-layer, randomly initialized MLP that is used as a classification head. 
For SQuAD, the classification head is tasked with multi-label classification, predicting the probability that each word (token) in the input is the beginning/end of the answer span, and we then used the span that has the maximum likelihood as the model's output.
When fine-tuning T5 (from the `t5-base' checkpoint), we treated all tasks as sequence-to-sequence tasks, translating classification labels to appropriate words (e.g. 0/1 to positive/negative) and then evaluated accuracy with exact match.
The computer vision pre-trained models were accessed via \texttt{timm}, and had randomly initialized classification heads. 
The strings used to load the models were: `convnext\textunderscore tiny', `resnet50', `densenet121', `vit\textunderscore base\textunderscore patch32\textunderscore 224\textunderscore in21k' and `vgg11'.

\newcommand{\TasksTableCaption}{\label{table:tasks-configurations}
Configuration used for each dataset in our testbed (\Cref{subsec:settings}).
For all language tasks, we used the batch size as in \citet{liu2019roberta}, and
 at least 150\% the number of steps used there, in order to ensure convergence. Learning rate (LR) warmup and annealing refers to tuning with SGD and Adam.  In all cases, both \DoG and \LDoG used neither warmup nor annealing. 
}

\begin{table}[t]
    \centering
    \footnotesize
    
    \notarxiv{\caption{\TasksTableCaption}}
    \resizebox{\linewidth}{!}{\begin{tabular}{@{\extracolsep{4pt}}lcccccc} 
    
    \toprule
    
    \textbf{Task} & \textbf{Batch size} & \textbf{Steps} & \textbf{Metric} & \textbf{LR warmup} & \textbf{LR annealing} & \textbf{Grad. clipping} \\ \cline{1-1}\cline{2-2}\cline{3-3}\cline{4-4}\cline{5-5}\cline{6-6}\cline{7-7}

    \rule{-2pt}{2.6ex}

    \textbf{VTAB datasets} & 128 & 20K & Accuracy & None & Cosine & None \\
    \textbf{SQuAD} & 48 & 5475 & $F_1$ & 10\% & Cosine & 1 \\
    \textbf{SST-2} & 32 & 31407 & Accuracy & 10\% & Cosine & 1 \\
    \textbf{CoLA} & 32 & 10000 & Matthews correlation & 10\% & Cosine & 1 \\
    \textbf{MRPC} & 32 & 1734 & $F_1$ & 10\% & Cosine & 1 \\
    \textbf{STSB} & 32 & 3281 & Pearson correlation & 10\% & Cosine & 1 \\ 
    \textbf{QNLI} & 32 & 49218 & Accuracy & 10\% & Cosine & 1 \\
    \textbf{RTE} & 32 & 10000 & Accuracy & 10\% & Cosine & 1 \\
    \textbf{QQP} & 32 & 160625 & $F_1$ & 10\% & Cosine & 1 \\
    \textbf{MNLI} & 32 & 184218 & Accuracy & 10\% & Cosine & 1 \\

    \bottomrule
    \end{tabular}}

    \arxiv{\caption{\TasksTableCaption}}
    \end{table} %

\subsection{Hyper-parameters}\label{app:experiments-hparams}
We trained each model/task combination a fixed number of steps (see \Cref{table:tasks-configurations}), performing evaluation every 500 update steps (except for the smaller datasets Caltech101, DTD, Flowers102 and Pets where we evaluated every 200) with both the latest checkpoint, and the polynomial averaged one (see below).
We did not use any weight decay. For language models, we left dropout at its default value in the transformers library.
We used batch sizes as is common practice for each task, as detailed in \Cref{table:tasks-configurations}.

\paragraph{Data augmentation in vision experiments.}
The VTAB suite~\citep{zhai2019large} divides its datasets into three categories: natural, specialized and structured, and we used a suitable data augmentation strategy for each of the categories. In particular, for structured datasets we simply resized the images to a (224, 224) resolution, while for the natural and specialized datasets we used the standard  ``inception crop''~\citep{szegedy2015going} at training time and a 0.875 center crop at test time. For natural datasets we additionally applied a color jitter operation with parameter 0.4 (as implemented in \texttt{timm}). Finally, we applied a random horizontal flip for all datasets except SVHN and dSprites-Ori, where such augmentation clearly interferes with the task.

\paragraph{Model selection in vision experiments.}
For computer vision experiments, we used the VTAB evaluation splits to select the best checkpoint, and then reported performance on the test split.  
Unlike the experiments accompanying the VTAB suite~\citep{zhai2019large},
we did not retrain selected models on the combination of training and validation data.

\paragraph{Repeated runs.}
To account for randomness, we repeated our fine-tuning experiments using multiple seeds. In most cases (with exceptions listed below) we repeated each \DoG and \LDoG training 5 times. For SGD and Adam repeating the training with all learning rates was computationally prohibitive, so instead for each task / model pair we repeated 5 times only the best-performing LR (i.e., instance-tuned LR) and the best-performing LR across all tasks for that model (i.e., model-tuned LR) according the validation split. A few experiments were too computationally expensive to repeat: for QQP and MNLI (which require a large step budget) we have only 1--3 repetitions per training configuration, and for ConvNeXt-T (which takes a long time per step) we did repeat the training runs. 

Each relative error difference (RED) score combines the error of two optimization methods (one being \DoG) on a particular model task combination. Given multiple seeds for each optimization method, we computed the RED scores for each possible seed combination. 
 In  \Cref{fig:mean-results,fig:hpt-results,fig:mean-results-convex,fig:hpt-results-convex} (which aggregate multiple tasks) we average over those multiple RED values and compute the statistics of the average RED. In per-task breakdowns such as \Cref{fig:per-task-results,table:language-results,table:vision-results} we report the statistics over the multiple RED values.

\paragraph{Baseline optimizers.} For both SGD and Adam, we used cosine learning rate decay, and searched over appropriate values for peak learning rate.
The base learning rate search space used when performing fine-tuning for each model/task combination can be found in \Cref{table:language-results,table:vision-results}. We did not use momentum for SGD. For Adam we used $\beta_1=0.9$ for all experiments, and $\beta_2=0.999$ for language experiments and $\beta_2=0.99$ for vision experiments. For language models only, we used warmup of 10\% of the maximum steps count, and gradient clipping at global norm 1. We did not perform learning rate warmup or gradient clipping for the vision experiment since we did not encounter any training stability issues there.

\paragraph{Setting $\reps$.}
As explained in \Cref{subsec:settings}, setting $\reps \defeq \alpha (1+\norm{x_0})$ generally works well for $\alpha=10^{-4}$. However, in some cases such as with T5, $\lVert x_0 \rVert$ can be very large, causing destructively large first updates, with $\eta_t$ increasing exponentially and the model diverging. This is easily detectable early during training, as usually $\eta_t$ exceeds $1000$ within the first $100$ steps. Since the theory requires $\reps$ to be small, we simply decreased $\alpha$ by a factor of 100. While preliminary experiments with RoBERTa indicated that \DoG also performed well with $\alpha=10^{-4}$, for the sake of consistency we use the same values in all models of the same domain. Thus, models fine-tuned on vision tasks used $\alpha=10^{-4}$, while language models used $10^{-6}$ for \DoG{} and $10^{-8}$ for \LDoG{}. 

\paragraph{Model averaging.} As mentioned in \Cref{subsec:settings}, we used the polynomial decay averaging as proposed by \citet{shamir2013stochastic}. Namely, we kept an additional copy of the model weights, and in every update step we updated our running average of the model parameters as follows:
\begin{equation}\label{eq:constant8}
	\bar{x}_t^\gamma = \left(1 - \frac{1+\gamma}{t+\gamma} \right)\bar{x}_{t-1}^\gamma + \frac{1+\gamma}{t+\gamma}x_t
\end{equation}
The vector $\bar{x}_t^\gamma$ roughly corresponds to an average of the last $t/\gamma$ iterates preceding iteration $t$. 
For all models, we set $\gamma=8$. We did not perform any tuning of the parameter $\gamma$; we chose the value $8$ because $1/8$ seemed like a good fraction of iterates to average, and because it worked well in the experiments of~\citet{levy2020large}. 

To ensure that iterate averaging is never harmful, for each optimization method we selected the best-performing checkpoint across both $x_t$ and $\bar{x}_t^\gamma$ (i.e., with or without averaging).

\subsection{\Cref{fig:1} details}\label{app:experiments-figure-1}
We generated \Cref{fig:1} as part of our fine-tuning testbed. In particular, SGD used a cosine learning rate annealing (without warmup), both algorithms use polynomial decay averaging, and we report test performance on the best checkpoint selected on a validation set.

\subsection{Fine-tuning ImageNet}\label{app:experiments-imagenet}
Our training setup mostly followed the default configuration in \citet{wortsman2022model}. In particular, we used batch size 512 and the default \texttt{timm} augmentation (as in our main computer vision experiments) which \citet{wortsman2022model} refer to as `medium aug.' We trained for 25K steps, corresponding to roughly 10 passes over the data. However (keeping with our computer vision testbed setting) we did not perform learning rate warmup or gradient clipping, and we initialized the classification head to be random. 

For AdamW~\citep{loshchilov2019decoupled} we used weight decay 0.1 and cosine learning rate annealing as in \citet{wortsman2022model}. We obtained accuracies within 0.5\% of the numbers reported in Appendix L of \citet{wortsman2022model}.

For \DoG and \LDoG we used weight decay 0 since the value 0.1 is meant for decoupled weight decay and we did not wish to re-tune a weight decay parameter. We set $\reps$ to be $10^{-6} \cdot (1+\norm{x_0})$ without trying different values of this parameter.

For SGD we used cosine learning rate annealing and set weight decay to 0 for a more direct comparison to \DoG.

\subsection{Training from scratch}\label{app:experiments-from-scratch}
Our training setup followed the basic training configuration of~\citet{cubuk2019autoaugment}, which is typical for training ResNets on CIFAR-10: data augmentations comprising a random crop after 4 pixel padding and random horizontal flip, batch size of 128, weight decay of 0.0005 and 200 epochs of training. SGD used cosine learning weight annealing and (when applicable) Nesterov momentum. We did not use dropout or any other additional form of regularization. For \DoG and \LDoG, 
we set $\reps=10^{-4} \cdot (1+\norm{x_0})$ without trying different values of this parameter.
The accuracies we obtained using SGD and \DoG are consistent (and slightly better) than the baseline number reported in Table 2 of \citet{cubuk2019autoaugment} and within 0.1\% of the one reported in Table 1 of~\citet{carmon2019unlabeled}.

\notarxiv{%
\newcommand{\FromScratchableCaption}{\label{table:from-scratch}CIFAR-10 test accuracies after training a Wide ResNet 28-10 model from scratch for 200 epochs, with and without polynomial decay averaging (see \Cref{subsec:from-scratch}). $\dagger$ denotes the standard training configuration \citep[cf.][Table 2]{cubuk2019autoaugment}.} 
	
\begin{table}
	\centering
	\notarxiv{\caption{\FromScratchableCaption} \vspace{4pt}} 
		\begin{tabular}{@{\extracolsep{4pt}}lcccc}
		\toprule
		\arxiv{Algorithm & LR & Acc.\ w/o averaging & Acc.\ with averaging \\
		\cline{1-1} \cline{2-2} \cline{3-3} \cline{4-4}
		\rule{-2pt}{2.6ex}
	}
		\notarxiv{Algorithm & LR & Acc.\ w/o avg. & Acc.\ w/ avg. \\
		\cline{1-1} \cline{2-2} \cline{3-3} \cline{4-4}
		\rule{-2pt}{2.6ex}
	}

	\multirow[c]{5}{*}{SGD} 
	& 0.1 & 94.9\% & 94.9\% \\
	& 0.3 & 95.8\% & 95.6\% \\
	& 1 & \textbf{96.4\%} & 84.4\% \\
	& 3 & 95.9\% & 21.7\% \\
	& 10 & 10.0\% & 10.0\% \\
	\midrule
	\multirow[c]{5}{*}{\shortstack[l]{SGD w/\\ mom.\ 0.9} }
	& 0.01 & 95.0\% & 95.1\% \\
	& 0.03 & 95.8\% & 95.7\% \\
	& ~~0.1{\small~$\dagger$} & \underline{96.3\%} & 88.5\% \\
	& 0.3 & 95.8\% & 27.5\% \\
	& 1 & 42.0\% & 63.4\% \\
	\midrule
	\DoG & - & 85.2\% & \textbf{96.4\%} \\
	\midrule
	\multirow[c]{4}{*}{Adam} & 3e-05 & 91.1\% & 91.1\% \\
	& 1e-04 & 94.0\% & 94.0\% \\
	& 3e-04 & 93.5\% & 93.8\% \\
	& 1e-03 & 91.4\% & 91.6\% \\
	\midrule
	\LDoG & - & 83.2\% & 93.5\% \\
	\bottomrule
	\end{tabular}
    \arxiv{\caption{\FromScratchableCaption}}
\end{table} %
}

\section{Additional experiment results}\label{app-subsec:results}

\subsection{Full breakdown of main experiment results}\label{app:experiments-breakdown}
\Cref{fig:per-task-results}
as well as \Cref{table:language-results,table:vision-results} provide the full breakdown of our main fine-tuning results, comparing \DoG and \LDoG to SGD and Adam with different learning rates for each model/task combination.

\begin{figure*}[t]
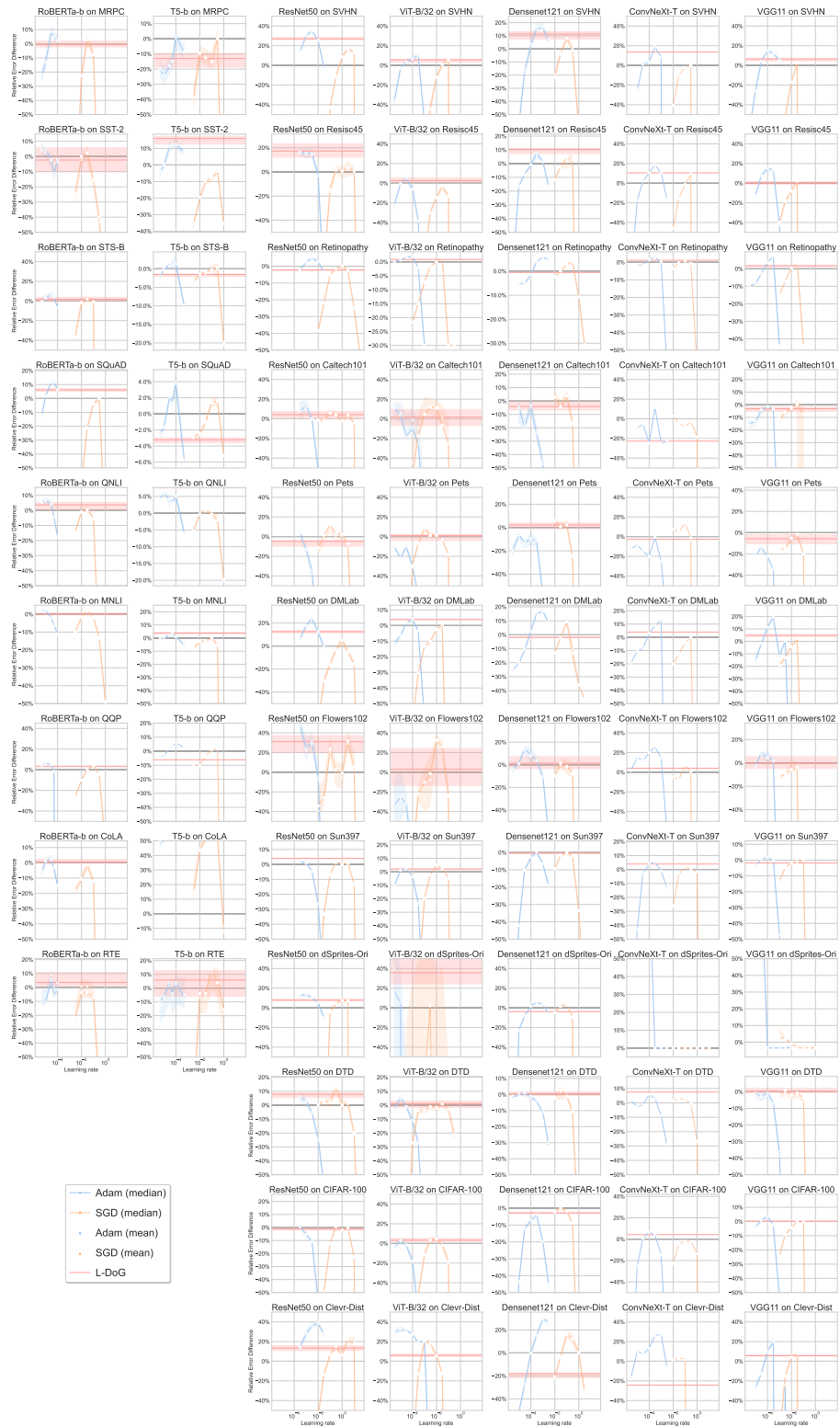

	\begin{center}
	\centerline{
	\notarxiv{\includegraphics[height=0.95\textheight,keepaspectratio]{figures/per_task_results.pdf}}
	\arxiv{\includegraphics[height=0.9\textheight,keepaspectratio]{figures/per_task_results.pdf}}
	}
	\caption{Relative error difference (RED) statistics across seeds (median, mean and IQR shown as shaded region) for all model/task combinations. The red horizontal line shows the median RED of \LDoG.
	} 
	\label{fig:per-task-results}
	\end{center}
\end{figure*}

\newcommand{\LanguageTableCaption}{ \label{table:language-results}Average (std) performance of RoBERTa-b and T5-b on language tasks, when fine-tuned with different optimization algorithms and their respective base learning rate (when applicable). \DoG{} uses $\reps=10^{-6}(1+\norm{x_0})$ and \LDoG{} uses $\reps=10^{-8}(1+\norm{x_0})$. Scores are reported as mean across seeds, measured in the corresponding performance metric as detailed in \Cref{table:tasks-configurations}.}

\begin{table}[t]
    \centering
    \footnotesize
       \setlength{\belowcaptionskip}{-10pt}
    \notarxiv{\caption{\LanguageTableCaption}}
    \resizebox{\linewidth}{!}{\begin{tabular}{@{\extracolsep{4pt}}lcccccccccccc} 
    
        \toprule

     \textbf{Model} & \textbf{Optimizer} & \textbf{LR} &  CoLA & MNLI & MRPC & QNLI & QQP & RTE & SQuAD & SST-2 & STS-B & Avg. \\ \cline{1-1} \cline{2-2} \cline{3-3} \cline{4-4} \cline{5-5} \cline{6-6} \cline{7-7} \cline{8-8} \cline{9-9} \cline{10-10} \cline{11-11} \cline{12-12} \cline{13-13}
     \rule{-2pt}{2.6ex}

     \primitiveinput{tables/data/language_results.txt}

 \bottomrule
    \end{tabular}}
    \arxiv{\caption{\LanguageTableCaption}}

    \end{table} %
\newcommand{\VisionTableCaption}{\label{table:vision-results}Average (std) test accuracy across seeds for vision tasks, when fine-tuned with different optimization algorithms and their respective base learning rate when applicable. \DoG{} and \LDoG{} use $\reps=10^{-4}(1+\norm{x_0})$.}

\begin{table}[t]
    \centering
    \footnotesize
\notarxiv{\caption{\VisionTableCaption}}
    
    \resizebox{\linewidth}{!}{\begin{tabular}{@{\extracolsep{4pt}}lccccccccccccccc} 
        \toprule
    \textbf{Model} & \textbf{Optimizer} & \textbf{LR} & Caltech101 & CIFAR-100 & Clevr-Dist & DMLab & dSprites-Ori & DTD & Flowers102 & Pets & Resisc45 & Retinopathy & Sun397 & SVHN & Avg. \\ \cline{1-1} \cline{2-2} \cline{3-3} \cline{4-4} \cline{5-5} \cline{6-6} \cline{7-7} \cline{8-8} \cline{9-9} \cline{10-10} \cline{11-11} \cline{12-12} \cline{13-13} \cline{14-14} \cline{15-15} \cline{16-16}
    \rule{-2pt}{2.6ex}
    
    \primitiveinput{tables/data/vision_results.txt}

 \bottomrule
    \end{tabular}}
    \arxiv{\caption{\VisionTableCaption}}
    \end{table} 

\begin{landscape}
\begin{figure}[t]
	\begin{center}
	\centerline{
	\notarxiv{\resizebox{\linewidth}{!}{\includegraphics[width=\textwidth,height=\textheight,keepaspectratio]{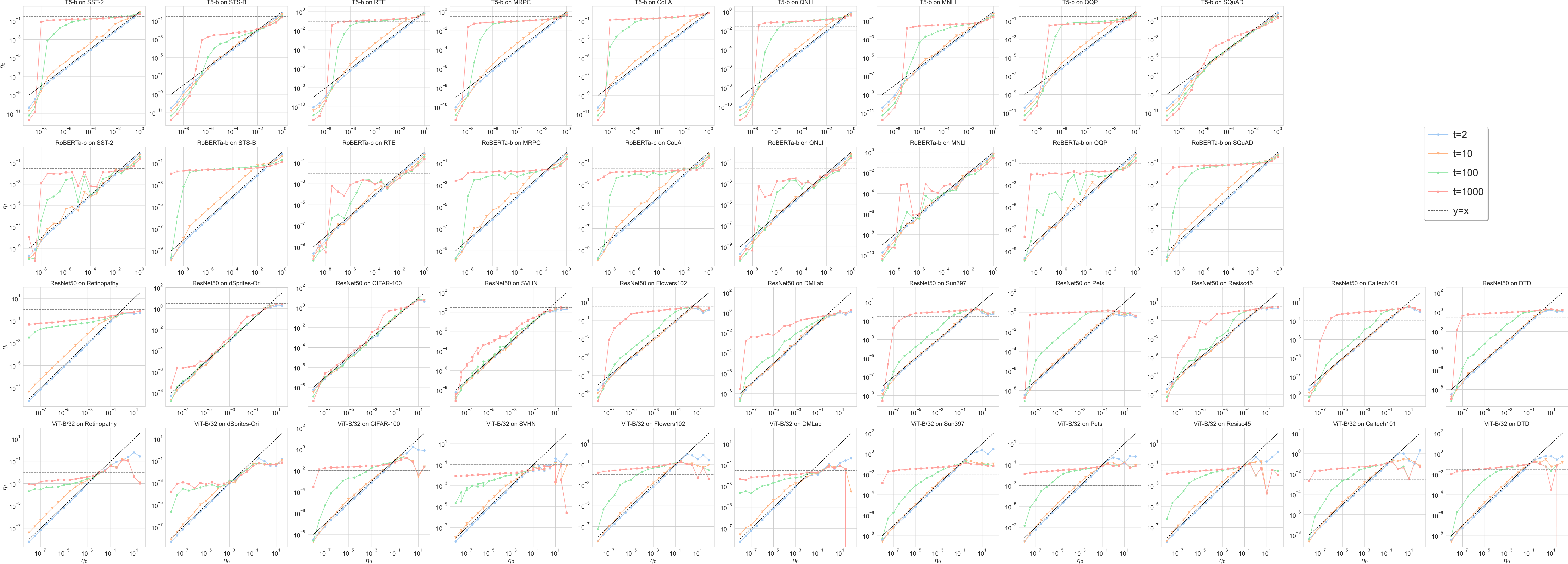}}}
	\arxiv{\resizebox{\linewidth}{!}{\includegraphics[height=0.9\textheight,keepaspectratio]{figures/eta_sensitivity.pdf}}}
	}
	\caption{Stabilizing behavior of \DoG{} on $\eta_t$ as a function of $\eta_0$ ($x$-axis) and $t$ (color). In most cases $\eta_t$ quickly stabilizes around a value close to the optimal SGD base learning rate (dashed horizontal line) for all sufficiently small $\eta_0=\reps/\norm{g_0}$. The main exceptions (where $\eta_t$ depends strongly on $\eta_0$) are dSprites-Ori, CIFAR-100 and SVHN when trained with ResNet50; see \ref{app:experiments-reps} for further discussion.
	}
	\label{fig:eta-sensitivity}
	\end{center}
\end{figure}
\end{landscape}

\subsection{Comparison with equalized compute budget}\label{app:expeirments-equalized}
Throughout the paper, our experiments focus on comparing different methods by the final test performance they are able to reach given sufficient compute budget to essentially fully converge. As a consequence, SGD and Adam---which require learning rate tuning---use up significantly more computation than \DoG and \LDoG. More specifically, for each model/task combination we tune the SGD and Adam learning rates over a grid of at least 5 values (and often 6 or more), resulting in computational cost increased by the same factor.

In this subsection only, we compare different optimizers using roughly the same computational budget, by measuring the performance of Adam and SGD after roughly 20\% of their overall step budget.\footnote{Since these results are just a re-analysis of our original experiments, for language experiments we take all the warmup iterates plus the first 20\% of the remaining iterates, overall using 28\% of the budget.} \Cref{fig:bisection} shows the result of this comparison, contrasting it to our main experiment. The figure shows that \DoG often exceeds the performance of instance-tuned SGD with equalized step budget.

We note a number of caveats regarding the equalized compute budget comparison:
\begin{enumerate}
	\item Since our experiments are focused on getting the best possible generalization, we substantially over-provisioned the iteration budget, and hence the performance of instance-tuned SGD and Adam declines only mildly when we cut the budget by roughly 5. Our tightest budget was for RoBERTa (150\% the iterations in \citet{liu2019roberta}), and there we can see that performance degraded more substantially. If we were to instead take the number of iterations \DoG actually needs to reach its peak performance, its advantage over equalized-compute SGD would likely be far larger.
	\item Since the results reported here are obtained by re-analysis of our original experiments, the cosine learning rate schedule for SGD and Adam is not properly set for using 20\% of the iteration budget; in particular, the learning rate does not decay to zero at the end of the training. Running these methods with a properly set cosine schedule would likely improve their performance. However, we note that the addition of iterate averaging appears to partially compensate for the lack of sharp learning rate decay. 
	\item Given sufficient resources, it is possible to run all the different learning rates of SGD and Adam in parallel. Therefore, the comparison equalizes overall computational cost, but not necessarily wall time.
	\item The comparison also does not take into account more sophisticated learning rate tuning schemes that early-stop unpromising learning rate candidates. However, such schemes run the risk of choosing a suboptimal learning rate.
\end{enumerate}

\begin{figure}[t]
	\begin{center}
		\centerline{
			\subfloat[SGD/Adam have $>5$x the \DoG/\LDoG budget.]{
			\includegraphics[width=0.47\linewidth]{figures/hpt_results.pdf}
		}
		\subfloat[SGD/Adam have roughly the same budget as \DoG/\LDoG]{
			\includegraphics[width=0.47\linewidth]{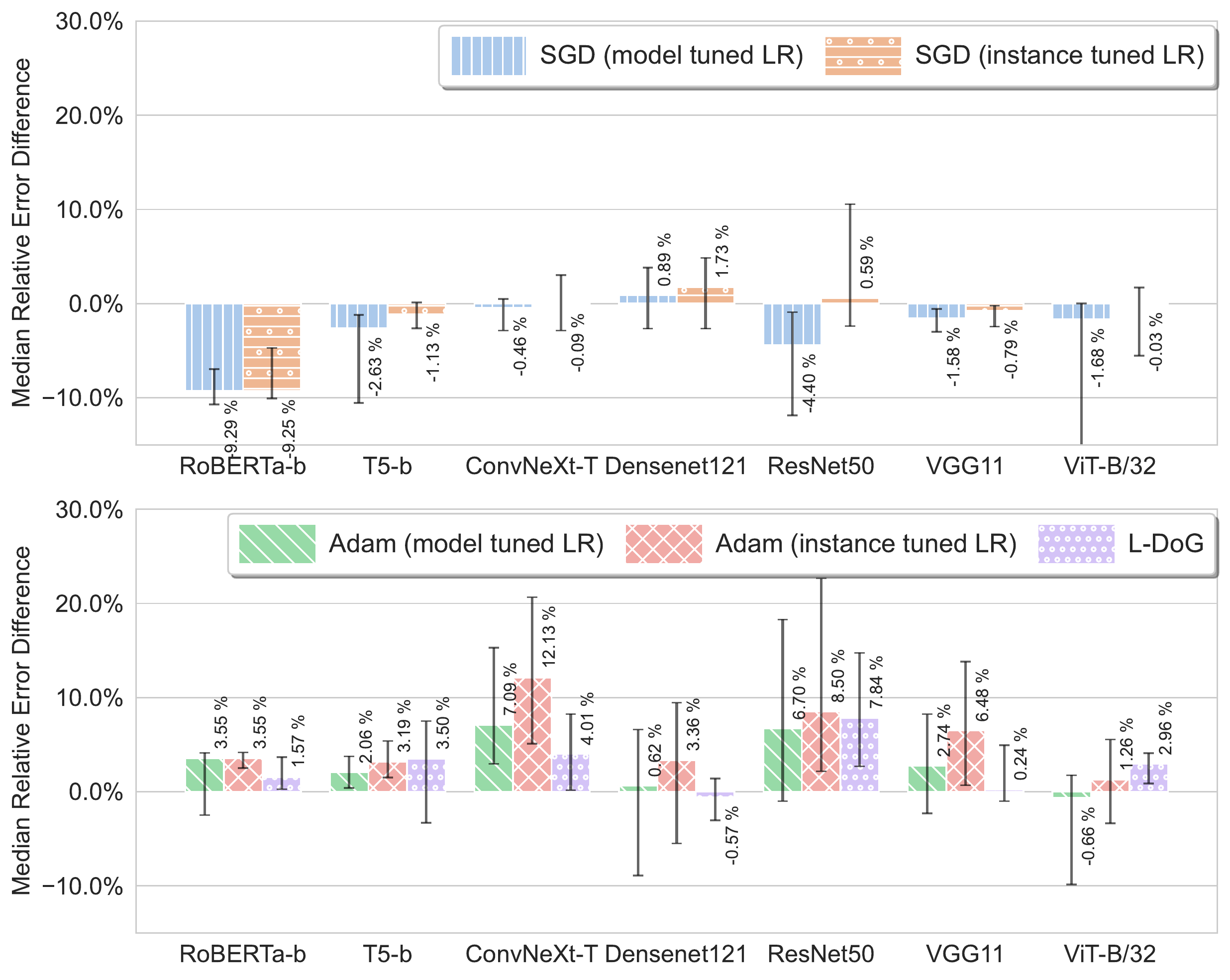}
		}
		}
		\caption{RED median (bar chart) and IQR (error bars) of each model on the set of applicable tasks, where we either \textbf{(a)} give the same iteration budget for each optimizer run, resulting in SGD and Adam using more than 5x total compute than \DoG and \LDoG due to learning rate tuning (this is a reproduction of \Cref{fig:hpt-results}), or \textbf{(b)} give each algorithm \emph{roughly equal compute budget } by running SGD and Adam (with 5 or more learning rates) for roughly 20\% of the steps that \DoG and \LDoG use. With equalized compute budget, \DoG outperforms model-tuned SGD almost always and often outperforms the instance-tuned SGD as well, while 
			\LDoG{} closes most of the gap to Adam.
		}
		\label{fig:bisection}
	\end{center}
\end{figure}

\subsection{Fine-tuning CoLA}\label{app:experiments-cola}
As discussed in \Cref{subsec:results}, 
\DoG with $\reps = 10^{-6}(1+\norm{x_0})$ failed in fine-tuning T5-b on CoLA.
To investigate this issue, we ran \DoG and \LDoG with different choices of $\reps$.
\Cref{fig:cola} depicts the results of this experiment as well as the performance of SGD and Adam with different learning rates. The figure shows that using lower values of $\reps$ allows \DoG to reach reasonable results, but with some seeds still failing. In contrast, \LDoG shows consistent and superior performance across a large range of $\reps$ values. We leave further investigations on the cause of failure in CoLA to future work.

\subsection{Sensitivity of \DoG{} to $\reps$ and the effect of batch normalization}\label{app:experiments-reps}
In \Cref{subsec:DoG-sensitivity}, we discuss \DoG's insensitivity to the choice of $\reps$ 
as long as it is small enough. Here, we expand on this analysis by testing how the \DoG step size at iteration $t$, denoted $\eta_t$, depends on its initial step size $\eta_0=\reps/\norm{g_0}$. 
For each task in our testbed and for 4 models, 
we perform short training runs with a large number of $\eta_0$ values. 
In \Cref{fig:eta-sensitivity} we plot 
$\eta_t$ vs.\ $\eta_0$ for $t\in\{2,10,100,1000\}$. We also show a horizontal line for the learning rate of of SGD reaching the best validation accuracy, and the $y=x$ diagonal. The figure shows that for most model/task combinations, $\eta_t$ converges quickly (within the first $1000$ steps) to a value near the optimal one for SGD, and mostly independent of $\eta_0$ as long as it is small enough.

However, we also observe some failure cases where $\eta_t$ strongly depends on $\eta_0$, such as fine-tuning ResNet50 on CIFAR-100. This provides a complementary perspective on the fact \DoG is sensitive to $\reps$ in this setting, as already shown in \Cref{fig:hpt-results}: when $\eta_0$ is to low, \DoG fails to reach a suitable value of $\eta_t$ in a reasonable time. We hypothesize that this is due to the batch normalization (BN) layers in the model causing many different step size to ``look'' like solutions to the implicit equation motivating \DoG. To test this hypothesis, we repeat the CIFAR-100 training experiment but without BN (we disable BN by fine-tuning the model in evaluation mode). 
\Cref{fig:no-bn}(a) shows that removing BN allows \DoG{} to recover its stabilizing behavior. Moreover, \Cref{fig:no-bn}(b) further shows that without batch normalization, the performance of \DoG{} again becomes insensitive to the choice of $\reps$ provided it is sufficiently small. Unsurprisingly, we also observe that removing BN slightly hurts generalization performance in this task. 
As mentioned in \Cref{sec:discussion}, improving \DoG to be more robust in the presence of normalization layers in general and batch normalization in particular is an important direction for future research.

\begin{figure}[t]
	\begin{center}
	\centerline{
	\subfloat[]{{\includegraphics[height=5.25cm,keepaspectratio]{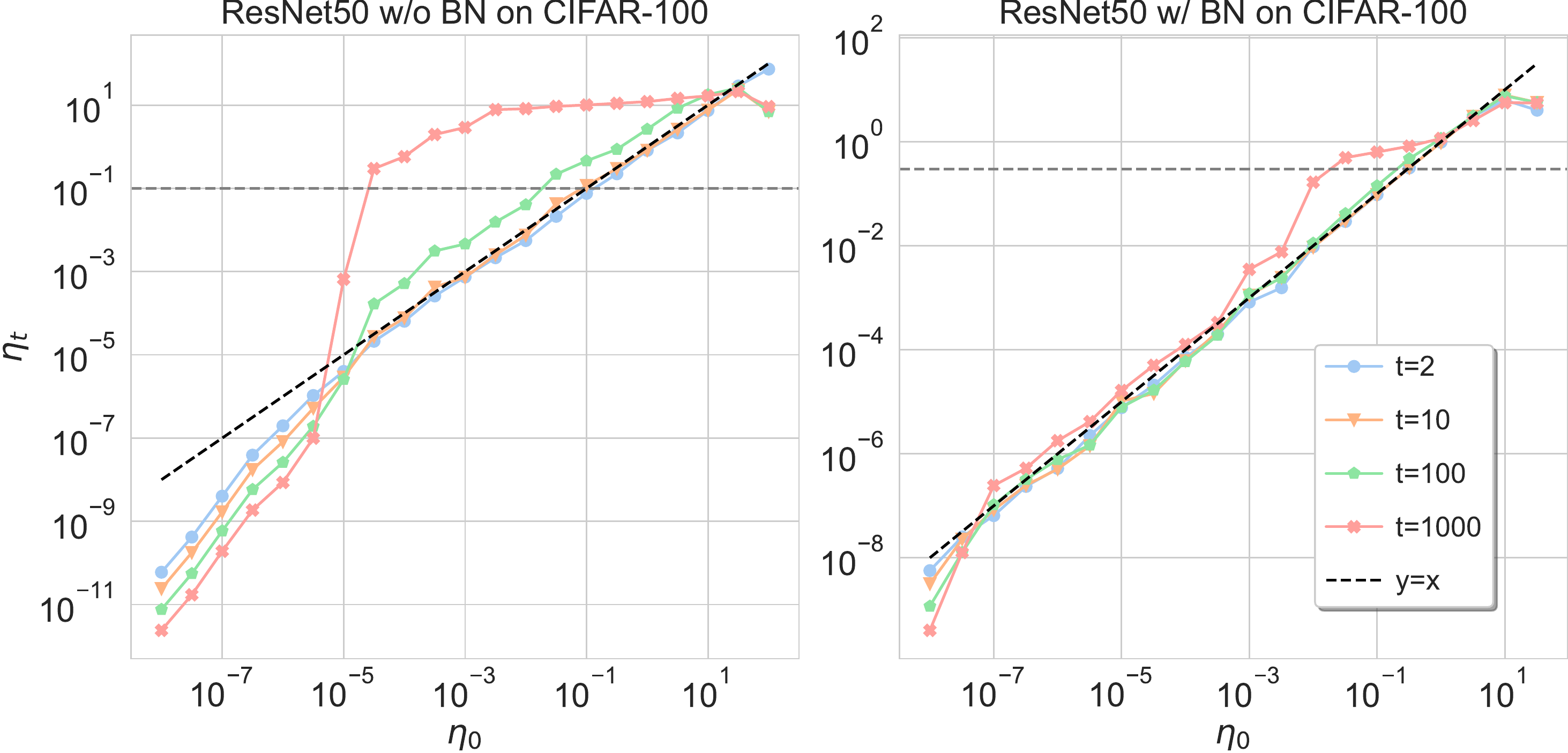}}}%
    ~~~
    \subfloat[]{{\includegraphics[height=5.25cm,keepaspectratio]{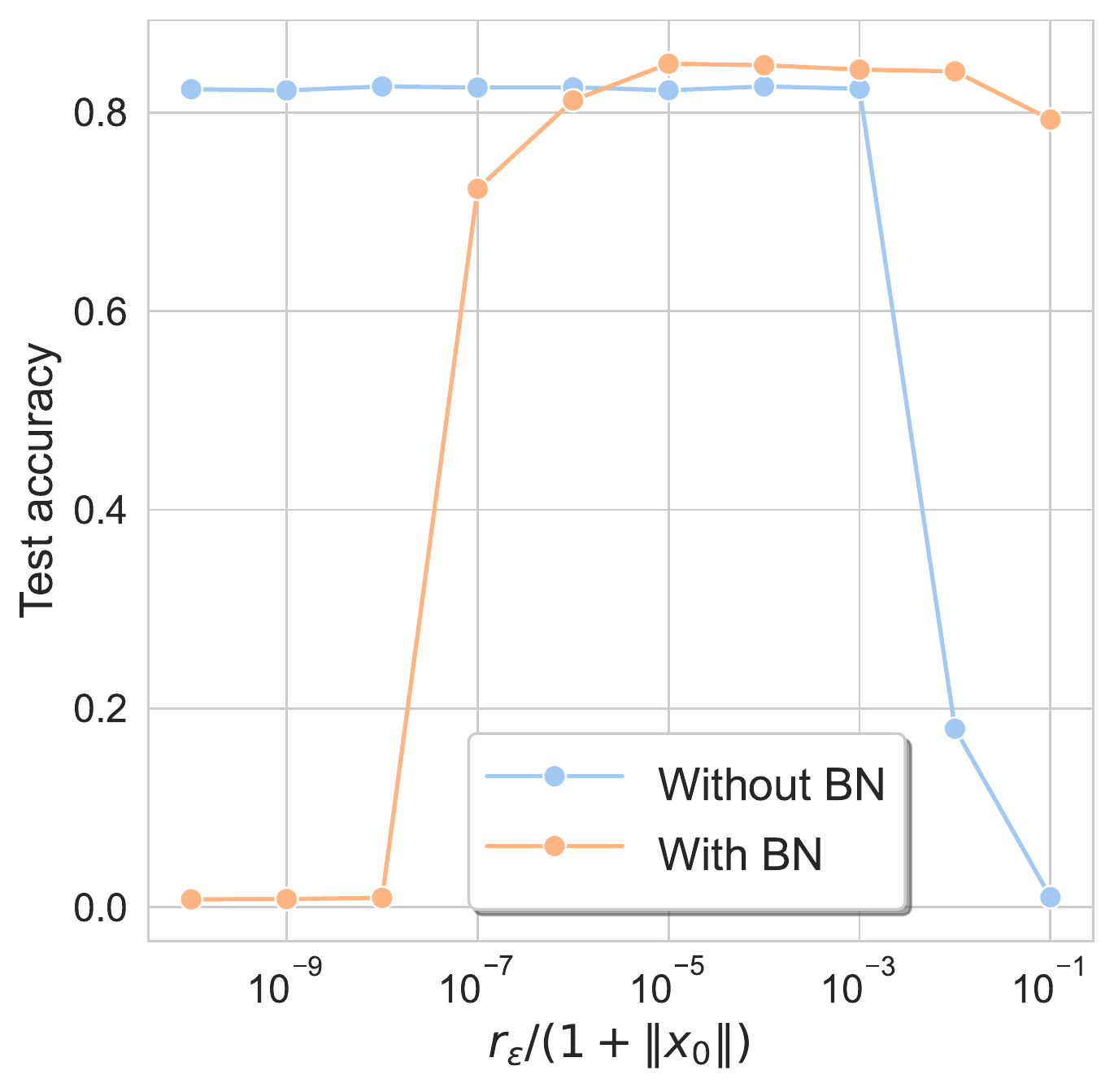}}}%
	}
	\caption{ResNet50 fine-tuned on CIFAR-100 with and without batch normalization. The dashed horizontal line indicates the best SGD learning rate.
	\textbf{(a)} Stabilizing behavior of \DoG{} on $\eta_t$ as a function of $\eta_0$ ($x$-axis) and $t$ (color). Turning off batch normalization (left) mitigates the sensitivity of $\eta_t$ to $\eta_0$ observed in batch normalized model (right). 
	\textbf{(b)} Accuracies of models trained with \DoG (for 20K steps) as a function of $\reps$. Without batch normalization, \DoG{} is robust to smaller values of $\reps$.  
	}
	 
	\label{fig:no-bn}
	\end{center}
\end{figure}

\notarxiv{
\subsection{Additional convex optimization results}\label{app:experiments-convex}
\begin{figure}[t]
	\begin{center}
		\centerline{
				\includegraphics[width=0.6\linewidth]{figures/hpt_results_convex.pdf}

		}
		\caption{RED median and IQR (as in \Cref{fig:hpt-results}) in tn the \emph{convex optimization} setting (\Cref{subsec:convex-opt}).
		}
		\label{fig:hpt-results-convex}
	\end{center}
\end{figure}

\newcommand{\MeanResultsConvexCaption}{Per-learning rate RED statistics (as in \Cref{fig:mean-results}) in the \emph{convex optimization} setting (\Cref{subsec:convex-opt}).}

	\begin{figure}[t]
		\begin{center}
			\centerline{
				\includegraphics[width=\textwidth,height=\textheight,keepaspectratio]{figures/mean_results_convex.pdf}
			}
			\caption{\MeanResultsConvexCaption}
			
			\label{fig:mean-results-convex}
		\end{center}
	\end{figure}
 \Cref{fig:mean-results-convex,fig:hpt-results-convex} show results for learning lines probes on out computer vision fine-tuning testbed (see \Cref{subsec:convex-opt}). The figures show that \DoG attains results on par with instance-tuned SGD and Adam.

}

\begin{figure}[t]
	\begin{center}
	\centerline{
	\includegraphics[width=0.6\textwidth,keepaspectratio]{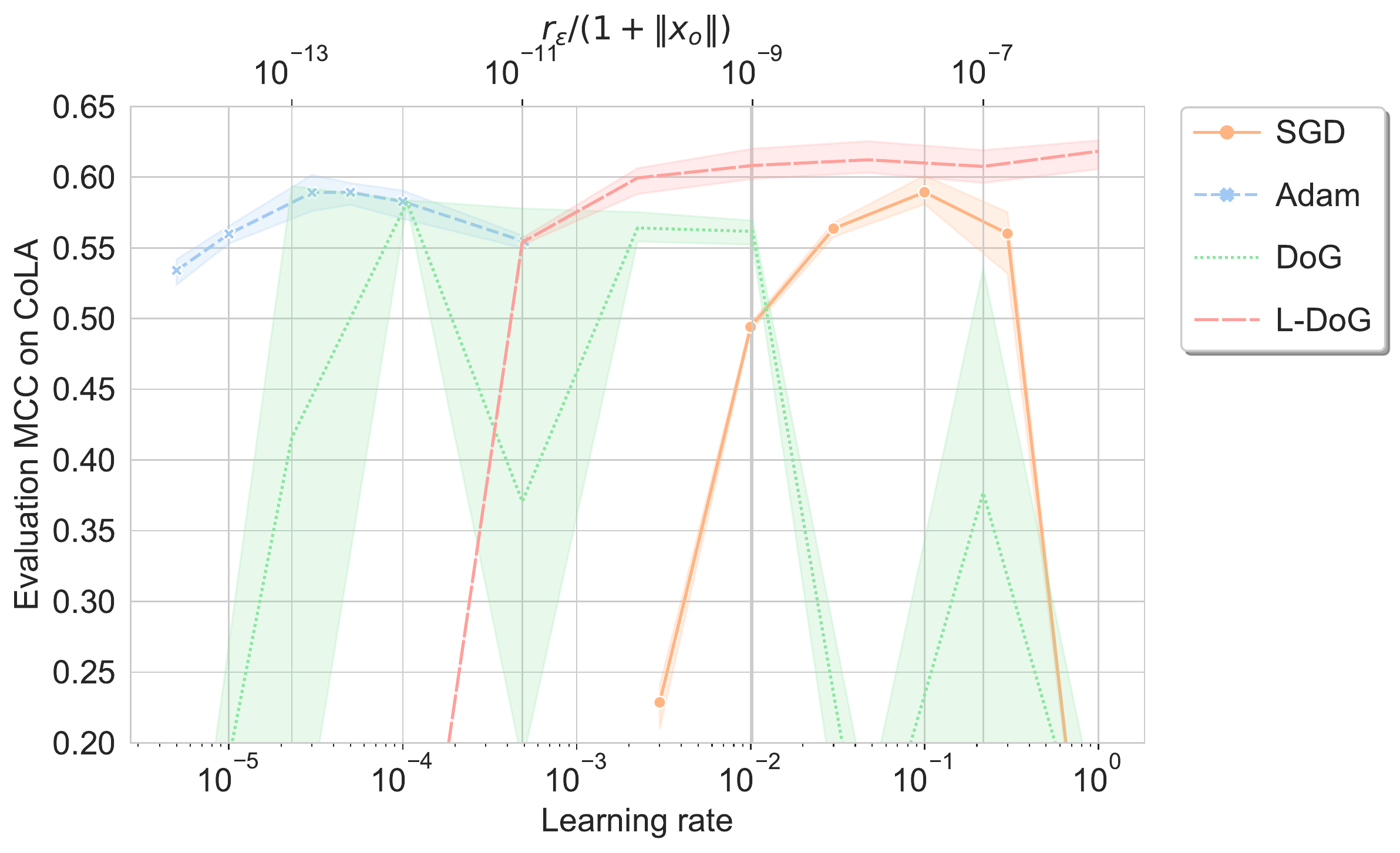}
	}
	\caption{Matthews correlation of T5-base fine-tuned on CoLA with SGD and Adam with different base learning rates (bottom axis), as well as with \DoG{} and \LDoG with different $\reps$ (top axis). Only  \LDoG{} and Adam perform consistently well across different parameters. The lines and shaded regions show the average Matthews correlation and the min-max range, respectively, computed over 3 seeds.
	}
	 
	\label{fig:cola}
	\end{center}
\end{figure}

\subsection{The growth rate of $\rbar[t]$}
\Cref{fig:rbar} plots $\rbar[t]$ for \DoG as a function of the iteration index $t$. As the figure shows, $\rbar[t]$ grows very rapidly and then approximately plateaus. Therefore, the quantity $\sum_{i\le t} \frac{\rbar[i]}{\rbar[t]}$ grows roughly linearly in $t$, implying a near-optimal rate of convergence for \DoG, as discussed in \Cref{sec:error-bound-assuming-bounded-iterates}.

\begin{figure}[t]
	\begin{center}
	\centerline{
	\includegraphics[width=0.9\textwidth,height=\textheight,keepaspectratio]{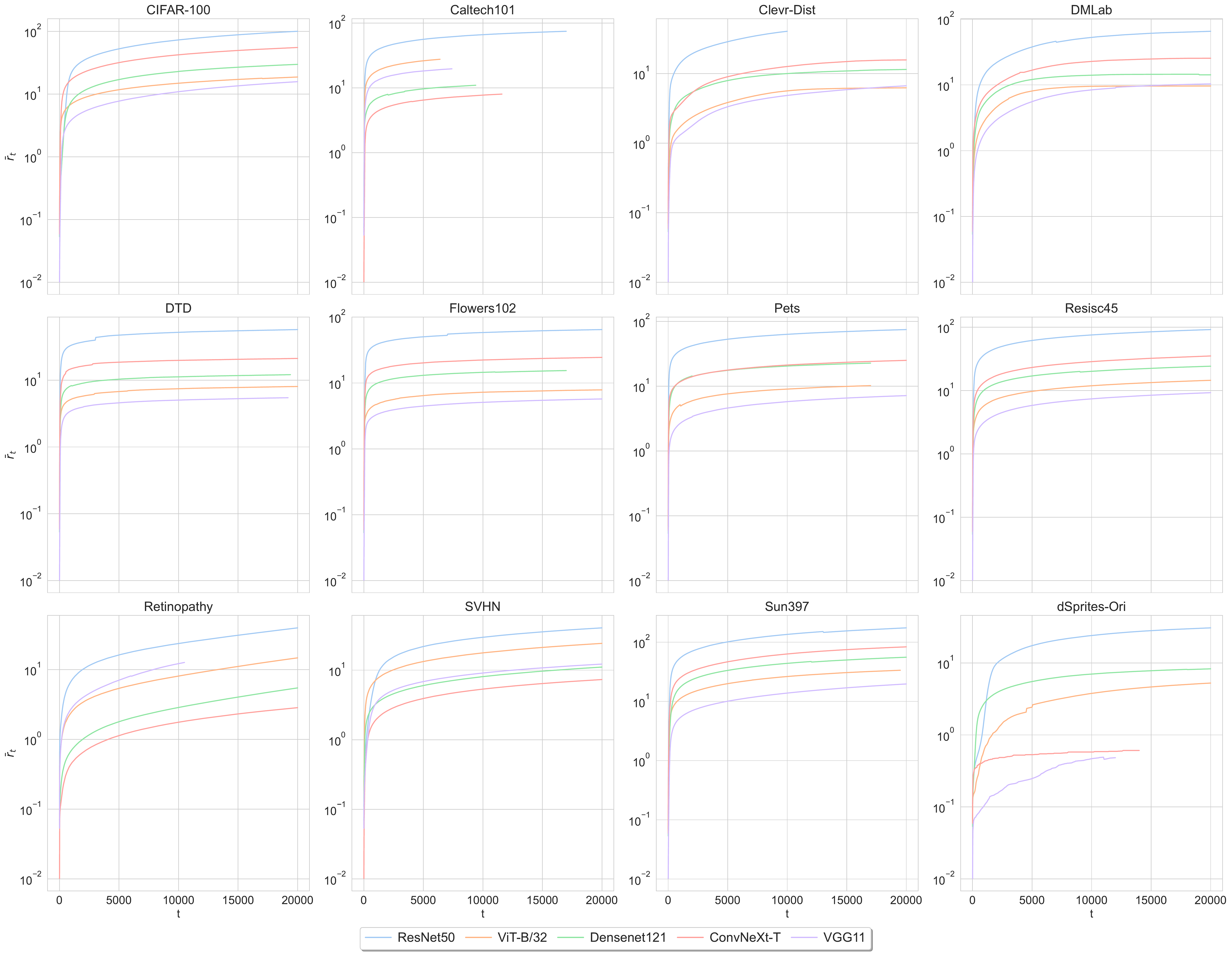}
	}
	\caption{The quantity $\rbar[t]=\max_{i \le t}{\lVert x_i - x_0 \rVert}$ as a function of the number of steps $t$ in our computer vision testbed. The value of $\rbar[t]$ grows rapidly at first and then almost plateaus.
	\label{fig:rbar}
	}
	\end{center}
\end{figure}

\section{Comparison to Other Tuning-Free Methods} \label{app:other-optimizers}

\Cref{subsec:other-optimizers} discusses a comparison between \DoG{} and other parameter-free optimizers. In this section, we provide further details on the experiments. 

\subsection{Parameter-free SGD}\label{app-subsec:pf-sgd}
\citet{carmon2022making} propose a bisection procedure for tuning the SGD learning rate. A direct implementation of this method would need to perform at least 4 or 5 bisection steps and therefore, \emph{in the best case}, perform similarly to our instance-tuned SGD baseline. Since our learning rate tuning employs a tight grid of values selected using some prior knowledge of the problem, and since we select learning rates based on validation set performance and not a step size certificate, instance-tuned SGD is likely a conservative upper bound on the performance of bisection approach.

Similar to instance tuned SGD, the bisection procedure has increased computational cost relative to \DoG that is due to the need for multiple SGD runs. That is, performing 5 steps of bisection where each SGD call has the same step budget as \DoG consumes 5 times more compute than \DoG. We may also consider a situation where each bisection step uses only 20\% of the \DoG compute budget, leading to equal overall cost. In this setting, the ``equalized compute budget'' comparison we perform in \Cref{app:expeirments-equalized} and \Cref{fig:bisection} provides a conservative upper bound on the bisection performance, indicating it is likely to under-perform \DoG.

\subsection{Stochastic Polyak step-size}\label{app-subsec:sps}
We apply the Stochastic Polyak Step (SPS) proposed by \citet{loizou2021stochastic} using their open-source implementation\footnote{\url{https://github.com/IssamLaradji/sps}} to a subset of our fine-tuning testbed, and present the results in \Cref{table:language-comp-results,table:vision-comp-results}. For the vision experiments, the SPS with the hyper-parameters proposed in the paper ($c=0.2$, $\tau=2$) and initial step size of 1.0 (the default in the code) worked reasonably well, but not as well as DoG. For the language experiments the same algorithm diverges; we find initial learning rate of 0.01 worked reasonably well, but again not as well as DoG (we also attempted an initial learning rate of 0.0001, which produced worse results). For vision tasks, similarly tuning the initial step size did not significantly improve performance. We run 5 random seeds per experiment, and average the results across seeds. \DoG outperforms SPS in 22 out of 34 task/model combinations, and by 2.3 percentage points on average. \LDoG further increases this gap by outperforming SPS in 24 out of 34 pairs, with an average of 5.3 percentage points.

\subsection{D-adaptation}\label{app-subsec:dadaptation}

\paragraph{Empirical comparison to D-adapt SGD and Adam.} We perform a preliminary empirical evaluation of the practical algorithms proposed in \citet{defazio2023learning} using the code they release\footnote{\url{https://github.com/facebookresearch/dadaptation}} and a subset of our fine-tuning testbed. As \Cref{table:language-comp-results,table:vision-comp-results} show, D-adapt SGD and and D-adapt Adam perform reasonably well but slightly worse than \DoG, and noticeably worse than \LDoG and Adam. 
\DoG outperforms D-adapt SGD in 24 out of 34 task/model combinations, and by 1.9 percentage points on average. \LDoG further increases this gap by outperforming D-adapt SGD in 30 out of 34 pairs, with an average of 4.9 percentage points.
D-adapt Adam is less stable on many of the tasks in our testbed, being outperformed by \DoG in 26 out of 34 task/model combinations, and by \LDoG in 27, with an average of 10.8 and  13.8 percentage points respectively.

\paragraph{Theoretical comparison to Algorithm 1 of \citet{defazio2023learning}.}
\citet{defazio2023learning} carry out their main theoretical analysis on a ``Parameter Free Dual Averaging'' (PFDA) method. We now provide some additional remarks comparing PFDA and \DoG. The iterate $x_t$ in PFDA is
\[x_t = x_0 - \frac{1}{\sqrt{G_t}} \sum_{i \le t} q_i g_i\] where $G_ t = \sum_{i\le t} \|g_i\|^2$ and $q_i$ is a lower bound on the distance to optimality (denoted $d_i$ in~\citep{defazio2023learning}). In contrast, the \DoG iterates are \[x_t = x_0 - \sum_{i\le t}\frac{\bar{r}_i}{\sqrt{G_i}} g_i\] where $\bar{r}_t = \max_{i \le t}\| x_i - x_0 \|$. While both $d_t$ in PFDA and $\bar{r}_t$ are lower bounds for (a constant factor times) the distance to optimality, only DoG aims to approximate $\eta_\star = \frac{\|x_0 -x_\star \|}{\sqrt{G_t}}$; PFDA instead approximates the optimal step size for dual averaging. 

The dual averaging prescription of putting the factor $1/\sqrt{G_t}$ outside the summation defining $x_t$ likely hurts performance in practice. The practical D-adapt SGD and D-adapt Adam methods that~\citet{defazio2023learning} propose do not follow this prescription. Consequently, these algorithms are very different from PFDA and have no theoretical guarantees. 

\subsection{Continuous Coin Betting}\label{app-subsec:cocob}
\citet{orabona2017training} propose a parameter-free algorithm based on coin-betting schemes and demonstrate its effectiveness in training neural networks. We use an open source implementation\footnote{\url{https://github.com/bremen79/parameterfree}} to apply COCOB on the same subset of our testbed as the other experiments in this section, and present the results in \Cref{table:language-comp-results,table:vision-comp-results}. While the default parameters work well for ResNet50 (slightly better than \DoG and similar to \LDoG), they produced poor results on the other models. We found that increasing the $\alpha$ parameter (which roughly corresponds to the number of ``warmup'' steps) from a constant 100 to 10\% of the step budget improves the results dramatically for all transformer-based models, though in most cases \DoG continues to significantly outperform it. Additionally, we find that in almost all cases, our averaging scheme benefits COCOB.

\newcommand{\LanguageCompTableCaption}{ \label{table:language-comp-results}Average (std) performance of RoBERTa-b and T5-b on language tasks, when fine-tuned with different optimization algorithms. \DoG{} uses $\reps=10^{-6}(1+\norm{x_0})$ and \LDoG{} uses $\reps=10^{-8}(1+\norm{x_0})$. SPS uses $c=0.2$, $\tau=2$ as recommended by \citet{loizou2021stochastic}, but initial step size of 0.01 as the recommended value 1.0 diverged for some tasks. Still, when fine-tuning RoBERTa-b on RTE, 3 out of 5 training runs diverged; for this case we report the mean of the two successful runs and omit the standard deviation. For COCOB we set $\alpha$ to be 10\% of the total steps per task, as the default of $\alpha=100$ failed to outperform a random guess. We measure performance as detailed in \Cref{table:tasks-configurations}.}

\begin{table}[t]
    \centering
    \footnotesize
       \setlength{\belowcaptionskip}{-10pt}
    \notarxiv{\caption{\LanguageCompTableCaption}}
    \resizebox{\linewidth}{!}{\begin{tabular}{@{\extracolsep{4pt}}lcccccccc} 
    
        \toprule

     \textbf{Model} & \textbf{Optimizer} &  CoLA & MRPC & QNLI & RTE & SQuAD & SST-2 & Avg. \\ \cline{1-1} \cline{2-2} \cline{3-3} \cline{4-4} \cline{5-5} \cline{6-6} \cline{7-7} \cline{8-8} \cline{9-9} 
     \rule{-2pt}{2.6ex}

     \primitiveinput{tables/data/language_comp_results.txt}

 \bottomrule
    \end{tabular}}
    \arxiv{\caption{\LanguageCompTableCaption}\vspace{\baselineskip}}
    \end{table} %
\newcommand{\VisionCompTableCaption}{Average (std) test accuracy across seeds for vision tasks, when fine-tuned with different optimization algorithms. \DoG{} and \LDoG{} use $\reps=10^{-4}(1+\norm{x_0})$. SPS uses $c=0.2$, $\tau=2$ and initial step size of 1 as recommended by \citet{loizou2021stochastic}. For COCOB we set $\alpha$ to be 10\% of the total steps per task; the default value of $\alpha=100$ performed well on ResNet50 but very badly on ViT-B/32.}

\begin{table}[t]
    \centering
    \footnotesize
\notarxiv{\caption{\VisionCompTableCaption}\label{table:vision-comp-results}}
    
    \resizebox{\linewidth}{!}{\begin{tabular}{@{\extracolsep{4pt}}lcccccccccccccc} 
        \toprule
    \textbf{Model} & \textbf{Optimizer} & Caltech101 & CIFAR-100 & Clevr-Dist & DMLab & dSprites-Ori & DTD & Flowers102 & Pets & Resisc45 & Retinopathy & Sun397 & SVHN & Avg. \\ \cline{1-1} \cline{2-2} \cline{3-3} \cline{4-4} \cline{5-5} \cline{6-6} \cline{7-7} \cline{8-8} \cline{9-9} \cline{10-10} \cline{11-11} \cline{12-12} \cline{13-13} \cline{14-14} \cline{15-15}
    \rule{-2pt}{2.6ex}
    
    \primitiveinput{tables/data/vision_comp_results.txt}

 \bottomrule
    \end{tabular}}
    \arxiv{\caption{\VisionCompTableCaption}\label{table:vision-comp-results}}
    \end{table}

\end{document}